\newtheorem{definition}{Definition}
\newtheorem{theorem}{Theorem}
\newtheorem{hypothesis}{Hypothesis}
\begin{document}
%
\title{Multi-label Causal Variable Discovery: \\ Learning Common Causal Variables and Label-specific Causal Variables}
%
%
%
%

\author{Xingyu~Wu,
        Bingbing~Jiang,
        Yan~Zhong,
        and~Huanhuan~Chen*,~\IEEEmembership{Senior Member,~IEEE}
\IEEEcompsocitemizethanks{\IEEEcompsocthanksitem X. Wu and H. Chen (*corresponding author) are with the School of Computer Science and Technology, University of Science and Technology of China, Hefei 230027, China (e-mail: xingyuwu@mail.ustc.edu.cn; hchen@ustc.edu.cn).
\IEEEcompsocthanksitem B. Jiang is with the School of Information Science and Engineering, Hangzhou Normal University, Hangzhou 311121, China (e-mail: jiangbb@hznu.edu.cn).
\IEEEcompsocthanksitem Y. Zhong is with the School of Data Science, University of Science and Technology of China, Hefei 230027, China (e-mail: zhongyan@mail.ustc.edu.cn).}
}

\IEEEtitleabstractindextext{%
\begin{abstract}
Causal variables in Markov boundary (MB) have been widely applied in extensive single-label tasks. While few researches focus on the causal variable discovery in multi-label data due to the complex causal relationships. Since some variables in multi-label scenario might contain causal information about multiple labels, this paper investigates the problem of multi-label causal variable discovery as well as the distinguishing between common causal variables shared by multiple labels and label-specific causal variables associated with some single labels. Considering the multiple MBs under the non-positive joint probability distribution, we explore the relationships between common causal variables and equivalent information phenomenon, and find that the solutions are influenced by equivalent information following different mechanisms with or without existence of label causality. Analyzing these mechanisms, we provide the theoretical property of common causal variables, based on which the discovery and distinguishing algorithm is designed to identify these two types of variables. Similar to single-label problem, causal variables for multiple labels also have extensive application prospects. To demonstrate this, we apply the proposed causal mechanism to multi-label feature selection and present an interpretable algorithm, which is proved to achieve the minimal redundancy and the maximum relevance. Extensive experiments demonstrate the efficacy of these contributions.
\end{abstract}

\begin{IEEEkeywords}
Causal Variable Discovery, Markov boundary, Multi-label Data, Common Causal Variable, Label-specific Causal Variable, Feature Selection.
\end{IEEEkeywords}}

\maketitle

\IEEEdisplaynontitleabstractindextext

%
\IEEEpeerreviewmaketitle

\section{Introduction}

\IEEEPARstart{C}{ausal} variable set, also known as Markov boundary (MB), contains critical causal information about a given target. As shown in Figure \ref{pic_example_mb}, MB consists of the \emph{direct causes}, \emph{direct effects}, and \emph{other direct causes of the direct effects} of the target \cite{aliferis2010locala,aliferis2010localb,pearl1988}. Causal variables have potential ability to imply the underlying causal mechanism around the target \cite{aliferis2010locala,aliferis2010localb}, and are widely applied to real-world tasks. For example, causal variable discovery is the first step in causal learning and Bayesian network (BN) structure learning, where the skeleton of the Bayesian network without orientation is constructed by MB \cite{gao2015local,pellet2009finding,ram2009markov}. Another important application is feature selection \cite{guyon2007causal,cai2011bassum}, since all other features are independent of the class attribute conditioned on its MB \cite{guyon2007causal}. Some studies \cite{tsamardinos2003towards,pellet2008using,masegosa2012bayesian,statnikov2013algorithms} have proved that causal variable set is the theoretically optimal subset for learning and inference tasks. Due to the practical benefits, extensive algorithms are proposed to search MB in single-label data. Some of these algorithms \cite{iamb,mmmb,hiton,pcmb,stmb,ccmb,wu2020tolerant} learn the MB set based on Unique MB Hypothesis\footnote{A basic assumption of MB discovery problem, supposing that each target has a unique MB set (Refer to Theorem \ref{unique_mb} in \textbf{Section \ref{relatedwork}} for details).} \cite{aliferis2010locala}, which is always violated in real-world data. Other algorithms \cite{pcmb,statnikov2013algorithms,liu2016swamping} relax the hypothesis to detect multiple MBs of a target, whereas it is still not tractable to find all of the possible MBs due to the unpredictable number of MBs.

\begin{figure}[t]
  \centering
  \includegraphics[height=1.60in, width=3.20in]{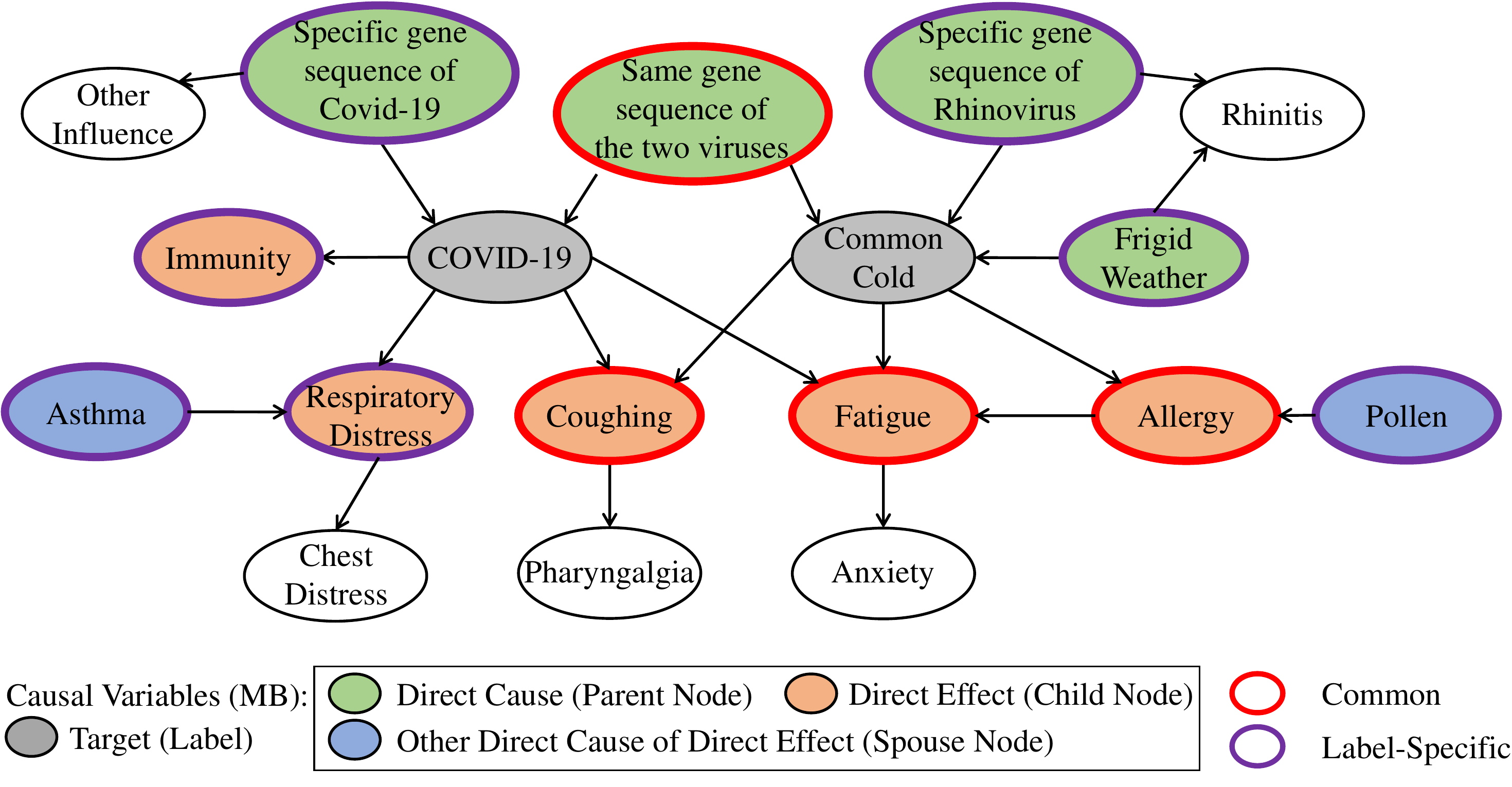}
  \caption{Examples to illustrate the basic concepts in this paper. (1) Markov boundary (\textbf{MB}) of a target (e.g. Common Cold) contains \textbf{direct causes} (Frigid Weather, Specific gene sequence of Rhinovirus, and Same gene sequence of the two viruses), \textbf{direct effects} (Coughing, Fatigue, and Allergy) and \textbf{other direct causes} (Pollen) \textbf{of the direct effects} (Allergy) of the target. (2) For multiple labels, \textbf{common} causal variables simultaneously influence multiple labels while \textbf{label-specific} causal variables influence a single label. For example, Coughing is the common causal variable of COVID-19 and Common Cold, while Respiratory Distress is the label-specific causal variable of COVID-19.}\label{pic_example_mb}
\end{figure}

However, few researches consider the multi-label causal variable discovery despite the ubiquity of multi-label data. Causal variable discovery occurs when the joint probability distribution of several labels conditioned on other features is analyzed, such as common causes and effects discovery of multiple targets, dimensionality reduction for multi-label learning and inference, and etc. The core of these tasks is to identify causal variables of multiple labels, which will be discussed in this paper. Contrary to single-label data, multi-label data involve extra relationships between labels, leading to two types of causal variables. As shown in Figure \ref{pic_example_mb}, some causal variables simultaneously contain the causal information about several labels, which are called \textbf{common causal variables} in the following, and correspondingly, others in the MB set are called \textbf{label-specific causal variables}. Both of them can facilitate the comprehension of the underlying causal mechanism, yet their focuses are different. Intuitively, label-specific causal variables reflect the differences among the local causal structures around different labels, which would assist the prediction or inference tasks on their corresponding labels \cite{huang2019specific}. While the common causal variables represent the causal connections between these labels, which are naturally capable of providing the information about multiple labels with minimal number of variables. Hence, they have extensive application prospects in dimension reduction, such as multi-label feature selection \cite{lin2015multi}. In order to drive a biased model\footnote{A biased model means that different causal variables in the model have different effects on it. For example, in a prediction model for a certain label, label-specific variables of this label have a greater impact on predicting results.} where different types of causal variables can contribute to the model with varying degrees, it is necessary to study the multi-label causal variable discovery problem with the following goals:
\begin{itemize}
\item To discover all of the causal variables for a label set;
\item To distinguish between common causal variables and label-specific causal variables.
\end{itemize}

Due to more complex causal relationships in multi-label data, the existing single-label methods cannot be applied to multi-label data directly.
Different from single-label scenario, Unique MB Hypothesis should be relaxed in multi-label causal variable discovery since the multiple MBs lead to uncertain solutions. For example, in meteorology, \emph{the dropping in sea level pressure} ($F_1$), \emph{the convergence of the winds near the surface} ($F_2$), and \emph{the divergence of the winds at the top of the atmosphere} ($F_3$), are spatially adjacent \cite{Yu2015Tornado}, and just any one of them can be equivalently used to predict the \emph{tornado} ($T_1$). And only $F_1$ need be used to predict the \emph{extreme precipitations} ($T_2$). Thus, there exist three equivalent MBs of $T_1$ including $F_1$ or $F_2$ or $F_3$ respectively, making $F_1$ become a common causal variable of $T_1$ and $T_2$. However, as mentioned before, it is scarcely possible for existing methods to mine all MBs due to the low statistical reliability \cite{statnikov2013algorithms}, making some common causal variables undetected. In the example above, if we search common causal variables of $T_1$ and $T_2$ through the intersection of their MBs but only find the MB of $T_1$ including $F_2$ or $F_3$, then $F_1$ can not be identified. Besides this, directly finding all MBs may suffer from high time complexity and low accuracy due to the numerous conditional independence tests, especially with large conditioning sets \cite{ccmb}. Fortunately, the non-unique MBs coexist with equivalent information\footnote{A Phenomenon that two variable sets contain equivalent information about a target (Refer to Definition 3 in \textbf{Section \ref{relatedwork}} for details).} \cite{statnikov2013algorithms}, which are easier and more efficient to detect. Furthermore, due to the label relationships in multi-label data, causality between labels should be examined.

In this paper, we discover that common causal variables are determined by equivalent information following different mechanisms with or without existence of label causal relationships. Explicitly, we prove that if any label does not contain the causal information about another label, then the equivalent information about labels might induce new common causal variables, while the equivalent information about non-label variables does not. For this purpose, we introduce a Label-causality Hypothesis to simplify the discussion, which supposes that a label is not included by the MB of another label. Based on this hypothesis, the discussion is divided into two parts, satisfying and violating the hypothesis. We start from the simple case satisfying the hypothesis, and provide the general characteristics of common causal variables. Afterwards, we relax the hypothesis and find that some unidentified common causal variables are influenced by equivalent information about non-label variables. The above theoretical analyses are provided in \textbf{Section \ref{secccf}}, based on which we subsequently develop a common and label-specific causal variable discovery (CLCD) algorithm to achieve the following benefits:
\begin{enumerate}
\item Practicability: CLCD can search most of the causal variables and simultaneously distinguish the two types of causal variables;
\item Robustness: It is always effective in the case satisfying or violating the Label-causality Hypothesis and Unique MB Hypothesis;
\item Generality: CLCD can be directly extended to facilitate some real-world applications.
\end{enumerate}

To demonstrate the generality of CLCD, we apply it to multi-label feature selection and propose a novel CLCD-FS algorithm in \textbf{Section \ref{sec_fs}}. Through learning the features containing causal information about multiple labels, CLCD-FS possesses three superiorities over traditional algorithms:
\begin{enumerate}
\item Interpretability: CLCD-FS can explain which labels a selected feature influences.
\item Practicability: Under the premise of ensuring the relatively higher accuracy, CLCD-FS automatically predetermines the number of selected features via mining the underlying causal mechanism.
\item Theoretical Reliability: We will prove that CLCD-FS achieves the maximum relevance and minimum redundancy in Section \ref{sec_fsalg2}.
\end{enumerate}
As a causality-based multi-label feature selection algorithm, it is necessary to state the main difference between CLCD and MB-MCF, another causality-based method presented in our conference paper \cite{wu2020multi}: (1) From the aspect of discussed issues: Wu \emph{et. al} \cite{wu2020multi} study \emph{\textbf{multi-label feature selection problem}} and designs the MB-MCF based on empirical knowledge without reliable theoretical guarantee. While in this paper, we discuss the \emph{\textbf{multi-label causal variable discovery problem}} and present a complete theoretical framework, which provides the theoretical guarantee for CLCD-FS. (2) From the aspect of the proposed algorithms: \emph{\textbf{CLCD-FS selects more relevant features than MB-MCF}} since it additionally considers the spouse variables, which could enhance the predictive power of direct effects \cite{guyon2007causal}. Moreover, due to the analyses of label causality, CLCD-FS better shields its negative influence on feature selection process, so that more relevant features can be identified. Based on CLCD, CLCD-FS can discover most of common causal variables, which help \emph{\textbf{CLCD-FS remove more redundant features than MB-MCF}}. Experiments in \textbf{Section \ref{exsec}} validate its superiority against traditional algorithms and MB-MCF.

To the best of our knowledge, this is the first study discussing multi-label causal variable discovery, which can not only be applied to multi-label feature selection, but also to plenty of tasks as elaborated in \textbf{Section \ref{futurework}}, such as multi-target causal discovery, especially the discovery of common causes and effects. The biased models for multi-label learning and causal inference are also worth exploring.

\section{Synopsis of Theories and Methods Motivating Present Research}
\label{relatedwork}

In this section, we introduce some basic definitions and theories motivating the present research. Additionally, some classic and state-of-the-art methods related to this research are also introduced. Conventionally, literatures in causal learning use \emph{target} to denote the variable being studied, and so we use \emph{target} when explaining the causal theory and \emph{label} as target when analyzing the multi-label data. In this paper, common upper-case letters denote random variables and upper-case bold letters denote random variable sets. Specifically, $\textbf{\emph{U}}$ represents the set of all non-label variables (or features), $T$ and $\textbf{\emph{T}}$ represent the label (target) and label set (target set) in the single-label and multi-label scenario, respectively. Hollow upper-case letter $\mathbb{G}$ denotes a directed acyclic graph (DAG), and $\mathbb{P}$ denotes the joint probability distribution over $\textbf{\emph{U}}\cup\textbf{\emph{T}}$.

\subsection{Basic Properties of Probability Distribution}

\begin{definition}
\label{conditional_independece}
(Conditional Independence) Variable sets $\textbf{X}$ and $\textbf{Y}$ are conditionally independent given a variable set $\textbf{Z}$ if $\mathbb{P}(\textbf{X},\textbf{Y}|\textbf{Z})=\mathbb{P}(\textbf{X}|\textbf{Z})\mathbb{P}(\textbf{Y}|\textbf{Z})$, denoted as $\textbf{X}\perp \textbf{Y}|\textbf{Z}$. Inversely, $\textbf{X}\notperp \textbf{Y}|\textbf{Z}$ denotes the conditional dependence relationships.
\end{definition}

Some important basic properties of joint probability distribution will be used to prove the theorems in this paper.

\begin{theorem}
\label{common_property}
\cite{pearl1988,pena2007towards} Let variable sets $\textbf{A},\textbf{B},\textbf{C},\textbf{Z}\subset\textbf{U}\cup\textbf{\emph{T}}$, six properties hold in any joint probability distribution $\mathbb{P}$ over $\textbf{U}\cup\textbf{\emph{T}}$:

(1) Self-conditioning: $\textbf{A}\perp \textbf{Z}|\textbf{Z}$.

(2) Symmetry: $\textbf{A}\perp \textbf{B}|\textbf{Z}\Leftrightarrow\textbf{B}\perp \textbf{A}|\textbf{Z}$.

(3) Decomposition: $\textbf{A}\perp \textbf{B}\cup\textbf{C}|\textbf{Z}\Rightarrow\textbf{A}\perp \textbf{B}|\textbf{Z}$ and $\textbf{A}\perp \textbf{C}|\textbf{Z}$.

(4) Weak union: $\textbf{A}\perp \textbf{B}\cup\textbf{C}|\textbf{Z}\Rightarrow\textbf{A}\perp \textbf{B}|\textbf{Z}\cup\textbf{C}$.

(5) Contraction: $\textbf{A}\perp \textbf{B}|\textbf{Z}\cup\textbf{C}$ and $\textbf{A}\perp \textbf{C}|\textbf{Z}\Rightarrow\textbf{A}\perp \textbf{B}\cup\textbf{C}|\textbf{Z}$.

(6) Intersection: If $\mathbb{P}$ is strictly positive, then: $\textbf{A}\perp \textbf{B}|\textbf{Z}\cup\textbf{C}$ and $\textbf{A}\perp \textbf{C}|\textbf{Z}\cup\textbf{B} \Rightarrow \textbf{A}\perp \textbf{B}\cup\textbf{C}|\textbf{Z}$.
\end{theorem}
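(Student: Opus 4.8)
The plan is to verify each of the six items directly from Definition~\ref{conditional_independece}, i.e.\ from the factorization $\mathbb{P}(\textbf{X},\textbf{Y}|\textbf{Z})=\mathbb{P}(\textbf{X}|\textbf{Z})\mathbb{P}(\textbf{Y}|\textbf{Z})$, combined only with the chain rule $\mathbb{P}(\textbf{X},\textbf{Y}|\textbf{Z})=\mathbb{P}(\textbf{X}|\textbf{Y},\textbf{Z})\mathbb{P}(\textbf{Y}|\textbf{Z})$ and marginalization. These are the classical semi-graphoid/graphoid axioms, so each argument is a short algebraic manipulation; I would order them so that later items may reuse earlier ones. Items (1) and (2) are immediate: further conditioning on $\textbf{Z}$ makes $\mathbb{P}(\textbf{Z}|\textbf{Z})$ a point mass, so $\mathbb{P}(\textbf{A},\textbf{Z}|\textbf{Z})=\mathbb{P}(\textbf{A}|\textbf{Z})=\mathbb{P}(\textbf{A}|\textbf{Z})\mathbb{P}(\textbf{Z}|\textbf{Z})$, and the defining equation is symmetric in $\textbf{X},\textbf{Y}$ since multiplication commutes.

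For (3) Decomposition I would sum (integrate) the identity $\mathbb{P}(\textbf{A},\textbf{B},\textbf{C}|\textbf{Z})=\mathbb{P}(\textbf{A}|\textbf{Z})\mathbb{P}(\textbf{B},\textbf{C}|\textbf{Z})$ over the values of $\textbf{C}$ (resp.\ $\textbf{B}$). For (4) Weak union I would first invoke Decomposition to get $\textbf{A}\perp\textbf{C}|\textbf{Z}$, hence $\mathbb{P}(\textbf{A}|\textbf{Z},\textbf{C})=\mathbb{P}(\textbf{A}|\textbf{Z})$, and then divide the premise by $\mathbb{P}(\textbf{C}|\textbf{Z})$ to obtain $\mathbb{P}(\textbf{A},\textbf{B}|\textbf{Z},\textbf{C})=\mathbb{P}(\textbf{A}|\textbf{Z})\mathbb{P}(\textbf{B}|\textbf{Z},\textbf{C})=\mathbb{P}(\textbf{A}|\textbf{Z},\textbf{C})\mathbb{P}(\textbf{B}|\textbf{Z},\textbf{C})$. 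For (5) Contraction I would expand $\mathbb{P}(\textbf{A},\textbf{B},\textbf{C}|\textbf{Z})=\mathbb{P}(\textbf{A},\textbf{B}|\textbf{C},\textbf{Z})\mathbb{P}(\textbf{C}|\textbf{Z})$ by the chain rule, split $\mathbb{P}(\textbf{A},\textbf{B}|\textbf{C},\textbf{Z})=\mathbb{P}(\textbf{A}|\textbf{C},\textbf{Z})\mathbb{P}(\textbf{B}|\textbf{C},\textbf{Z})$ using the first premise, replace $\mathbb{P}(\textbf{A}|\textbf{C},\textbf{Z})$ by $\mathbb{P}(\textbf{A}|\textbf{Z})$ using the second premise, and reassemble $\mathbb{P}(\textbf{B}|\textbf{C},\textbf{Z})\mathbb{P}(\textbf{C}|\textbf{Z})=\mathbb{P}(\textbf{B},\textbf{C}|\textbf{Z})$.

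I expect (6) Intersection to be the main obstacle, since it is the only item that requires the hypothesis of strict positivity. Positivity is used to guarantee that every conditioning event has positive probability, so the conditional distributions below are everywhere well defined; dropping it genuinely breaks the property (e.g.\ take $\textbf{A},\textbf{B},\textbf{C}$ all equal to one fair bit). Rewriting the two premises via the chain rule gives $\mathbb{P}(\textbf{A}|\textbf{B},\textbf{C},\textbf{Z})=\mathbb{P}(\textbf{A}|\textbf{C},\textbf{Z})$ and $\mathbb{P}(\textbf{A}|\textbf{B},\textbf{C},\textbf{Z})=\mathbb{P}(\textbf{A}|\textbf{B},\textbf{Z})$, hence $\mathbb{P}(\textbf{A}|\textbf{C},\textbf{Z})=\mathbb{P}(\textbf{A}|\textbf{B},\textbf{Z})$ for every value of $\textbf{B}$ and $\textbf{C}$. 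The left-hand side does not depend on $\textbf{B}$ and the right-hand side does not depend on $\textbf{C}$, so both equal a quantity depending only on $\textbf{A}$ and $\textbf{Z}$; averaging $\mathbb{P}(\textbf{A}|\textbf{B},\textbf{Z})$ against $\mathbb{P}(\textbf{B}|\textbf{Z})$ (and $\mathbb{P}(\textbf{A}|\textbf{C},\textbf{Z})$ against $\mathbb{P}(\textbf{C}|\textbf{Z})$) identifies that quantity as $\mathbb{P}(\textbf{A}|\textbf{Z})$, yielding $\textbf{A}\perp\textbf{B}|\textbf{Z}$ and $\textbf{A}\perp\textbf{C}|\textbf{Z}$. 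Finally I would combine $\textbf{A}\perp\textbf{B}|\textbf{Z}\cup\textbf{C}$ (the first premise) with the derived $\textbf{A}\perp\textbf{C}|\textbf{Z}$ through the Contraction property (5) already established, which gives $\textbf{A}\perp\textbf{B}\cup\textbf{C}|\textbf{Z}$ and completes the proof.
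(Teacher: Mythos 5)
Your proposal is correct, but there is nothing in the paper to compare it against: Theorem \ref{common_property} is stated as an imported result with citations to Pearl and to Pe\~{n}a \emph{et al.}, and the paper supplies no proof of its own. What you have written is the standard self-contained verification of the semi-graphoid axioms (1)--(5) from the factorization in Definition \ref{conditional_independece} plus the chain rule and marginalization, together with the classical positivity argument for Intersection, and each step checks out: Decomposition by summing out $\textbf{\emph{C}}$, Weak union by dividing through by $\mathbb{P}(\textbf{\emph{C}}|\textbf{\emph{Z}})$ after first extracting $\textbf{\emph{A}}\perp\textbf{\emph{C}}|\textbf{\emph{Z}}$, Contraction by reassembling the chain-rule expansion, and Intersection by observing that $\mathbb{P}(\textbf{\emph{A}}|\textbf{\emph{C}},\textbf{\emph{Z}})=\mathbb{P}(\textbf{\emph{A}}|\textbf{\emph{B}},\textbf{\emph{Z}})$ forces both sides to be constant in $(\textbf{\emph{B}},\textbf{\emph{C}})$ and hence equal to $\mathbb{P}(\textbf{\emph{A}}|\textbf{\emph{Z}})$, after which Contraction finishes. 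You correctly isolate where strict positivity is indispensable, and your one-fair-bit counterexample ($\textbf{\emph{A}}=\textbf{\emph{B}}=\textbf{\emph{C}}$, $\textbf{\emph{Z}}=\varnothing$) is exactly the standard witness that Intersection fails without it; this is also the same degeneracy the paper itself invokes later when motivating multiple Markov boundaries. The only cosmetic caveat is that in (4) and (5) the divisions by $\mathbb{P}(\textbf{\emph{C}}|\textbf{\emph{Z}})$ should be understood as restricted to conditioning events of positive probability, which is the usual convention and does not affect validity.
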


Using mutual information \cite{cover2012elements} to measure the conditional independence relationship, we have $I(\textbf{\emph{X}},\textbf{\emph{Y}}|\textbf{\emph{Z}})=0$ if $\textbf{\emph{X}}\perp \textbf{\emph{Y}}|\textbf{\emph{Z}}$.

\subsection{Markov Blanket and Markov Boundary (MB)}

\begin{definition}
\label{markov_blanket}
(Markov Blanket and Markov Boundary) \cite{pearl1988} The Markov blanket $\textbf{Mb}$ of target $T$ is a subset of $\textbf{U}$ satisfying the condition: $\forall X \in \textbf{U}-\textbf{Mb}, X\perp T|\textbf{Mb}$ in the joint probability distribution $\mathbb{P}$. Markov boundary $\textbf{MB}$ of $T$ is the minimum Markov blanket of $T$ satisfying: $\forall \textbf{Z} \subset \textbf{MB}$, $\textbf{Z}$ is not a Markov blanket of $T$.
\end{definition}

In this paper, Markov boundary is abbreviated as MB. According to Definition \ref{markov_blanket}, the mutual information $I(T,\textbf{\emph{U}})=I(T,\textbf{\emph{MB}})$, and thus, MB set carries all of the predictive information about the corresponding target. 
From the perspective of causality, MB provides a complete picture of the local causal structure around the target \cite{ccmb}, which could be intuitively understood in causal Bayesian network \cite{pearl1988}. An example of MB with Directed Acyclic Graph (DAG) is shown in Fig. \ref{pic_example_mb}. Statnikov \emph{et al.} \cite{statnikov2013algorithms} proved that, in a faithful \cite{pearl1988} Bayesian network, MB of a variable includes its parents (direct causes), children (direct effects) and spouses (other direct causes of direct effects). Thus, MB of target Common Cold contains direct causes (Frigid Weather, Specific gene sequence of Rhinovirus, and Same gene sequence of the two viruses), direct effects (Coughing, Fatigue, and Allergy) and other direct causes of the direct effects (Pollen). And the remaining variables are independent of Common Cold conditioned on its MB. The variables in an MB are called causal variables.

Extensive researches assume that the target has a unique MB, and propose many effective methods, which can be broadly classified into two types according to the review \cite{yu2019causality}, i.e., simultaneous MB learning algorithms and divide-and-conquer MB learning algorithms. Some early proposed methods, such as IAMB \cite{iamb} and its variants \cite{fast_iamb,iamb}, are simultaneous MB learning algorithms. These algorithms do not distinguish between the parent-child variables and spouse variables and learn them simultaneously. Thus, they are time-efficient but require the number of samples to be exponential to the size of the MB, which means that insufficient samples will result in the performance
degradation \cite{pcmb}. Divide-and-conquer MB learning algorithms are proposed to further improve the MB discovery accuracy with a reasonable time cost, which firstly search the parent-child variables and then spouse variables of a target. Classical methods include MMMB \cite{mmmb} and PCMB \cite{pcmb}, and the state-of-the-art STMB \cite{stmb}, TLMB \cite{wu2020tolerant} and CCMB \cite{ccmb}. Most of these algorithms are efficient to seek an approximate MB set with a reasonable time cost. The above-mentioned algorithms assume that the probability distribution is strictly positive according to Theorem \ref{unique_mb}:

\begin{theorem}
\label{unique_mb}
\cite{neapolitan2004learning} If the joint probability distribution $\mathbb{P}$ satisfies Intersection property, then a target has a unique MB.
\end{theorem}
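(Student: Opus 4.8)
The plan is to establish uniqueness via the observation that, whenever the Intersection property holds, the intersection of any two Markov boundaries of $T$ is itself a Markov blanket; minimality then collapses the two boundaries into one. Existence of an MB is dispatched in a line: $\textbf{U}$ is a Markov blanket of $T$ (by Self-conditioning, $T\perp\emptyset\,|\,\textbf{U}$), so among the finitely many Markov blankets of $T$ there is one of minimum cardinality, which is an MB by Definition \ref{markov_blanket}.

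For uniqueness, let $\textbf{M}_1,\textbf{M}_2$ be two MBs of $T$ and write $\textbf{A}=\textbf{M}_1\cap\textbf{M}_2$, $\textbf{B}=\textbf{M}_1\setminus\textbf{M}_2$, $\textbf{C}=\textbf{M}_2\setminus\textbf{M}_1$, and $\textbf{D}=\textbf{U}\setminus(\textbf{M}_1\cup\textbf{M}_2)$, which partition $\textbf{U}$ and satisfy $\textbf{M}_1=\textbf{A}\cup\textbf{B}$, $\textbf{M}_2=\textbf{A}\cup\textbf{C}$. The Markov-blanket property of $\textbf{M}_1$ reads $T\perp\textbf{C}\cup\textbf{D}\,|\,\textbf{A}\cup\textbf{B}$; Decomposition gives $T\perp\textbf{C}\,|\,\textbf{A}\cup\textbf{B}$ and Weak union gives $T\perp\textbf{D}\,|\,\textbf{A}\cup\textbf{B}\cup\textbf{C}$. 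Symmetrically, the property of $\textbf{M}_2$ gives $T\perp\textbf{B}\,|\,\textbf{A}\cup\textbf{C}$. Feeding $T\perp\textbf{C}\,|\,\textbf{A}\cup\textbf{B}$ and $T\perp\textbf{B}\,|\,\textbf{A}\cup\textbf{C}$ into the Intersection property of Theorem \ref{common_property} yields $T\perp\textbf{B}\cup\textbf{C}\,|\,\textbf{A}$; combining this with $T\perp\textbf{D}\,|\,\textbf{A}\cup\textbf{B}\cup\textbf{C}$ via Contraction yields $T\perp\textbf{B}\cup\textbf{C}\cup\textbf{D}\,|\,\textbf{A}$, i.e.\ $T\perp\textbf{U}\setminus\textbf{A}\,|\,\textbf{A}$. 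Hence $\textbf{A}=\textbf{M}_1\cap\textbf{M}_2$ is a Markov blanket of $T$ contained in $\textbf{M}_1$; minimality of $\textbf{M}_1$ forces $\textbf{A}=\textbf{M}_1$, and likewise $\textbf{A}=\textbf{M}_2$, so $\textbf{M}_1=\textbf{M}_2$.

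I expect the only genuine content — and the sole place the hypothesis is used — to be the Intersection step; everything else is pure graphoid bookkeeping, where the main care is getting the groupings of $\textbf{A},\textbf{B},\textbf{C},\textbf{D}$ right when invoking Decomposition, Weak union, and Contraction. It is worth flagging in the write-up that without the Intersection property the argument breaks exactly there: one can have $T\perp\textbf{C}\,|\,\textbf{A}\cup\textbf{B}$ and $T\perp\textbf{B}\,|\,\textbf{A}\cup\textbf{C}$ while $T\notperp\textbf{B}\cup\textbf{C}\,|\,\textbf{A}$, so $\textbf{M}_1\cap\textbf{M}_2$ need not be a Markov blanket and several distinct MBs can coexist — precisely the non-positive regime that motivates the rest of this paper.
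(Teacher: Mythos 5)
Your proof is correct, and the graphoid bookkeeping (Decomposition, Weak union, Intersection, Contraction on the partition $\textbf{A},\textbf{B},\textbf{C},\textbf{D}$) checks out step by step against the properties as stated in Theorem~\ref{common_property}. Note, however, that the paper does not prove Theorem~\ref{unique_mb} at all --- it is imported by citation from Neapolitan --- so there is no in-paper argument to compare against directly. The closest internal analogue is the paper's proof of Theorem~\ref{feature_noinfluence}, which runs the same contradiction you do: assume two distinct MBs, show their intersection would be a Markov blanket properly contained in each, and conclude against minimality. The difference is where the pivotal independence $T\perp(\textbf{M}_1\setminus\textbf{M}_2)\,|\,\textbf{M}_1\cap\textbf{M}_2$ comes from: you derive $T\perp\textbf{B}\cup\textbf{C}\,|\,\textbf{A}$ from the Intersection property, whereas Theorem~\ref{feature_noinfluence} obtains it (in contrapositive form) from the assumed absence of information equivalence; the two routes are two faces of the same coin, as Theorem~\ref{not_unique_mb} makes explicit. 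Your closing remark about the failure mode when Intersection is violated is exactly the regime the paper targets, and your existence argument is fine (a minimum-cardinality Markov blanket is in particular inclusion-minimal, which is what Definition~\ref{markov_blanket} requires). No gaps.
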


Under certain assumption, these algorithms have good performances and also have been widely applied in causal feature selection. However, in real-world applications, the unique MB assumption is always violated, leading to multiple equivalent MBs for a target. For example, if a variable is completely determined by another (e.g., binary variable $X$ and $Y$ satisfies $P(X=1|Y=1)=1$, $P(X=0|Y=0)=1$), then the Intersection property is violated when taking one of the two as the target. Some relevant theories and algorithms about multiple MBs are reviewed next.

\subsection{Multiple MBs and Equivalent Information}

\begin{figure}[t]
  \centering
  \includegraphics[height=0.90in, width=1.40in]{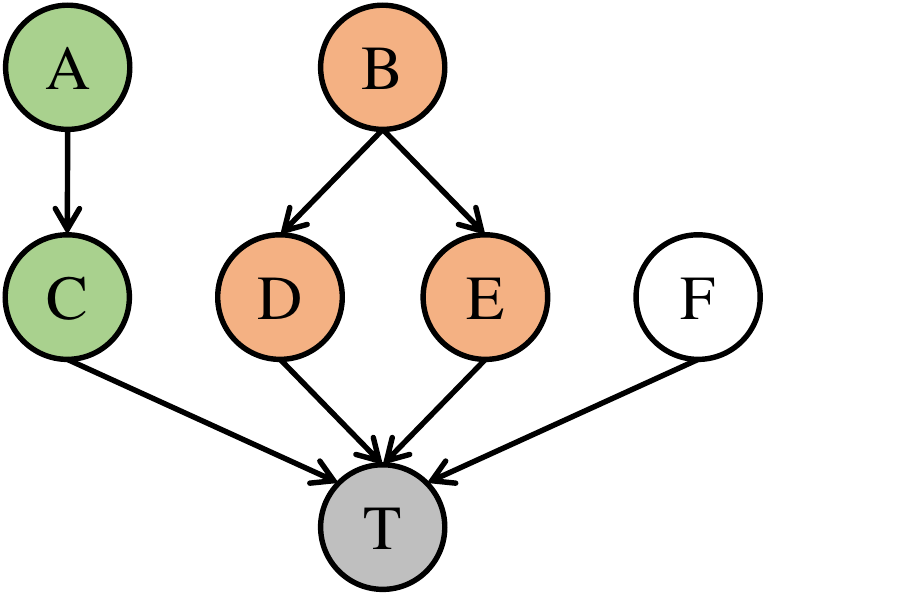}
  \includegraphics[height=1.80in, width=3.40in]{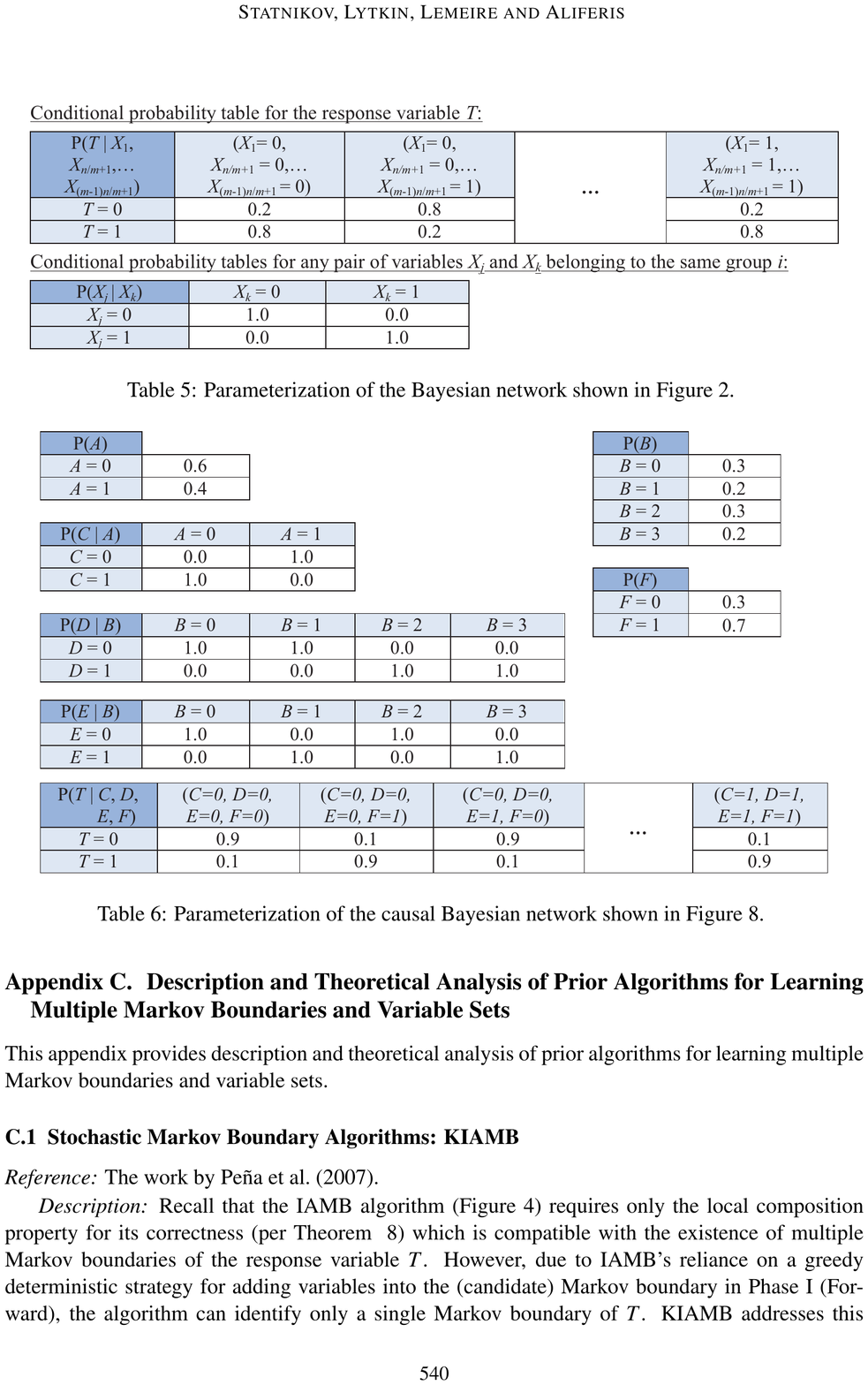}
  \caption{An example of Equivalent information. The response variable is $T$. All variables take values $\{0,1\}$ except for $B$ that takes values $\{0,1,2,3\}$. Variables $A$ and $C$ contain equivalent information about $T$ and are highlighted with the same color. Likewise, two variables $\{D,E\}$ jointly and a single variable $B$ contain equivalent information about $T$ and thus are also highlighted with the same color.}\label{pic_EI}
\end{figure}

When the joint probability distribution $\mathbb{P}$ dose not satisfy the Intersection property, there exists a phenomenon, called equivalent information.

\begin{definition}
\label{equivalent_information}
(Equivalent information) \cite{statnikov2013algorithms} Variable subsets $\textbf{X}$ and $\textbf{Y}$ contain equivalent information about target variable $T$ conditioned on $\textbf{Z}$ if and only if $T\notperp \textbf{X}|\textbf{Z}$, $T\notperp \textbf{Y}|\textbf{Z}$, $T\perp \textbf{X}|\textbf{Y}\cup\textbf{Z}$, $T\perp \textbf{Y}|\textbf{X}\cup\textbf{Z}$.
\end{definition}

Fig. \ref{pic_EI} provides an example of equivalent information. According to the probability distribution, we can conclude that $A\notperp T$, $C\notperp T$, $A\perp T|C$ and $C\perp T|A$. Thus, $A$ and $C$ contain equivalent information about $T$. The same analysis can be conducted on $\{D,E\}$ and $B$. Moreover, it can be seen from the probability distribution in Fig. \ref{pic_EI} that, the MB of $T$ could be: $\{A,B,F\}, \{C,B,F\}, \{A,D,E,F\}$, or $\{C,D,E,F\}$, which verifies the coexistence of the equivalent information phenomenon and multiple MBs. The phenomenon is formally stated as Theorem \ref{not_unique_mb}.

\begin{theorem}
\label{not_unique_mb}
\cite{liu2018markov} The intersection property holds if and only if no information equivalence occurs.
\end{theorem}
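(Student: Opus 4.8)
The plan is to prove Theorem \ref{not_unique_mb} as a biconditional, establishing the logical equivalence between the Intersection property holding (in $\mathbb{P}$, for all relevant variable sets) and the absence of any information equivalence. I would work with the contrapositives on both sides, since it is cleaner to argue that a failure of Intersection produces an information equivalence and vice versa. Recall that Intersection states: $\textbf{A}\perp \textbf{B}|\textbf{Z}\cup\textbf{C}$ and $\textbf{A}\perp \textbf{C}|\textbf{Z}\cup\textbf{B} \Rightarrow \textbf{A}\perp \textbf{B}\cup\textbf{C}|\textbf{Z}$, and that information equivalence of $\textbf{X}$ and $\textbf{Y}$ about $T$ given $\textbf{Z}$ means $T\notperp \textbf{X}|\textbf{Z}$, $T\notperp \textbf{Y}|\textbf{Z}$, $T\perp \textbf{X}|\textbf{Y}\cup\textbf{Z}$, $T\perp \textbf{Y}|\textbf{X}\cup\textbf{Z}$.

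For the direction ``information equivalence occurs $\Rightarrow$ Intersection fails'': suppose $\textbf{X}$ and $\textbf{Y}$ contain equivalent information about $T$ conditioned on $\textbf{Z}$. I would instantiate the Intersection premises with $\textbf{A}=T$, $\textbf{B}=\textbf{X}$, $\textbf{C}=\textbf{Y}$ and the conditioning context $\textbf{Z}$: the two hypotheses $T\perp \textbf{X}|\textbf{Z}\cup\textbf{Y}$ and $T\perp \textbf{Y}|\textbf{Z}\cup\textbf{X}$ are exactly the last two conditions of information equivalence, so they are satisfied. If Intersection held, we would conclude $T\perp \textbf{X}\cup\textbf{Y}|\textbf{Z}$, whence by Decomposition (Theorem \ref{common_property}(3)) $T\perp \textbf{X}|\textbf{Z}$, contradicting the first condition $T\notperp \textbf{X}|\textbf{Z}$ of the information equivalence. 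Hence Intersection cannot hold universally; i.e., if no information equivalence occurs is false then Intersection fails --- equivalently, Intersection holds $\Rightarrow$ no information equivalence.

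For the converse direction ``Intersection fails $\Rightarrow$ information equivalence occurs'': suppose there exist variable sets $\textbf{A}, \textbf{B}, \textbf{C}, \textbf{Z}$ with $\textbf{A}\perp \textbf{B}|\textbf{Z}\cup\textbf{C}$ and $\textbf{A}\perp \textbf{C}|\textbf{Z}\cup\textbf{B}$ but $\textbf{A}\notperp \textbf{B}\cup\textbf{C}|\textbf{Z}$. The goal is to extract from this an information equivalence witness. The natural candidate is to take the target as (some component of) $\textbf{A}$ and the two equivalent sets as $\textbf{B}\cup\textbf{Z}$ and $\textbf{C}\cup\textbf{Z}$, or more carefully $\textbf{B}$ and $\textbf{C}$ with conditioning set $\textbf{Z}$. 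The two independence hypotheses already give $\textbf{A}\perp \textbf{B}|\textbf{C}\cup\textbf{Z}$ and $\textbf{A}\perp \textbf{C}|\textbf{B}\cup\textbf{Z}$, matching the ``$\perp$'' half of the definition. What remains is to establish the ``$\notperp$'' half: $\textbf{A}\notperp \textbf{B}|\textbf{Z}$ and $\textbf{A}\notperp \textbf{C}|\textbf{Z}$. Here I would argue by contradiction using Contraction (Theorem \ref{common_property}(5)): if $\textbf{A}\perp \textbf{B}|\textbf{Z}$, then combined with $\textbf{A}\perp \textbf{C}|\textbf{Z}\cup\textbf{B}$ the Contraction property yields $\textbf{A}\perp \textbf{B}\cup\textbf{C}|\textbf{Z}$, contradicting the failure of Intersection; symmetrically $\textbf{A}\perp \textbf{C}|\textbf{Z}$ leads to the same contradiction. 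Therefore both conditional dependences hold, and $\textbf{B}$, $\textbf{C}$ form an information equivalence about $\textbf{A}$ given $\textbf{Z}$ (strictly, about each single variable in $\textbf{A}$ that witnesses the dependence, which is enough to say ``information equivalence occurs'').

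The main obstacle I anticipate is a bookkeeping subtlety rather than a deep one: the definition of information equivalence is stated for a single target variable $T$, whereas a failure of Intersection is phrased with a variable \emph{set} $\textbf{A}$. I would need to pass from $\textbf{A}\notperp \textbf{B}\cup\textbf{C}|\textbf{Z}$ to the existence of a single variable $T\in\textbf{A}$ (or argue the definition extends verbatim to sets, as the other theorems in the paper implicitly treat it), and then check that the two derived independences $T\perp \textbf{B}|\textbf{C}\cup\textbf{Z}$, $T\perp \textbf{C}|\textbf{B}\cup\textbf{Z}$ still hold for that single $T$, which follows from Decomposition applied to the set-level independences. Once this set-versus-singleton matching is handled cleanly, the rest is just three invocations of the basic properties (Decomposition, Contraction) from Theorem \ref{common_property}, so no heavy computation is required.
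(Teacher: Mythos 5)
You should first note that the paper does not actually prove this statement: Theorem~\ref{not_unique_mb} is imported verbatim from \cite{liu2018markov} and used as a black box, so there is no in-paper proof to match your argument against. Taken on its own terms, your reconstruction is essentially the standard argument and is sound. The easy direction (equivalence $\Rightarrow$ Intersection fails) is exactly right: the two independence clauses of Definition~\ref{equivalent_information} are the two premises of Intersection, and Decomposition turns the would-be conclusion $T\perp \textbf{\emph{X}}\cup\textbf{\emph{Y}}|\textbf{\emph{Z}}$ into a contradiction with $T\notperp\textbf{\emph{X}}|\textbf{\emph{Z}}$. Your converse direction is also correct: given a failing instance of Intersection, the two Contraction arguments do yield $\textbf{\emph{A}}\notperp\textbf{\emph{B}}|\textbf{\emph{Z}}$ and $\textbf{\emph{A}}\notperp\textbf{\emph{C}}|\textbf{\emph{Z}}$, and together with the two given independences this is precisely an information equivalence of $\textbf{\emph{B}}$ and $\textbf{\emph{C}}$ about $\textbf{\emph{A}}$ conditioned on $\textbf{\emph{Z}}$.

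The one point where you must commit to a choice is the set-versus-singleton issue you flag at the end, and of your two proposed outs only one works. Passing from $\textbf{\emph{A}}\notperp\textbf{\emph{B}}|\textbf{\emph{Z}}$ to a \emph{single} variable $T\in\textbf{\emph{A}}$ with $T\notperp\textbf{\emph{B}}|\textbf{\emph{Z}}$ (and simultaneously $T\notperp\textbf{\emph{C}}|\textbf{\emph{Z}}$) is an instance of the Composition property, which is not among the properties guaranteed by Theorem~\ref{common_property} and fails in general distributions (e.g., XOR-type dependence, where a set is dependent on a target while each of its members is marginally independent). So the ``some component of $\textbf{\emph{A}}$'' route has a genuine hole. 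The correct resolution is your other option: read Definition~\ref{equivalent_information} as applying to a set-valued target, which is consistent with how the paper itself uses the notion (Theorem~\ref{theo_notunder_hypo}(2) speaks of two labels containing equivalent information about a variable \emph{set} $\textbf{\emph{Z}}$), and is how the source \cite{liu2018markov} states the result. With that convention fixed, your proof is complete; without it, the converse direction does not close.
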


According to Theorem \ref{not_unique_mb}, if the target has multiple MBs, there must exist equivalent information. Some algorithms are proposed to detect multiple MBs, such as KIAMB \cite{pcmb}, TIE* \cite{statnikov2013algorithms} and WLCMB \cite{liu2016swamping}. KIAMB is a stochastic extension of IAMB, which tries to get all MBs by running a large number of times. Thus, there is no guarantee for the outputs of KIAMB. TIE* is a multiple MB discovery framework and WLCMB is essentially an instance of TIE* framework. They are relatively more efficient than KIAMB. However, it is still not tractable to find all of the possible MBs since the unpredictable number of MBs makes the process time-consuming.

\subsection{How Does MB Assist Feature Selection?}

An important application of MB is causality-based feature selection \cite{guyon2007causal,yu2019causality}. We first explain the main difference between traditional and causality-based feature selection methods. Traditional feature selection methods search the relevant features with consideration of the correlation between features and the class attribute (target) \cite{zhang2017self,pang2018efficient,zhu2016robust,zhang2020unsupervised}. However, correlations capture only the co-occurrence of features and the target, while cause-effect relationships imply the underlying mechanism of the occurrence of the target and they are persistent across different environments \cite{guyon2007causal,yu2019causality}. MB discovery algorithms aims to learn the MB of the class attribute, which represents the local causal-effect relationships around the class attribute without learning an entire Bayesian network. Thus, MB discovery algorithms have potential abilities to select more robust and interpretable features than traditional feature selection algorithms even under the distribution shift \cite{aliferis2010locala}. Qualitatively, MB can be interpreted in terms of Kohavi-John feature relevance according to \cite{tsamardinos2003towards}: irrelevant feature is disconnected from target in the DAG, and weakly relevant features connect with target but not belong to MB, and strongly relevant features are included in the MB. More specifically, Pellet \emph{et al.} have proved that MB is the optimal solution for feature selection problem under the faithfulness condition \cite{pellet2008using}. With the theoretical guarantee, MB has been widely used to select causal features in high-dimensional data, which are not only predictive but also causally informative.

\section{Multi-label Causal Variable Discovery}
\label{secccf}

Compared with single-label problem, the additional information introduced to multi-label problem is the relationship between labels, which is crucial for the analysis of multi-label data \cite{zhang2013review}. Due to the label relationships, causal variables in multi-label data include two types, i.e., aforementioned common causal variables and label-specific causal variables. In the following, we formally define these variables in Section \ref{secccf_def}, and discuss their properties with and without consideration of relationships between labels in Section \ref{under_hypo} and Section \ref{not_under_hypo} respectively. Based on the theoretical property, a discovery and distinguishing algorithm is proposed in Section \ref{secccf_alg}.

\subsection{Definition and Hypothesis: Common Causal Variables, Label-specific Causal Variables, and Label-causality Hypothesis}
\label{secccf_def}

As a concept derived from causal variables, we formally give the definitions of common causal variables and label-specific causal variables based on MB.
\begin{definition}
\label{common_causal_feature}
For variable set $\textbf{U}$ and label set $\textbf{T}=\{T_1,T_2,...,T_k\}$, variable $X$ is a common causal variable of multiple labels in $\textbf{T}$ if for $\forall T\in\textbf{T}$, there exists an MB set of $T$ including $X$. Variable $X$ is a label-specific causal variable if for only one $T\in\textbf{T}$, there exists an MB set of $T$ including $X$.
\end{definition}

\begin{figure}[t]
  \centering
  \includegraphics[height=0.80in, width=2.24in]{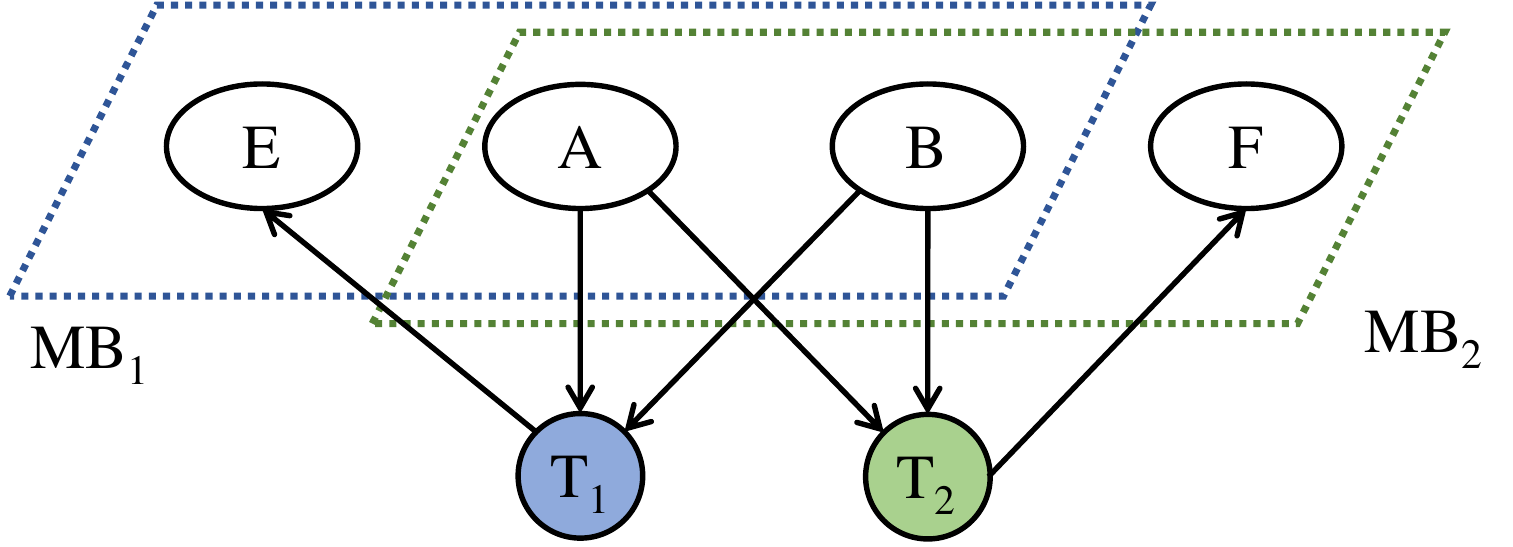}
  \caption{An example of common causal variables and label-specific causal variables without multiple MBs.}\label{pic_ccv_tscv_withoutei}
\end{figure}

\begin{figure}[t]
  \centering
  \includegraphics[height=1.20in, width=2.72in]{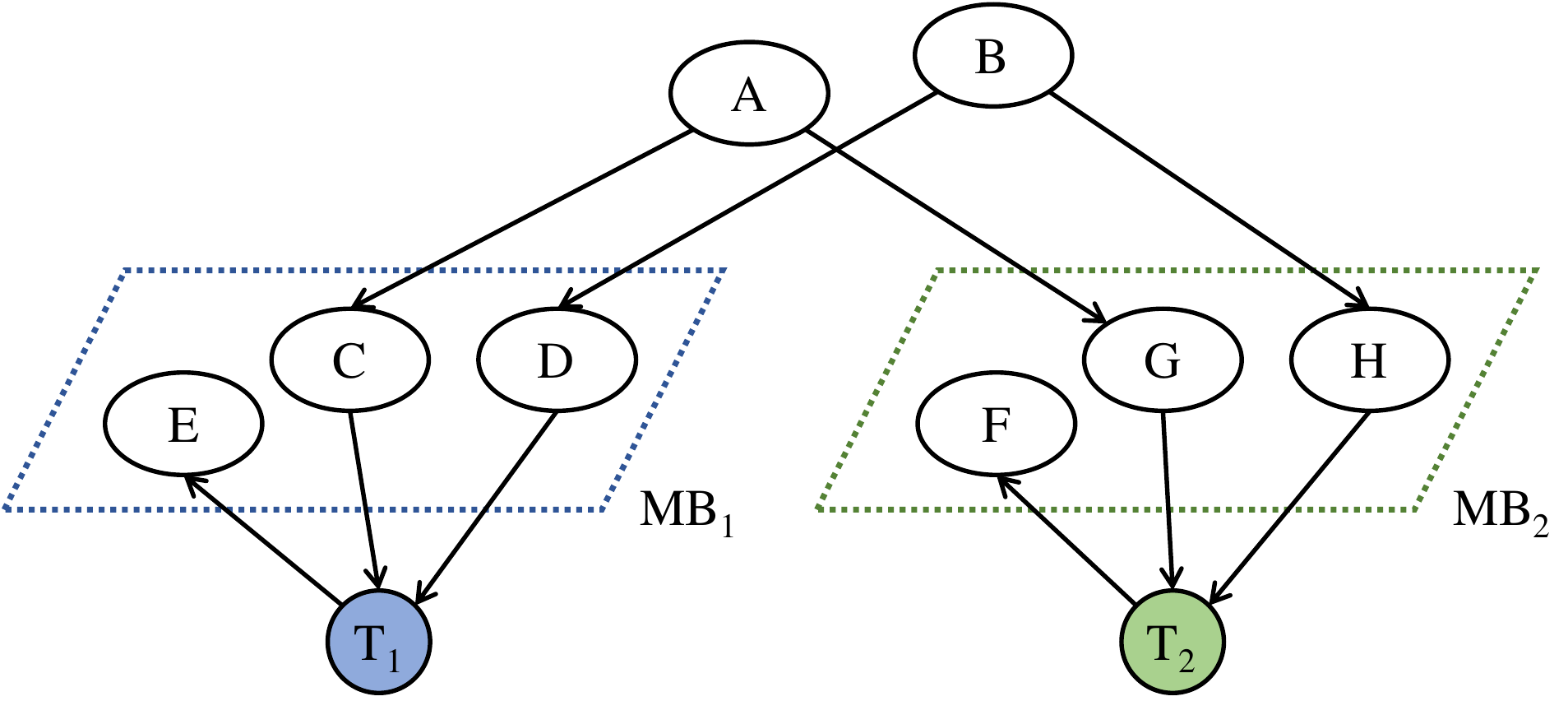}
  \caption{An example of common causal variables and label-specific causal variables with multiple MBs.}\label{pic_ccv_tscv_withei}
\end{figure}

Fig. \ref{pic_ccv_tscv_withoutei} and Fig. \ref{pic_ccv_tscv_withei} provide examples without and with multiple MBs respectively. In Fig. \ref{pic_ccv_tscv_withoutei}, the MB of $T_1$ is $\{A,B,E\}$, and the MB of $T_2$ is $\{A,B,F\}$. Neither of $T_1$ and $T_2$ has multiple MBs, and the intersection of their MB sets is non-empty. Thus, $A$ and $B$ are common causal variables of $T_1$ and $T_2$. $E$ is a label-specific causal variable of $T_1$, and $F$ is a label-specific causal variable of $T_2$. Further, we present a case with multiple MBs in Fig. \ref{pic_ccv_tscv_withei}. Assume that $\{A,B,E\}$ and $\{C,D,E\}$ are MBs of $T_1$, $\{A,B,F\}$ and $\{G,H,F\}$ are MBs of $T_2$\footnote{Note that the case in this assumption is possible under the DAG in Fig. \ref{pic_ccv_tscv_withei}, which can be understood from the probability table in Fig. \ref{pic_EI}}. Then, $A$ and $B$ are common causal variables of $T_1$ and $T_2$, and others are label-specific. By Definition \ref{common_causal_feature}, learning all of multiple MB sets is the premise of distinguishing these two types of variables. Thus, it is still difficult to use as a criterion for identification. Given the constant causal information, we now present another definition of common causal variables in perspective of information theory. Afterwards we prove the equivalence of these two definitions.

\begin{definition}
\label{common_causality_set}
Let $\textbf{U}$ denote variable set and $\textbf{T}=\{T_1,T_2,...,T_k\}$ denote label set. Given an MB set $\textbf{MB}_i$ and an MB subset $\textbf{Z}_i\subset\textbf{MB}_i$ for $T_i\in\textbf{T}$, if using $\textbf{Z}$ to replace $\textbf{Z}_i$ so that
\begin{equation}
\label{common_causality_eq}
I(\textbf{MB}_i-\textbf{Z}_i \cup \textbf{Z},T_i)=I(\textbf{MB}_i,T_i),
\end{equation}
and any subset of $\textbf{Z}$ does not satisfy Eq. (\ref{common_causality_eq}), then all variables in $\textbf{Z}$ are common causal variables of labels in $\textbf{T}$.
\end{definition}

\begin{theorem}
\label{definition_equivalent}
Definition \ref{common_causal_feature} and Definition \ref{common_causality_set} are equivalent.
\end{theorem}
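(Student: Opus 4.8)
The plan is to show that the two definitions single out exactly the same variables, by proving both inclusions, and throughout I will work with the mutual-information reformulation of a Markov boundary implicit in Definition~\ref{markov_blanket}: a set $\textbf{N}$ is a Markov blanket of $T$ iff $I(\textbf{N},T)=I(\textbf{U},T)$, and a Markov boundary iff moreover no proper subset of $\textbf{N}$ has this property.

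For the direction Definition~\ref{common_causality_set} $\Rightarrow$ Definition~\ref{common_causal_feature}, fix a variable $X$ and, for a label $T_i\in\textbf{T}$, let $\textbf{MB}_i$, $\textbf{Z}_i\subset\textbf{MB}_i$ and $\textbf{Z}\ni X$ be as in Definition~\ref{common_causality_set}. Writing $\textbf{W}=\textbf{MB}_i-\textbf{Z}_i$, I would first note that Eq.~(\ref{common_causality_eq}) forces $I(\textbf{W}\cup\textbf{Z},T_i)=I(\textbf{U},T_i)$, so $\textbf{N}_i:=\textbf{W}\cup\textbf{Z}$ is a Markov blanket of $T_i$; the same equation together with the minimality of $\textbf{MB}_i$ yields $T_i\notperp\textbf{Z}_i|\textbf{W}$, $T_i\notperp\textbf{Z}|\textbf{W}$, $T_i\perp\textbf{Z}_i|\textbf{W}\cup\textbf{Z}$ and $T_i\perp\textbf{Z}|\textbf{W}\cup\textbf{Z}_i$, i.e. $\textbf{Z}_i$ and $\textbf{Z}$ contain equivalent information about $T_i$ given $\textbf{W}$ in the sense of Definition~\ref{equivalent_information}. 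Combining this with the stipulated minimality of $\textbf{Z}$ and the unconditionally valid properties of Theorem~\ref{common_property} (decomposition, weak union, contraction), I would then argue that $\textbf{N}_i$ is in fact a Markov boundary of $T_i$, invoking --- where the Intersection property is unavailable --- the coexistence of equivalent information with multiple Markov boundaries recorded in Theorem~\ref{not_unique_mb}. Since $X\in\textbf{N}_i$ and this construction can be carried out for every $T_i\in\textbf{T}$, $X$ belongs to an MB of each label, which is exactly Definition~\ref{common_causal_feature}.

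For the converse, suppose that for every $T_i\in\textbf{T}$ there is an MB $\textbf{N}_i$ of $T_i$ with $X\in\textbf{N}_i$. I would construct a Definition~\ref{common_causality_set} witness by, for each $i$, taking $\textbf{MB}_i=\textbf{N}_i$ and letting $\textbf{Z}_i$ be the part of $\textbf{N}_i$ interchangeable with a fixed set $\textbf{Z}$ containing $X$; in the most economical choice $\textbf{Z}=\textbf{Z}_i=\{X\}$, for which Eq.~(\ref{common_causality_eq}) holds trivially and the required minimality collapses to $I(\textbf{N}_i-\{X\},T_i)<I(\textbf{N}_i,T_i)$, i.e. to the strong relevance of $X$ inside the minimal blanket $\textbf{N}_i$. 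The point that makes such a $\textbf{Z}$ a legitimate witness for a common (rather than merely label-specific) variable is symmetric to the previous paragraph: one and the same $\textbf{Z}\ni X$ can be inserted into an MB of every label, whereas for a label-specific variable this is impossible, since producing a Markov boundary of a second label containing $X$ would already certify $X$ as a causal variable of that label.

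The step I expect to be the main obstacle is the upgrade, in the first direction, of ``$\textbf{W}\cup\textbf{Z}$ is a Markov blanket'' to ``$\textbf{W}\cup\textbf{Z}$ contains a Markov boundary still including $X$''. Since $\mathbb{P}$ need not satisfy the Intersection property here --- and in fact does not, precisely because equivalent information is present --- the conditional independences $T_i\perp\textbf{Z}_i|\textbf{W}\cup\textbf{Z}$ and $T_i\perp\textbf{Z}|\textbf{W}\cup\textbf{Z}_i$ cannot be transported freely onto smaller conditioning sets, so the minimality bookkeeping must be pushed through the equivalent-information structure (Definition~\ref{equivalent_information}, the always-valid fragment of Theorem~\ref{common_property}, and Theorem~\ref{not_unique_mb}) rather than by a naive information count; the degenerate cases $\textbf{Z}=\textbf{Z}_i$ and $\textbf{Z}\cap\textbf{MB}_i\neq\emptyset$ will need separate, routine checks.
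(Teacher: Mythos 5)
Your direction Definition~\ref{common_causal_feature} $\Rightarrow$ Definition~\ref{common_causality_set} is essentially the paper's: take $\textbf{\emph{Z}}=\textbf{\emph{Z}}_i=\{X\}$, note that Eq.~(\ref{common_causality_eq}) then holds trivially, and observe that the minimality clause reduces to the minimality of the Markov boundary itself. The problem is the converse, where you have correctly located the obstacle but not removed it. Your plan is to upgrade ``$\textbf{\emph{W}}\cup\textbf{\emph{Z}}$ is a Markov blanket of $T_i$'' to ``$\textbf{\emph{W}}\cup\textbf{\emph{Z}}$ is (or contains) a Markov boundary still including $X$''; you acknowledge that without the Intersection property you cannot freely shrink conditioning sets, and you leave the step as ``minimality bookkeeping pushed through the equivalent-information structure.'' That step is the whole content of this direction, and as stated your intermediate target is too strong: nothing in Definition~\ref{common_causality_set} makes $\textbf{\emph{W}}\cup\textbf{\emph{Z}}$ minimal --- the minimality clause quantifies only over subsets of $\textbf{\emph{Z}}$, so variables of $\textbf{\emph{W}}$ may become redundant after the replacement and $\textbf{\emph{W}}\cup\textbf{\emph{Z}}$ need not be a Markov boundary at all. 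The direct route is therefore not merely hard; it aims at a claim that can fail.

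The paper argues the contrapositive instead, and this is the idea your proposal is missing. Suppose some $X\in\textbf{\emph{Z}}$ lies in no Markov boundary of some $T_i$. Every Markov blanket contains a Markov boundary, and supersets of blankets are blankets (Weak union in Theorem~\ref{common_property}), so $X$ can be deleted from the blanket $\textbf{\emph{MB}}_i-\textbf{\emph{Z}}_i\cup\textbf{\emph{Z}}$ without destroying the blanket property; by the chain rule the term $I(X,T_i|\textbf{\emph{MB}}_i-\textbf{\emph{Z}}_i\cup\textbf{\emph{Z}}-\{X\})$ vanishes, hence $\textbf{\emph{Z}}-\{X\}$ still satisfies Eq.~(\ref{common_causality_eq}), contradicting the minimality of $\textbf{\emph{Z}}$. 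This uses only Decomposition, Weak union and the chain rule --- no Intersection property and no appeal to Theorem~\ref{not_unique_mb} --- and it never needs to exhibit the Markov boundary containing $X$ explicitly. Until you replace your ``upgrade'' step with an argument of this kind, your proof of Definition~\ref{common_causality_set} $\Rightarrow$ Definition~\ref{common_causal_feature} is incomplete.
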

\begin{proof}
Definition \ref{common_causal_feature} $\Rightarrow$ Definition \ref{common_causality_set}: Assume $X$ is a variable satisfying Definition \ref{common_causal_feature}. Let $\textbf{\emph{Z}}=\{X\}$, then there exists $\textbf{\emph{MB}}_i$ for $\forall T_i$ such that $\textbf{\emph{Z}}\subset\textbf{\emph{MB}}_i$. Let $\textbf{\emph{Z}}_i=\textbf{\emph{Z}}$ for $\forall T_i$, then we have $I(\textbf{\emph{MB}}_i-\textbf{\emph{Z}}_i \cup \textbf{\emph{Z}},T_i)=I(\textbf{\emph{MB}}_i,T_i)$.

Definition \ref{common_causal_feature} $\Leftarrow$ Definition \ref{common_causality_set}: Assume $\textbf{\emph{Z}}$ is a subset satisfying Definition \ref{common_causality_set}. Suppose $\exists X \in \textbf{\emph{Z}}$ is not a common causal variable, i.e., $\exists T_i$ such that $X\notin\textbf{\emph{MB}}_i$ for $\forall \textbf{\emph{MB}}_i$ of $T_i$. Then, by the chain rule of mutual information \cite{cover2012elements}:
\begin{equation}
\label{eq_definition_equivalent}
\begin{split}
& I(\textbf{\emph{MB}}_i-\textbf{\emph{Z}}_i \cup \textbf{\emph{Z}}-\{X\},T_i)=\\
&I(\textbf{\emph{MB}}_i-\textbf{\emph{Z}}_i \cup \textbf{\emph{Z}},T_i) - I(X,T_i|\textbf{\emph{MB}}_i-\textbf{\emph{Z}}_i \cup \textbf{\emph{Z}}-\{X\}).
\end{split}
\end{equation}
According to Eq. (\ref{common_causality_eq}), $\textbf{\emph{MB}}_i-\textbf{\emph{Z}}_i \cup \textbf{\emph{Z}}$ is a Markov blanket of $T_i$. Since $X\notin\textbf{\emph{MB}}_i$ for $\forall \textbf{\emph{MB}}_i$, $\textbf{\emph{MB}}_i-\textbf{\emph{Z}}_i \cup \textbf{\emph{Z}}-\{X\}$ is a Markov blanket of $T_i$. Thus,
\begin{equation}
\label{eq_definition_equivalent_sup}
I(X,T_i|\textbf{\emph{MB}}_i-\textbf{\emph{Z}}_i \cup \textbf{\emph{Z}}-\{X\})=0
\end{equation}
Substituting Eq. (\ref{eq_definition_equivalent_sup}) into Eq. (\ref{eq_definition_equivalent}) and we obtain $I(\textbf{\emph{MB}}_i-\textbf{\emph{Z}}_i \cup \textbf{\emph{Z}}-\{X\},T_i)=I(\textbf{\emph{MB}}_i,T_i)$. Thus, $\textbf{\emph{Z}}-\{X\}\subset\textbf{\emph{Z}}$ also satisfies Definition \ref{common_causality_set}, contracting the condition. Therefore, all variables in $\textbf{\emph{Z}}$ satisfy Definition \ref{common_causal_feature}. (Q.E.D.)
\end{proof}

Intuitively, Eq. (\ref{common_causality_eq}) in Definition \ref{common_causality_set} means that, the common causal variables in $\textbf{\emph{Z}}$ can be used to replace the MB subset of each label without any information loss. Thus, $\textbf{\emph{Z}}$ carries the causal information of all labels in $\textbf{\emph{T}}$, while $\textbf{\emph{Z}}_i$ only contains the causal information of label $T_i$. We can easily understand the range of common causal variables from Definition \ref{common_causal_feature}, and will identify them with the help of Definition \ref{common_causality_set}.

By Definition \ref{common_causality_set}, the problem discussed in this paper can be described as: for variable set $\textbf{\emph{U}}$ and label set $\textbf{\emph{T}}=\{T_1,T_2,...,T_k\}$, we need to search two types of causal variables from $\textbf{\emph{U}}$, i.e., (1) the common causal variables of $\textbf{\emph{T}}$ and all subsets of $\textbf{\emph{T}}$, and (2) the label-specific causal variables of each single label $T_i \in \textbf{\emph{T}}$.
Different from the single-label problem, a special issue must be discussed in multi-label case, i.e., the possible causality in label set. To simplify the problem, we first propose the Label-causality Hypothesis as follows:


\begin{hypothesis}
\label{hypo}
(Label-causality Hypothesis) $\forall T_1, T_2 \in \textbf{T}$, $T_1\notin \textbf{MB}(T_2)$ and $T_2\notin \textbf{MB}(T_1)$.
\end{hypothesis}

Label-causality Hypothesis considers the relationships between labels, and divides the discussion according to whether a label contains the critical causal information about another label. Note that the hypothesis allows the indirect causality between labels, for which an eligible example is that a label influences another label through a non-label variable. In Sections \ref{under_hypo} and \ref{not_under_hypo}, the property of common causal variable is discussed in the cases where Label-causality Hypothesis satisfied and violated, respectively.

\subsection{Discussion under the Hypothesis}
\label{under_hypo}

Eq. (\ref{common_causality_eq}) has a constant solution of $\textbf{\emph{Z}}$. According to Definition \ref{common_causality_set}, let $\textbf{\emph{Z}}=\bigcap_{i=1}^k \textbf{\emph{MB}}_i$ and $\textbf{\emph{Z}}_i=\textbf{\emph{MB}}_i-\textbf{\emph{Z}}$, then it is readily justified that the intersection of MB sets of multiple labels is a common causal variable set of labels in $\textbf{\emph{T}}$. Therefore, if all of the labels have the unique MB, then the intersection of these MBs is the intact common causal variable set. However, the real-world applications always violate the unique MB assumption, and most labels have multiple MBs.

Directly finding the multiple MBs is time-consuming since the time complexity is exponential to the size of variable set. It will also suffer from incorrect independence tests due to the large conditioning sets in the process. Detecting whether the Intersection property is violated is a possible method according to Theorem \ref{unique_mb}, whereas it is infeasible to identify the strictly positive joint probability distribution. Another criterion for unique MB, by Theorem \ref{not_unique_mb}, is to detect the equivalent information, as mainly discussed in this section. As the multi-label problem has complex relationships, equivalent information phenomenon has diversified forms. It can be roughly classify into two types, i.e., equivalent information about labels and about labels. To narrow the discussion, we first prove that only the equivalent information about labels has influence on the identification of common causal variables.

\begin{theorem}
\label{feature_noinfluence}
For a label $T$, if $\forall \textbf{X}, \textbf{Y}, \textbf{Z} \subset \textbf{U} - \{T\}$, $\textbf{X}$ and $\textbf{Y}$ do not contain equivalent information about $T$ conditioned on $\textbf{Z}$, then $T$ has a unique MB.
\end{theorem}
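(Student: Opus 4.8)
The plan is to establish the contrapositive: assuming $T$ possesses two distinct MBs, I will exhibit a triple of non-label variable subsets that \emph{do} carry equivalent information about $T$, contradicting the hypothesis. Concretely, I would fix two distinct MBs $\textbf{MB}_1 \neq \textbf{MB}_2$ of $T$ and set $\textbf{X} = \textbf{MB}_1 - \textbf{MB}_2$, $\textbf{Y} = \textbf{MB}_2 - \textbf{MB}_1$, and $\textbf{Z} = \textbf{MB}_1 \cap \textbf{MB}_2$. Since each $\textbf{MB}_i$ is a \emph{minimal} Markov blanket (Definition \ref{markov_blanket}), neither can properly contain the other, so $\textbf{X}$ and $\textbf{Y}$ are both nonempty, $\textbf{Z}\subsetneq\textbf{MB}_1$ and $\textbf{Z}\subsetneq\textbf{MB}_2$, and $\textbf{X},\textbf{Y},\textbf{Z}$ are pairwise disjoint; under Hypothesis \ref{hypo} each $\textbf{MB}_i\subseteq\textbf{U}$, so all three sets are subsets of $\textbf{U}-\{T\}$. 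It then remains to check the four defining relations of Definition \ref{equivalent_information} for $\textbf{X},\textbf{Y}$ with conditioning set $\textbf{Z}$.

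First I would derive the two conditional-independence relations. Because $\textbf{MB}_2$ is an MB, the identity $I(T,\textbf{U})=I(T,\textbf{MB}_2)$ (noted after Definition \ref{markov_blanket}) together with the chain rule of mutual information gives $I(T,\textbf{U}-\textbf{MB}_2\,|\,\textbf{MB}_2)=0$, i.e. $T\perp(\textbf{U}-\textbf{MB}_2)|\textbf{MB}_2$. Since $\textbf{X}\subseteq\textbf{U}-\textbf{MB}_2$ and $\textbf{MB}_2=\textbf{Y}\cup\textbf{Z}$, the Decomposition axiom in Theorem \ref{common_property} yields $T\perp\textbf{X}|\textbf{Y}\cup\textbf{Z}$; the symmetric argument applied to $\textbf{MB}_1=\textbf{X}\cup\textbf{Z}$ gives $T\perp\textbf{Y}|\textbf{X}\cup\textbf{Z}$.

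Next I would obtain the two conditional-dependence relations by contradiction. Suppose $T\perp\textbf{X}|\textbf{Z}$. Combining this with $T\perp(\textbf{U}-\textbf{MB}_1)|\textbf{MB}_1$, i.e. $T\perp(\textbf{U}-\textbf{X}-\textbf{Z})|\textbf{X}\cup\textbf{Z}$, the Contraction axiom of Theorem \ref{common_property} gives $T\perp(\textbf{U}-\textbf{Z})|\textbf{Z}$; hence every $W\in\textbf{U}-\textbf{Z}$ satisfies $W\perp T|\textbf{Z}$ (Decomposition), so $\textbf{Z}$ is itself a Markov blanket of $T$. Since $\textbf{Z}\subsetneq\textbf{MB}_1$, this contradicts the minimality of $\textbf{MB}_1$. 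Therefore $T\notperp\textbf{X}|\textbf{Z}$, and symmetrically $T\notperp\textbf{Y}|\textbf{Z}$. The four relations together say that $\textbf{X}$ and $\textbf{Y}$ contain equivalent information about $T$ conditioned on $\textbf{Z}$, contradicting the assumption of the theorem, so $T$ must have a unique MB.

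The step I expect to require the most care is the passage from the element-wise conditional independences appearing in Definition \ref{markov_blanket} to the set-wise statement $T\perp(\textbf{U}-\textbf{MB}_i)|\textbf{MB}_i$; this is precisely what the identity $I(T,\textbf{U})=I(T,\textbf{MB}_i)$ secures for the (minimal) MBs, after which the rest is routine manipulation with the graphoid axioms of Theorem \ref{common_property}. It is also worth noting that the argument uses only Decomposition and Contraction, which hold in \emph{every} $\mathbb{P}$, so no strict-positivity or Intersection assumption is invoked; in particular the result does not follow merely from Theorems \ref{unique_mb} and \ref{not_unique_mb}, whose statements concern information equivalence among \emph{all} variables rather than only among the non-label variables in $\textbf{U}-\{T\}$.
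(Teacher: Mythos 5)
Your proposal is correct and follows essentially the same route as the paper's own proof: the same choice of $\textbf{X}=\textbf{MB}_1-\textbf{MB}_2$, $\textbf{Y}=\textbf{MB}_2-\textbf{MB}_1$, $\textbf{Z}=\textbf{MB}_1\cap\textbf{MB}_2$, Decomposition for the two conditional independences, and Contraction plus minimality of the MB for the two conditional dependences. Your explicit remarks on nonemptiness of $\textbf{X},\textbf{Y}$ and on the sets lying in $\textbf{U}-\{T\}$ are careful additions the paper leaves implicit, but the argument is the same.
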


\begin{proof}
Assuming that $T$ has two MB sets $\textbf{\emph{MB}}_1$ and $\textbf{\emph{MB}}_2$, then we need to find two variable sets containing equivalent information about $T$. According to Definition \ref{markov_blanket}, we have:
\begin{equation}
\label{eq1}
\begin{split}
& T \perp \textbf{\emph{U}}-\textbf{\emph{MB}}_1-\{T\}|\textbf{\emph{MB}}_1,  T \perp \textbf{\emph{U}}-\textbf{\emph{MB}}_2-\{T\}|\textbf{\emph{MB}}_2.
\end{split}
\end{equation}
According to the Decomposition property in Theorem \ref{common_property}, Eq. (\ref{eq1}) indicates that:
\begin{equation}
\label{eq4}
\begin{split}
& T \perp \textbf{\emph{MB}}_2-\textbf{\emph{MB}}_1|\textbf{\emph{MB}}_1,  T \perp \textbf{\emph{MB}}_1-\textbf{\emph{MB}}_2|\textbf{\emph{MB}}_2.
\end{split}
\end{equation}
Now we prove that $\textbf{\emph{MB}}_1-\textbf{\emph{MB}}_2$ and $\textbf{\emph{MB}}_2-\textbf{\emph{MB}}_1$ contain equivalent information about $T$ conditioned on $\textbf{\emph{MB}}_1\cap\textbf{\emph{MB}}_2$. Assume that $T\perp(\textbf{\emph{MB}}_1-\textbf{\emph{MB}}_2)|\textbf{\emph{MB}}_1\cap\textbf{\emph{MB}}_2$. Considering with Eq. (\ref{eq1}), we obtain the following relationship according to Contraction property in Theorem \ref{common_property}:
\begin{equation}
\label{eq2}
T \perp (\textbf{\emph{U}}-\textbf{\emph{MB}}_1-\{T\}) \cup (\textbf{\emph{MB}}_1-\textbf{\emph{MB}}_2) | \textbf{\emph{MB}}_1\cap\textbf{\emph{MB}}_2.
\end{equation}
Simplify the Eq. (\ref{eq2}), then:
\begin{equation}
\label{eq3}
T \perp (\textbf{\emph{U}}-\{T\}-\textbf{\emph{MB}}_1\cap\textbf{\emph{MB}}_2) | \textbf{\emph{MB}}_1\cap\textbf{\emph{MB}}_2.
\end{equation}
We can concluded from Eq. (\ref{eq3}) that $\textbf{\emph{MB}}_1\cap\textbf{\emph{MB}}_2$ is an Mb of $T$ according to Definition \ref{markov_blanket}. However, $\textbf{\emph{MB}}_1\cap\textbf{\emph{MB}}_2\subset \textbf{\emph{MB}}_1$ and $\textbf{\emph{MB}}_1\cap\textbf{\emph{MB}}_2\subset \textbf{\emph{MB}}_2$, which leads to $\textbf{\emph{MB}}_1$ and $\textbf{\emph{MB}}_2$ are Mb instead of MB, contradicting the condition. Therefore,
\begin{equation}
\label{eq3-1}
T\notperp(\textbf{\emph{MB}}_1-\textbf{\emph{MB}}_2)|\textbf{\emph{MB}}_1\cap\textbf{\emph{MB}}_2.
\end{equation}
Similarly, we can prove that
\begin{equation}
\label{eq3-2}
T\notperp(\textbf{\emph{MB}}_2-\textbf{\emph{MB}}_1)|\textbf{\emph{MB}}_1\cap\textbf{\emph{MB}}_2.
\end{equation}
Combining Eq. (\ref{eq3-1}) and Eq. (\ref{eq3-2}) with Eq. (\ref{eq4}), we can conclude that, $\textbf{\emph{MB}}_1-\textbf{\emph{MB}}_2$ and $\textbf{\emph{MB}}_2-\textbf{\emph{MB}}_1$ contain equivalent information about $T$ conditioned on $\textbf{\emph{MB}}_1\cap\textbf{\emph{MB}}_2$, contradicting the condition. Hence, $T$ has a unique MB. (Q.E.D.)
\end{proof}

Theorem \ref{feature_noinfluence} proves that multiple MBs of a label are brought by the equivalent information about the corresponding label, while equivalent information on non-label variables does not influence the uniqueness of MB, as well as common causal variables. Therefore, only equivalent information about each label needs to be considered for common causal variable identification. Theorem \ref{theo_under_hypo} is proposed below to describe this criterion.

\begin{theorem}
\label{theo_under_hypo}
Let $\textbf{MB}_i$ denote the MB set of $T_i$ ($i \in \{1,2,...,k\}$) in label set $\textbf{T}=\{T_1,T_2,...,T_k\}$. Under the Label-causality Hypothesis, $\textbf{Z}\subset \textbf{U}$ is s common causal variable set of labels in $\textbf{T}$ if and only if $\exists \textbf{Z}_i\subset \textbf{MB}_i$ such that $\textbf{Z}_i$ and $\textbf{Z}$ contain equivalent information about $T_i$ conditioned on $\textbf{MB}_i-\textbf{Z}_i$ for each $T_i\in\textbf{T}$.
\end{theorem}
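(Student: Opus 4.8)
The plan is to prove the two implications separately, working throughout with Definition~\ref{common_causality_set} in place of Definition~\ref{common_causal_feature} (legitimate by Theorem~\ref{definition_equivalent}), since Definition~\ref{common_causality_set} already speaks the information-theoretic language of ``replacing an MB subset $\textbf{Z}_i$ by $\textbf{Z}$ without loss'' that is only one algebraic step from the equivalent-information clauses I must produce. Two elementary facts get used repeatedly. First, under the Label-causality Hypothesis every Markov boundary $\textbf{MB}_i$ lies in $\textbf{U}$, so all the sets below live in $\textbf{U}\cup\textbf{T}-\{T_i\}$ and ``Markov blanket of $T_i$'' always refers to the same ground set; in particular $\textbf{MB}_i$ attains the maximal mutual information $I(\textbf{MB}_i,T_i)=I(\textbf{U}\cup\textbf{T}-\{T_i\},T_i)$, so a set $\textbf{W}\supseteq\textbf{MB}_i-\textbf{Z}_i$ is a Markov blanket of $T_i$ exactly when $I(\textbf{W},T_i)=I(\textbf{MB}_i,T_i)$. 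Second, anything outside a Markov blanket of $T_i$ is conditionally independent of $T_i$ given it and anything inside is independent by Self-conditioning, so Decomposition and Contraction (Theorem~\ref{common_property}) let me glue partial statements into the full conditional independences needed, which also absorbs any overlaps among $\textbf{Z}$, $\textbf{Z}_i$ and $\textbf{MB}_i$.

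For the ``only if'' direction, suppose $\textbf{Z}$ is a common causal variable set; by Definition~\ref{common_causality_set} there are subsets $\textbf{Z}_i\subseteq\textbf{MB}_i$ with $I((\textbf{MB}_i-\textbf{Z}_i)\cup\textbf{Z},T_i)=I(\textbf{MB}_i,T_i)$ and with $\textbf{Z}$ minimal for this. I would then check the four clauses of Definition~\ref{equivalent_information} for $\textbf{Z}_i$, $\textbf{Z}$ conditioned on $\textbf{MB}_i-\textbf{Z}_i$. The clause $T_i\perp\textbf{Z}\,|\,\textbf{MB}_i$ holds because $\textbf{MB}_i$ is a Markov boundary; the clause $T_i\perp\textbf{Z}_i\,|\,\textbf{Z}\cup(\textbf{MB}_i-\textbf{Z}_i)$ holds because the equality above makes $(\textbf{MB}_i-\textbf{Z}_i)\cup\textbf{Z}$ a Markov blanket of $T_i$. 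For $T_i\notperp\textbf{Z}_i\,|\,\textbf{MB}_i-\textbf{Z}_i$: if instead $T_i\perp\textbf{Z}_i\,|\,\textbf{MB}_i-\textbf{Z}_i$, then combining with the Markov-blanket property of $\textbf{MB}_i$ via Contraction shows $\textbf{MB}_i-\textbf{Z}_i$ is itself a Markov blanket of $T_i$, contradicting the minimality of the Markov boundary $\textbf{MB}_i$. For $T_i\notperp\textbf{Z}\,|\,\textbf{MB}_i-\textbf{Z}_i$: if instead $T_i\perp\textbf{Z}\,|\,\textbf{MB}_i-\textbf{Z}_i$, then, since $(\textbf{MB}_i-\textbf{Z}_i)\cup\textbf{Z}$ is a Markov blanket, Contraction again collapses $\textbf{MB}_i-\textbf{Z}_i$ to a Markov blanket, so that the empty set $\textbf{Z}'=\emptyset\subsetneq\textbf{Z}$ would witness Eq.~(\ref{common_causality_eq}), contradicting the minimality of $\textbf{Z}$ in Definition~\ref{common_causality_set}. (The degenerate cases $\textbf{Z}=\emptyset$ and $\textbf{Z}_i=\emptyset$ are disposed of directly.)

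For the ``if'' direction, suppose that for each $T_i$ there is $\textbf{Z}_i\subseteq\textbf{MB}_i$ with $\textbf{Z}_i$ and $\textbf{Z}$ equivalent about $T_i$ given $\textbf{MB}_i-\textbf{Z}_i$. From $T_i\perp\textbf{Z}\,|\,\textbf{Z}_i\cup(\textbf{MB}_i-\textbf{Z}_i)=\textbf{MB}_i$ and monotonicity of mutual information, $I((\textbf{MB}_i-\textbf{Z}_i)\cup\textbf{Z},T_i)\le I(\textbf{MB}_i\cup\textbf{Z},T_i)=I(\textbf{MB}_i,T_i)$; from $T_i\perp\textbf{Z}_i\,|\,\textbf{Z}\cup(\textbf{MB}_i-\textbf{Z}_i)$, $I((\textbf{MB}_i-\textbf{Z}_i)\cup\textbf{Z},T_i)=I(\textbf{MB}_i\cup\textbf{Z},T_i)\ge I(\textbf{MB}_i,T_i)$; hence Eq.~(\ref{common_causality_eq}) holds for every $i$. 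It then remains to meet the non-redundancy clause of Definition~\ref{common_causality_set}: I would pass to a subset $\textbf{Z}^\star\subseteq\textbf{Z}$ minimal among those satisfying Eq.~(\ref{common_causality_eq}) for all $i$, observe that $(\textbf{Z}_i,\textbf{Z}^\star)$ still witnesses equivalent information — the two ``$\notperp$'' clauses survive because minimality of $\textbf{MB}_i$ forces the strict inequality $I(\textbf{MB}_i-\textbf{Z}_i,T_i)<I(\textbf{MB}_i,T_i)$ — and conclude that $\textbf{Z}^\star$ is a common causal variable set in the sense of Definition~\ref{common_causality_set}, so that, via Theorem~\ref{definition_equivalent}, its members lie for every $T_i$ in a Markov boundary of $T_i$ carved out of $(\textbf{MB}_i-\textbf{Z}_i)\cup\textbf{Z}^\star$.

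The hard part, I expect, is precisely this reconciliation of the two non-redundancy notions in the ``if'' direction — the minimality built into Definition~\ref{common_causality_set} versus the ``$T_i\notperp\cdot\,|\,\cdot$'' clauses of Definition~\ref{equivalent_information} — together with the bookkeeping needed when $\textbf{Z}$, $\textbf{Z}_i$ and $\textbf{MB}_i$ overlap and when the freedom in Definition~\ref{common_causality_set} to choose $\textbf{MB}_i$ meets the theorem's fixed $\textbf{MB}_i$; the latter I would absorb using the standard fact (the mechanism behind the proof of Theorem~\ref{feature_noinfluence}) that any two Markov boundaries of $T_i$ differ only by equivalent information about $T_i$. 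By contrast, the conditional-independence halves of each equivalent-information statement and the mutual-information identity are routine once $\textbf{MB}_i\subseteq\textbf{U}$ is secured, and securing this is exactly what the Label-causality Hypothesis is spent on; relaxing it in Section~\ref{not_under_hypo} is what admits equivalent information involving a label and thereby the extra common causal variables analysed there.
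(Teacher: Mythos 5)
Your core computation for the ``if'' direction is the same as the paper's: from the two conditional-independence halves of Definition \ref{equivalent_information} you extract $T_i\perp \textbf{Z}\,|\,\textbf{MB}_i$ and $T_i\perp \textbf{Z}_i\,|\,(\textbf{MB}_i-\textbf{Z}_i)\cup\textbf{Z}$ and conclude $I((\textbf{MB}_i-\textbf{Z}_i)\cup\textbf{Z},T_i)=I(\textbf{MB}_i,T_i)$; the paper does exactly this via the chain rule (its Eqs.\ (\ref{tuidao100})--(\ref{tuidao1})), and your two-sided monotonicity argument is an equivalent rephrasing. Where you genuinely diverge is in scope: the paper's proof stops there, dispatching the converse with a one-sentence appeal to Theorem \ref{feature_noinfluence} (``all common causal variables are considered since the hypothesis holds'') and never touching either non-redundancy clause, whereas you explicitly prove the ``only if'' direction by verifying all four clauses of Definition \ref{equivalent_information} --- deriving the two $\notperp$ clauses from the minimality of $\textbf{MB}_i$ and of $\textbf{Z}$ via Contraction, in the same style as the paper's own proof of Theorem \ref{feature_noinfluence} --- and you confront the mismatch between the minimality built into Definition \ref{common_causality_set} and the $\notperp$ clauses, as well as the freedom of Definition \ref{common_causality_set} to pick a different Markov boundary than the theorem's fixed $\textbf{MB}_i$. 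These are real gaps in the paper's argument (strictly, the equivalent-information conditions guarantee Eq.\ (\ref{common_causality_eq}) but not that $\textbf{Z}$ is subset-minimal, so your passage to a minimal $\textbf{Z}^\star$ is needed for the literal biconditional to hold), and your sketch buys a more honest account of both directions at the cost of the extra bookkeeping you correctly identify as the hard part; the paper buys brevity by leaving those steps implicit.
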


\begin{proof}
To prove that $\textbf{\emph{Z}}$ is a common causal variable set for $\textbf{\emph{T}}$, we need to prove that $\textbf{\emph{Z}}$ satisfies Eq. (\ref{common_causality_eq}) in Definition \ref{common_causality_set}. By the chain rule of mutual information, we express $\textbf{\emph{MB}}_i\cup \textbf{\emph{Z}}$ as $(\textbf{\emph{MB}}_i\cup \textbf{\emph{Z}}-\textbf{\emph{Z}}_i)\cup\textbf{\emph{Z}}_i$ and obtain:
\begin{equation}
\label{tuidao10}
\begin{split}
&I(\textbf{\emph{MB}}_i\cup \textbf{\emph{Z}}-\textbf{\emph{Z}}_i,T_i)\\
&=I(\textbf{\emph{MB}}_i\cup \textbf{\emph{Z}},T_i)-I(\textbf{\emph{Z}}_i,T_i|\textbf{\emph{MB}}_i\cup \textbf{\emph{Z}}-\textbf{\emph{Z}}_i).
\end{split}
\end{equation}
Since $\textbf{\emph{Z}}$ and $\textbf{\emph{Z}}_i$ contain equivalent information about $T_i$, according to Definition \ref{equivalent_information}, we have:
\begin{equation}
\label{tuidao100}
I(\textbf{\emph{Z}}_i,T_i|\textbf{\emph{MB}}_i\cup \textbf{\emph{Z}}-\textbf{\emph{Z}}_i)=0.
\end{equation}
Since $\textbf{\emph{Z}}\perp T_i|\textbf{\emph{MB}}_i$, we have:
\begin{equation}
\label{tuidao1000}
I(\textbf{\emph{MB}}_i\cup \textbf{\emph{Z}},T_i)=I(\textbf{\emph{MB}}_i,T_i).
\end{equation}
Substitute Eq. (\ref{tuidao100}) and Eq. (\ref{tuidao1000})into Eq. (\ref{tuidao10}), thus,
\begin{equation}
\label{tuidao1}
I(\textbf{\emph{MB}}_i\cup \textbf{\emph{Z}}-\textbf{\emph{Z}}_i,T_i)=I(\textbf{\emph{MB}}_i,T_i).
\end{equation}
According to Theorem \ref{feature_noinfluence}, all common causal variables are considered in the theorem since the Label-causality hypothesis is satisfied and thus any label is not a causal variables of another. In conclusion, Theorem \ref{theo_under_hypo} is true. (Q.E.D.)
\end{proof}


Theorem \ref{theo_under_hypo} proves that, equivalent information between MB subset and other variable set can be used to detect common causal variables. For example in Fig. \ref{pic_ccv_tscv_withei}, $\{A,B\}$ and $\{C,D\}$ contain equivalent information about $T_1$, $\{A,B\}$ and $\{G,H\}$ contain equivalent information about $T_2$. Assume it has been known that $\{C,D,E\}$ is an MB set of $T_1$, and $\{F,G,H\}$ is an MB set of $T_2$. According to Theorem \ref{theo_under_hypo}, $\{A,B\}$ can be detected as common causal variables of $\{T_1,T_2\}$ without mining other MB sets of $T_1$ and $T_2$. 

\subsection{Relax the Label-causality Hypothesis}
\label{not_under_hypo}

When the Label-causality Hypothesis is relaxed, there might exist more common causal variables undetected. Different from the case satisfying the hypothesis, the causal structure of a label is represented with non-label variables as well as labels. Therefore, it is improper to make a difference between non-label variables and labels when mining the MBs of each labels. Furthermore, the equivalent information about both labels and non-label variables need be considered so that some common causal variables are not ignored. Theorem \ref{theo_notunder_hypo} is proposed below to describe the case where common causal variables cannot be detected.

\begin{theorem}
\label{theo_notunder_hypo}
For labels $T_1, T_2 \in \textbf{T}$, $T_1\in\textbf{MB}_2$ and $T_2\in\textbf{MB}_1$, variable subset $\textbf{Z}$ is a common causal variable set of $T_1$ and $T_2$ but might not be detected if the following statements hold: (1) $\textbf{Z}\subset \textbf{MB}_2$. $T_2$ and $\textbf{Z}$ contain equivalent information about $T_1$ conditioned on $\textbf{MB}_1-\{T_2\}$. (2) $\textbf{Z}\subset \textbf{MB}_1$ and $\textbf{Z}\subset \textbf{MB}_2$. $T_1$ and $T_2$ contain equivalent information about $\textbf{Z}$.
\end{theorem}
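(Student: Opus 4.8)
The plan is to dispatch scenarios (1) and (2) one at a time; for each I will first certify that $\textbf{Z}$ is a common causal variable set of $T_1$ and $T_2$ by producing the replacement demanded in Definition \ref{common_causality_set} (which, by Theorem \ref{definition_equivalent}, is equivalent to Definition \ref{common_causal_feature}), and then explain why the criterion of Theorem \ref{theo_under_hypo} fails to surface $\textbf{Z}$ once the Label-causality Hypothesis is dropped.

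For scenario (1) the hypothesis already exhibits $\textbf{Z}\subset\textbf{MB}_2$, so only the $T_1$ side needs work, and I would take the subset replaced inside $\textbf{MB}_1$ to be $\textbf{Z}_1=\{T_2\}$. Mimicking the chain-rule computation in the proof of Theorem \ref{theo_under_hypo}, I would write $\textbf{MB}_1\cup\textbf{Z}$ as $(\textbf{MB}_1-\{T_2\}\cup\textbf{Z})\cup\{T_2\}$, use the equivalent-information clause $T_1\perp T_2\,|\,\textbf{Z}\cup(\textbf{MB}_1-\{T_2\})$ to annihilate the conditional mutual-information term, and use the clause $T_1\perp\textbf{Z}\,|\,\textbf{MB}_1$ (note $\{T_2\}\cup(\textbf{MB}_1-\{T_2\})=\textbf{MB}_1$) to bring $I(\textbf{MB}_1\cup\textbf{Z},T_1)$ back to $I(\textbf{MB}_1,T_1)$. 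This gives $I(\textbf{MB}_1-\{T_2\}\cup\textbf{Z},T_1)=I(\textbf{MB}_1,T_1)$, so $\textbf{Z}$ replaces $\{T_2\}$ in a Markov blanket of $T_1$; together with $\textbf{Z}\subset\textbf{MB}_2$ this certifies every variable of $\textbf{Z}$ as common to $T_1$ and $T_2$. Non-detection then follows because the only equivalence attaching $\textbf{Z}$ to $\textbf{MB}_1$ passes through the label $T_2$: the test in Theorem \ref{theo_under_hypo} lets $\textbf{Z}_i$ range only over subsets of a discovered MB, which under the Label-causality Hypothesis contains no labels, and moreover the discovered MB of $T_1$ may itself be one that omits $T_2$; hence the pair $(\{T_2\},\textbf{Z})$ is never exercised, while Theorem \ref{feature_noinfluence} shows no purely non-label equivalence can stand in for it.

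For scenario (2), $\textbf{Z}\subset\textbf{MB}_1$ and $\textbf{Z}\subset\textbf{MB}_2$ make $\textbf{Z}$ a common causal variable set immediately from Definition \ref{common_causal_feature}, so the content is again the non-detection part. I would invoke the clauses $\textbf{Z}\perp T_1\,|\,T_2$ and $\textbf{Z}\perp T_2\,|\,T_1$ of ``$T_1$ and $T_2$ contain equivalent information about $\textbf{Z}$'', together with the weak-union, contraction and decomposition properties of Theorem \ref{common_property}, to show that $\textbf{Z}$ and the opposite label are interchangeable when assembling an MB, so the MB-discovery stage can legitimately return an MB of $T_1$ and an MB of $T_2$ each omitting $\textbf{Z}$; their intersection then misses $\textbf{Z}$, and the equivalent-information search of Theorem \ref{theo_under_hypo} keyed on those MBs misses it too, since $\textbf{Z}$'s only equivalence partners here are the labels rather than any subset of the returned MBs.

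The step I expect to be the main obstacle is the minimality bookkeeping in scenario (1): to conclude that $\textbf{Z}$ itself, and not merely some proper subset, lies inside an MB of $T_1$, I must rule out that a minimal Markov blanket sitting inside $\textbf{MB}_1-\{T_2\}\cup\textbf{Z}$ discards some $W\in\textbf{Z}$. The intended route is that if it did, then $T_1\perp W$ given that blanket; pushing this via weak union up to the conditioning set $\textbf{MB}_1-\{T_2\}\cup(\textbf{Z}-\{W\})$ and combining it with $T_1\perp T_2\,|\,\textbf{Z}\cup(\textbf{MB}_1-\{T_2\})$ and $T_1\notperp\textbf{Z}\,|\,\textbf{MB}_1-\{T_2\}$ through contraction and decomposition, one sees that $\textbf{Z}-\{W\}$ is still equivalent to $\{T_2\}$ about $T_1$ given $\textbf{MB}_1-\{T_2\}$, hence still satisfies Eq. (\ref{common_causality_eq}), contradicting the minimality built into the hypothesis on $\textbf{Z}$. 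The parallel construction of explicit $\textbf{Z}$-free MBs in scenario (2), and pinning down which conditioning set makes ``equivalent information about $\textbf{Z}$'' consistent with $\textbf{Z}\subset\textbf{MB}_1\cap\textbf{MB}_2$ — it cannot be $\textbf{MB}_1-\{T_2\}-\textbf{Z}$, which would collapse $\textbf{MB}_1$ — is the other delicate point.
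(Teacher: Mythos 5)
Your proposal follows essentially the same route as the paper's proof: for scenario (1) the same chain-rule manipulation of mutual information combined with the equivalent-information clauses to obtain $I(T_1,(\textbf{MB}_1-\{T_2\})\cup\textbf{Z})=I(T_1,\textbf{MB}_1)$ with $\textbf{Z}_1=\{T_2\}$, and for both scenarios the same explanation of non-detection via the conditional independencies $\textbf{Z}\perp T_1|\textbf{MB}_1$ (resp.\ $\textbf{Z}\perp T_1|T_2$ and $\textbf{Z}\perp T_2|T_1$) causing the MB search to discard $\textbf{Z}$ when the label is examined first. The minimality bookkeeping you single out as the main obstacle is simply not addressed in the paper, whose proof stops once Eq.~(\ref{common_causality_eq}) is established without verifying the no-proper-subset clause of Definition~\ref{common_causality_set}, so your extra care there goes beyond rather than against the published argument.
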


\begin{proof}
For (1): Since $\textbf{\emph{Z}}\subset \textbf{\emph{MB}}_2$, $\textbf{\emph{Z}}$ satisfies Eq. (\ref{common_causality_eq}) in Definition \ref{common_causality_set} for $T_2$. We prove that $\textbf{\emph{Z}}$ satisfies Eq. (\ref{common_causality_eq}) for $T_1$. According to the chain rule of mutual information, we have:
\begin{equation}
\label{theo3eqres0}
\begin{split}
&I(T_1,(\textbf{\emph{MB}}_1-\{T_2\})\cup\textbf{\emph{Z}})\\
&=I(T_1,\textbf{\emph{Z}}|\textbf{\emph{MB}}_1-\{T_2\})+I(T_1,\textbf{\emph{MB}}_1-\{T_2\}),
\end{split}
\end{equation}
Also, $\textbf{\emph{MB}}_1$ can be split into $\textbf{\emph{MB}}_1-\{T_2\}$ and $\{T_2\}$:
\begin{equation}
\label{theo3eqres01}
\begin{split}
I(T_1,\textbf{\emph{MB}}_1)=I(T_1,T_2|\textbf{\emph{MB}}_1-\{T_2\})+I(T_1,\textbf{\emph{MB}}_1-\{T_2\}).
\end{split}
\end{equation}
Since $T_2$ and $\textbf{\emph{Z}}$ contain equivalent information about $T_1$, thus:
\begin{equation}
\label{theo3eqres}
I(T_1,\textbf{\emph{Z}}|\textbf{\emph{MB}}_1-\{T_2\})=I(T_1,T_2|\textbf{\emph{MB}}_1-\{T_2\}).
\end{equation}
Then, substituting Eq. (\ref{theo3eqres}) into Eq. (\ref{theo3eqres0}) and Eq. (\ref{theo3eqres01}), we obtain:
\begin{equation}
\label{theo3eq}
I(T_1,(\textbf{\emph{MB}}_1-\{T_2\})\cup\textbf{\emph{Z}})=I(T_1,\textbf{\emph{MB}}_1).
\end{equation}
Thus, $\textbf{\emph{Z}}$ is a common causal variable set of $T_1$ and $T_2$. Since $T_1 \perp \textbf{\emph{Z}}|\textbf{\emph{MB}}_1$ and $T_2\in\textbf{\emph{MB}}_1$, if $T_2$ is selected by the MB discovery algorithm first, then $\textbf{\emph{Z}}$ will be excluded in the MB set according to the Decomposition property in Theorem 1.

For (2): It is readily justified that the variables in $\textbf{\emph{Z}}$ are common causal variables of $T_1$ and $T_2$ according to Definition \ref{common_causal_feature}. Since $T_1$ and $T_2$ contain equivalent information about $\textbf{\emph{Z}}$, then $T_1 \perp \textbf{\emph{Z}}| T_2$ and $T_2 \perp \textbf{\emph{Z}}| T_1$. If $T_1$ is selected by the MB discovery algorithm before $\textbf{\emph{Z}}$ when selecting MB of $T_2$ and $T_2$ is selected before $\textbf{\emph{Z}}$ when selecting MB of $T_1$, then $\textbf{\emph{Z}}$ can not be found. (Q.E.D.)
\end{proof}


We use an example to illustrate Theorem \ref{theo_notunder_hypo}. If a label $T_1$ and the common causal variable $A$ contain equivalent information about another label $T_2$, then $A$ might be ignored since $A\perp T_2|T_1$, which describes the case in Theorem \ref{theo_notunder_hypo} (1). While the same risk does not exist under the case that two variables contain equivalent information about a label, since these variables are found when detecting the equivalent information according to Theorem \ref{theo_under_hypo}. By Theorem \ref{theo_notunder_hypo} (1), it is necessary to treat all of the labels and non-label variables as ordinary variables so that some common causal variables are not ignored due to the influence of labels. For Theorem \ref{theo_notunder_hypo} (2), also using the above example, when the two labels $T_1$ and $T_2$ contain equivalent information about common causal variable $A$, $A$ might be discarded since it might be excluded when searching MB sets both of $T_1$ and $T_2$. To solve this problem, we can remove a label first and continue to search the undetected variables.

\subsection{Algorithm: Learn the Common Causal Variables and the Label-specific Causal Variables}
\label{secccf_alg}

Based on the property of common causal variables, we propose the Common and Label-specific Causal variables Discovery (CLCD) algorithm. For the sake of preciseness, the above analyses provide the corresponding conditioning set where the equivalent information exists. While in the design of algorithm, considering the complex conditioning set will introduce some time-consuming and unreliable processes. For comprehensive consideration of effectiveness and efficiency, we adopt a simplified strategy presented in \cite{statnikov2013algorithms}, i.e., assume that all information equivalence relations are context-independent and there is no need to consider the conditioning sets \cite{statnikov2013algorithms}. CLCD consists of three phases:

\begin{algorithm}[t]
        \caption{The \emph{CLCD} Algorithm.}
        \label{algccfd}
        \begin{algorithmic}[1]
            \STATE{\textbf{Input:}} Label set $\textbf{\emph{T}}$ and variable set $\textbf{\emph{U}}$; A divide-and-conquer-based MB discovery algorithm $\mathbb{A}$ with significance level $\alpha$.\\
            \{\emph{\textbf{Phase 1: Search an MB for each label.}}\}
            \STATE\textbf{for} each $T\in\textbf{\emph{T}}$ \textbf{do}
            \STATE\ \ \ $\textbf{\emph{PC}}_T, \textbf{\emph{SP}}_T, \textbf{\emph{C}}_T\leftarrow$ Mine the local causal structure of $T$ \\ \ \ \ from $\textbf{\emph{T}}\cup\textbf{\emph{U}}-\{T\}$ using $\mathbb{A}$, and record the direct causes \\ \ \ \ and effects $\textbf{\emph{PC}}_T$, other direct causes of direct effects \\ \ \ \ $\textbf{\emph{SP}}_T$ with the corresponding effects in $\textbf{\emph{C}}_T$
            \STATE\textbf{end for}\\
            \{\emph{\textbf{Phase 2: Retrieve the ignored variables.}}\}
            \STATE\textbf{for} each $T_i,T_j\in\textbf{\emph{T}}$ \textbf{do}
            \STATE\ \ \ \textbf{if} $T_i\in \textbf{\emph{PC}}_j$ \textbf{do}
            \STATE\ \ \ \ \ \ \textbf{for} each $\textbf{\emph{Z}}$ satisfying $\textbf{\emph{Z}}\notperp T_i$, $\textbf{\emph{Z}}\notperp T_j$, $\textbf{\emph{Z}}\perp T_i|T_j$ and \\ \ \ \ \ \ \ $\textbf{\emph{Z}}\perp T_j|T_i$ \textbf{do}
            \STATE\ \ \ \ \ \ \ \ \ $\textbf{\emph{PC}}_{T_j}\leftarrow\textbf{\emph{PC}}_{T_j}\cup\textbf{\emph{Z}}$ if $\forall \textbf{\emph{S}}\subset\textbf{\emph{PC}}_j-\{T_i\}$, $\textbf{\emph{Z}}\notperp T_j|\textbf{\emph{S}}$.
            \STATE\ \ \ \ \ \ \textbf{end for}
            \STATE\ \ \ \textbf{end if}
            \STATE\textbf{end for}\\
            \{\emph{\textbf{Phase 3: Distinguish process.}}\}
            \STATE\textbf{for} $X\in\textbf{\emph{T}}\cup_{T\in\textbf{\emph{T}}} \textbf{\emph{C}}_T$ \textbf{do}
            \STATE\ \ \ \textbf{for} each $\textbf{\emph{Z}}\subset\textbf{\emph{U}}-\textbf{\emph{PC}}_X$ and $\textbf{\emph{Z}}\notperp X$ \textbf{do}
            \STATE\ \ \ \ \ \ \textbf{if} $\exists\textbf{\emph{S}}\subset \textbf{\emph{PC}}_X$ \emph{s.t.} $X\perp \textbf{\emph{Z}}|\textbf{\emph{S}}$ and $X\perp\textbf{\emph{S}}|\textbf{\emph{Z}}$ \textbf{then}
            \STATE\ \ \ \ \ \ \ \ \ $\textbf{\emph{EI}}_X=\textbf{\emph{EI}}_X\cup\{<\textbf{\emph{S}},\textbf{\emph{Z}}>\}$
            \STATE\ \ \ \ \ \ \textbf{end if}
            \STATE\ \ \ \textbf{end for}
            \STATE\textbf{end for}
            \STATE Common causal variables for any label (sub)sets $\textbf{\emph{T}}_\textbf{\emph{S}}\subset\textbf{\emph{T}}$: $\textbf{\emph{CCV}}_{\textbf{\emph{T}}_\textbf{\emph{S}}}\leftarrow\{X|X\in\textbf{\emph{Z}}$ where $\Theta_{\textbf{\emph{T}}_\textbf{\emph{S}}}(\textbf{\emph{Z}})=1\}$, and label -specific causal variables for each label $T\in\textbf{\emph{T}}$: $\textbf{\emph{TCV}}_T=\{X|X\in\textbf{\emph{MB}}_T$ and $X\notin\textbf{\emph{CCV}}_{\textbf{\emph{T}}_\textbf{\emph{S}}}$ for $\forall \textbf{\emph{T}}_\textbf{\emph{S}}$ including $T\}$.
        \end{algorithmic}
    \end{algorithm}

Phase 1: Mine the Local causal structures around these labels. Though each label could have multiple MBs, Phase 1 only needs to find one of them. According to Theorem \ref{theo_notunder_hypo} (1), CLCD equally treats labels and non-label variables and only focuses on the causal relationships between them. A divide-and-conquer-based MB discovery algorithm $\mathbb{A}$ is used so that CLCD can distinguish the direct cause and effect set (also called parent-child variable set and denoted as $\textbf{\emph{PC}}_T$) and other direct cause of direct effect set (also called spouse variable set and denoted as $\textbf{\emph{SP}}_T$) of $T$. Here, the corresponding child of each spouse in $\textbf{\emph{SP}}_T$ also needs to be recorded, which will be used in Phase 3.

Phase 2: To guarantee the accuracy when Label-causality Hypothesis is violated, Phase 2 retrieves the ignored variables whose information is equivalently included by two labels, which is the case described in the Theorem \ref{theo_notunder_hypo} (2). For each pair of labels where one is included by the MB of another (Line 6), Line 7 finds the \textbf{\emph{Z}} satisfying the condition in Theorem \ref{theo_notunder_hypo} (2), which is retrieved in Lines 8-9.

Phase 3: Find the variables containing equivalent information first and then discover the common and label-specific causal variables. Since independence tests with large-scale variable sets will be involved if we directly find the equivalent subsets from MB of each label as described in Theorem \ref{theo_under_hypo}, CLCD searches the common causal variables from parent-child set and spouse set, respectively. Thus, both of the equivalent causal variables of labels in $\textbf{\emph{T}}$ and variables in $\textbf{\emph{C}}$ are recorded to the $\textbf{\emph{EI}}$ of each corresponding variable (Lines 12-18) to make preparations for the discovery of common causal variables. According to Theorem \ref{theo_under_hypo}, the variables in subset $\textbf{\emph{Z}}$ are common causal variables if at least one of the three conditions in Eq. (\ref{aleq1}) are satisfied for each label in $\textbf{\emph{T}}$, which can be formalized as the logical operation in Eq. (\ref{aleq1}).
\begin{equation}
\label{aleq1}
\Theta_{\textbf{\emph{T}}}(\textbf{\emph{Z}}) = \bigwedge_{T\in\textbf{\emph{T}}}(\theta_1(\textbf{\emph{Z}},T) \vee \theta_2(\textbf{\emph{Z}},T) \vee \theta_3(\textbf{\emph{Z}},T))
\end{equation}
\begin{itemize}
\item $\theta_1(\textbf{\emph{Z}},T)=1$ when $\exists \textbf{\emph{Z}}_T\subset\textbf{\emph{MB}}_T$ s.t. $\textbf{\emph{Z}}=\textbf{\emph{Z}}_T$, and 0 otherwise.
\item $\theta_2(\textbf{\emph{Z}},T)=1$ when $\exists \textbf{\emph{Z}}_T\subset\textbf{\emph{PC}}_T$ s.t. $<\textbf{\emph{Z}},\textbf{\emph{Z}}_T>\in \textbf{\emph{EI}}_T$, and $0$ otherwise.
\item $\theta_3(\textbf{\emph{Z}},T)=1$ when $\exists \textbf{\emph{Z}}_T\subset\textbf{\emph{SP}}_T$ s.t. $<\textbf{\emph{Z}},\textbf{\emph{Z}}_T>\in \textbf{\emph{EI}}_C$, and $0$ otherwise, where $C$ is the common effect of $\textbf{\emph{Z}}_T$ and $T$.
\end{itemize}
Specifically, for a label $T$ and variable subset $\textbf{\emph{Z}}$, $\theta_1(\textbf{\emph{Z}},T)=1$ indicates that $\textbf{\emph{Z}}$ is a subset of the searched MB, and $\theta_2(\textbf{\emph{Z}},T)=1$ indicates that $\textbf{\emph{Z}}$ is equivalent with a subset of the searched PC set, and $\theta_3(\textbf{\emph{Z}},T)=1$ indicates that $\textbf{\emph{Z}}$ is equivalent with a subset of the searched SP set. $\textbf{\emph{Z}}$ satisfying one of the conditions contains critical causal information about this label, and thus variables in $\textbf{\emph{Z}}$ making $\Theta_{\textbf{\emph{T}}}(\textbf{\emph{Z}})=1$ are common causal variables of labels in $\textbf{\emph{T}}$. Thus, in Line 19, we obtain common causal variable set $\textbf{\emph{CCV}}_{\textbf{\emph{T}}_\textbf{\emph{S}}}$ of any label subset $\textbf{\emph{T}}_\textbf{\emph{S}}$ and label-specific causal variable set $\textbf{\emph{TCV}}_T$ for each label $T$.

\section{Applying CLCD to Multi-label Feature Selection}
\label{sec_fs}


To demonstrate the generality of CLCD proposed in Section \ref{secccf_alg}, we apply CLCD to multi-label feature selection problem. Compared with single-label feature selection, the additional introduced label correlations construct more complex relationships in multi-label data, including feature-feature, feature-label and label-label relationships. Although the label relationships are crucial, it is unreasonable to specially treat them as a more important information. Conversely, the ideal strategy is to consider the relationships of all variables in a unified framework. Therefore, in this section, we try to use CLCD to handle these various forms of complex relationships in consideration of its ability to map the complex relationships among features and labels to a causal graph model. And the process of constructing the skeleton of causal graph on multi-label data naturally takes all types of causal relationships into consideration, which can be easily ``read'' from the causal graph. In the following, the novel CLCD-driven multi-label Feature Selection (CLCD-FS) algorithm is present in Section \ref{sec_fsalg1}. Subsequently, the relevance and redundancy of CLCD-FS are analyzed in Section \ref{sec_fsalg2}, and the time complexity of CLCD-FS is analyzed in Section \ref{sec_timeanalysis}.

\subsection{CLCD-FS Algorithm}
\label{sec_fsalg1}

Pellet and Elisseeff have proved that, MB is the optimal solution for single-label feature selection problem under the faithfulness condition \cite{pellet2008using}, and the strongly relevant features are included in its MB set in terms of Kohavi-John feature relevance \cite{tsamardinos2003towards}. Thus, on each label, the independence property of MB indicates that the variable subset contains all of the predictive information about each corresponding label, while the minimality of it can guarantee the minimal redundancy in the variable set. As previously mentioned, in multi-label data, the union of MB sets can not be used directly as the selected feature subset due to the redundancy between MBs of different labels. While CLCD can be used to identify and select the common features simultaneously containing predictive information about several labels as many as possible to minimize the redundancy in the selected feature subset, which is just what Theorem \ref{theo_under_hypo} does, i.e., replacing the MB subset $\textbf{\emph{Z}}_i$ of multiple labels with an common equivalent feature subset $\textbf{\emph{Z}}$ and keeping the information constant.

It is worth mentioning that, using CLCD to directly search the common features of several labels might import some labels into the feature subset due to the case violating the Label-causality Hypothesis. Since labels are usually undetermined and thus can neither used as a factor to infer another label nor a feature to model an predictive learner (or classifier) in most of real-world multi-label applications, it is necessary to remove them and find other predictive features. However, it does not mean that we can directly learn the local causal structure in the variable set $\textbf{\emph{U}}$ seeing that if a pair of labels are the direct cause or effect of each other, the other direct causes of the effect will be ignored when searching MB in $\textbf{\emph{U}}$. To search the substitutes, we can remove the labels in the discovered MB set and continue to search the variables containing similar causal information until no label is included in the MB. Now, we present the CLCD-driven multi-label Feature Selection algorithm (CLCD-FS) in Algorithm \ref{algccfdfs}.

\begin{algorithm}[t]
        \caption{The \emph{CLCD-FS} Algorithm.}
        \label{algccfdfs}
        \begin{algorithmic}[1]
            \STATE{\textbf{Input:}} Label set $\textbf{\emph{T}}$ and features set $\textbf{\emph{U}}$; A divide-and-conquer-based MB discovery algorithm $\mathbb{A}$ with significance level $\alpha$.
            \STATE CLCD (Phase 1, Phase 2)
            \STATE\textbf{for} each $T\in\textbf{\emph{T}}$ \textbf{do}
            \STATE\ \ \ \textbf{repeat}
            \STATE\ \ \ \ \ \ $\textbf{\emph{PC}}_T\leftarrow \textbf{\emph{PC}}_T-\textbf{\emph{T}}$
            \STATE\ \ \ \ \ \ $\textbf{\emph{PC}}_T\leftarrow \textbf{\emph{PC}}_T\cup \{X| X\in \bigcup_{T_i\in\textbf{\emph{PC}}_T\cap\textbf{\emph{T}}}\textbf{\emph{PC}}_i-(\textbf{\emph{PC}}_T\cap$ \\ \ \ \ \ \ \ $\textbf{\emph{T}})$ and $X\notperp T|\textbf{\emph{Z}}$ for $\forall \textbf{\emph{Z}}\subset\textbf{\emph{PC}}_T\}$
            \STATE\ \ \ \textbf{until} $\textbf{\emph{PC}}_T\cap\textbf{\emph{T}}=\varnothing$
            \STATE\textbf{end for}
            \STATE CLCD (Phase 3: Lines 12 - 18)
            \STATE\textbf{repeat}
            \STATE\ \ \ Select $\textbf{\emph{Z}}$ to $\textbf{\emph{CF}}$ where $\Theta_{\textbf{\emph{T}}_S}(\textbf{\emph{Z}})=1$ for the most large-\\ \ \ \ scale $|\textbf{\emph{T}}_S|$ ($\textbf{\emph{T}}_S\subset \textbf{\emph{T}}$).
            \STATE\ \ \ $\textbf{\emph{PC}}_T\leftarrow \textbf{\emph{PC}}_T-\textbf{\emph{Z}}_T$, $\textbf{\emph{SP}}_T\leftarrow \textbf{\emph{SP}}_T-\textbf{\emph{Z}}_T$ for each $T$.
            \STATE\textbf{until} for $\forall \textbf{\emph{Z}}$, $\Theta_{\textbf{\emph{T}}_S}(\textbf{\emph{Z}})\neq 1$ for all $|\textbf{\emph{T}}_S|>1$.
            \STATE{\textbf{Output:}} Common features $\textbf{\emph{CF}}$, and label-specific features $\textbf{\emph{PC}}_T\cup\textbf{\emph{SP}}_T$ for each $T$.
        \end{algorithmic}
    \end{algorithm}

CLCD-FS finds an MB set of each label to detect the equivalent features for each label, so it inherits the Phase 1 and Phase 2 of CLCD. We explain the additional components of Algorithm \ref{algccfdfs} below:

(1) Lines 3-8: Find the predictive common features shielded by label causality. Through removing the labels from the current $\textbf{\emph{PC}}_T$ set, the information loss about label needs to be supplied with the features in $\textbf{\emph{PC}}$ of each removed label. Thus, in Line 6, $X$ is traversed from $\bigcup_{T_i\in\textbf{\emph{PC}}_T\cap\textbf{\emph{T}}}\textbf{\emph{PC}}_i-(\textbf{\emph{PC}}_T\cap\textbf{\emph{T}})$. Since the $X$ could be a label, Lines 5-6 might be iterated several times.

(2) Lines 10-13: Search the common features and label-specific features. To minimize the redundancy as previously discussed, Line 11 finds the common feature subset $\textbf{\emph{Z}}$ containing information about as many labels as possible, which satisfies the three rules in Eq. (\ref{aleq1}). At the same time, CLCD-FS can record the relationships between selected features and each label, i.e., ``which labels does a selected feature in $\textbf{\emph{Z}}$ relate to". Then, the corresponding $\textbf{\emph{Z}}_T$ needs to be removed from the $\textbf{\emph{PC}}_T$ or $\textbf{\emph{SP}}_T$ to guarantee no redundancy about the same label. The above process is iterated until there are no features containing information about multiple labels ($|\textbf{\emph{T}}_S|>1$). The remaining features in $\textbf{\emph{PC}}_T$ and $\textbf{\emph{SP}}_T$ are label-specific features of their corresponding label $T$.

Compared with traditional multi-label feature selection algorithms, the superiority of CLCD-FS is reflected in three aspects: (1) Interpretability: CLCD-FS not only selects predictive features but also interprets which labels a select feature influences, i.e., identifies the common features and label-specific features; (2) Practicability: CLCD-FS automatically determines the number of selected features without training an additional classifier to achieve the optimal accuracy; (3) Theoretical Reliability: It can be proved that CLCD-FS achieves maximum relevance and minimal redundancy.

In the following subsection, we will give the theoretical analyses of relevance and redundancy.

\subsection{Analyses of Relevance and Redundancy}
\label{sec_fsalg2}

\subsubsection{\textbf{Relevance}}

We prove that using $\textbf{\emph{Z}}$ to replace the $\textbf{\emph{Z}}_i$ for each $T_i\in\textbf{\emph{T}}$, the obtained feature subset $\bigcup_{T_i\in\textbf{\emph{T}}} (\textbf{\emph{MB}}_i-\textbf{\emph{Z}}_i)\cup\textbf{\emph{Z}}-\textbf{\emph{T}}$\footnote{We use $\bigcup_{T_i\in\textbf{\emph{T}}} (\textbf{\emph{MB}}_i-\textbf{\emph{Z}}_i)\cup\textbf{\emph{Z}}-\textbf{\emph{T}}$ instead of $\bigcup_{T_i\in\textbf{\emph{T}}} (\textbf{\emph{MB}}_i-\textbf{\emph{Z}}_i)\cup\textbf{\emph{Z}}$ as in Theorem \ref{theo_under_hypo} since any label cannot be used to predict another label in the feature selection problem.} contains the same information as \textbf{\emph{U}} about $\textbf{\emph{T}}$. Mathematically in other words, all features excluded by $\bigcup_{T_i\in\textbf{\emph{T}}} (\textbf{\emph{MB}}_i-\textbf{\emph{Z}}_i)\cup\textbf{\emph{Z}}-\textbf{\emph{T}}$ are independent of $\textbf{\emph{T}}$ conditioned on $\bigcup_{T_i\in\textbf{\emph{T}}} (\textbf{\emph{MB}}_i-\textbf{\emph{Z}}_i)\cup\textbf{\emph{Z}}-\textbf{\emph{T}}$. It is sufficient to prove the case with $\textbf{\emph{T}}=\{T_i,T_j\}$ since any multi-label case is a direct consequence of two-label case using induction on the number of variables involved in $\textbf{\emph{T}}$. According to Eq. (\ref{tuidao1}), $\textbf{\emph{MB}}_i-\textbf{\emph{Z}}_i\cup\textbf{\emph{Z}}$ is a Markov blanket of $T_i$, denoted as $\textbf{\emph{M}}_i$. Thus,
\begin{equation}
\label{releaseeq3}
T_i\perp\textbf{\emph{U}}-\textbf{\emph{M}}_i\cup \{T_j\}|\textbf{\emph{M}}_i.
\end{equation}
Decompose the $\textbf{\emph{U}}-\textbf{\emph{M}}_i\cup \{T_j\}$ in Eq. (\ref{releaseeq3}) as:
\begin{equation}
\label{releaseeq4}
\begin{split}
\textbf{\emph{U}}-\textbf{\emph{M}}_i\cup \{T_j\}= &(\textbf{\emph{U}}-\textbf{\emph{M}}_i-\textbf{\emph{M}}_j)\cup \\
&(\textbf{\emph{M}}_j-\textbf{\emph{M}}_i-\{T_i\}\cup\{T_j\}).
\end{split}
\end{equation}
According to the Weak union property in Theorem \ref{common_property}, we have:
\begin{equation}
\label{releaseeq5}
\begin{split}
T_i\perp(\textbf{\emph{U}}-\textbf{\emph{M}}_i-\textbf{\emph{M}}_j)|(\textbf{\emph{M}}_j-\{T_i\}\cup\{T_j\}).
\end{split}
\end{equation}
Due to the symmetry between $T_i$ and $T_j$, a similar relationship will exist:
\begin{equation}
\label{releaseeq6}
\begin{split}
T_j\perp(\textbf{\emph{U}}-\textbf{\emph{M}}_i-\textbf{\emph{M}}_j)|(\textbf{\emph{M}}_i-\{T_j\}\cup\{T_i\}).
\end{split}
\end{equation}
According to Theorem \ref{common_property}, we combine the Eq. (\ref{releaseeq5}) and Eq. (\ref{releaseeq6}), and obtain:
\begin{equation}
\label{releaseeq7}
\begin{split}
&\textbf{\emph{U}}-(\textbf{\emph{M}}_i\cup\textbf{\emph{M}}_j-\{T_i,T_j\})\perp\{T_i,T_j\} \\
&|\textbf{\emph{M}}_i\cup\textbf{\emph{M}}_j-\{T_i,T_j\}.
\end{split}
\end{equation}
Thus, $I(\textbf{\emph{T}},\textbf{\emph{U}})=I(\textbf{\emph{T}},\textbf{\emph{M}}_i\cup\textbf{\emph{M}}_j-\{T_i,T_j\})$, which means that the selected feature subset of CLCD-FS contains all information about the labels and achieves the maximum relevance among the subsets of feature sets.

\subsubsection{\textbf{Redundancy}}

We continue to prove under the case $\textbf{\emph{T}}=\{T_i,T_j\}$. Assume that there exists a subset of $\textbf{\emph{S}}\subset\bigcup_{T_i\in\textbf{\emph{T}}} (\textbf{\emph{MB}}_i-\textbf{\emph{Z}}_i)\cup\textbf{\emph{Z}}-\textbf{\emph{T}}$ such that it also contain the same information as \textbf{\emph{U}} about $\textbf{\emph{T}}$, then:
\begin{equation}
\label{redundancy1}
T_i \perp \textbf{\emph{U}}-\textbf{\emph{S}}|\textbf{\emph{S}}
\end{equation}
We construct a subset of $\textbf{\emph{M}}_i$, $\textbf{\emph{A}}=\textbf{\emph{M}}_i\cap\{T_j\}\cup\textbf{\emph{S}}$, to assist the analysis. Obviously, $\textbf{\emph{M}}_i$ can be written as the union of two sets $(\textbf{\emph{M}}_i-\textbf{\emph{A}})\cup(\textbf{\emph{M}}_i\cap\textbf{\emph{A}})$. Thus, we have:
\begin{equation}
\label{redundancy2}
T_i \perp \textbf{\emph{U}}-\textbf{\emph{M}}_i-\{T_i\} | (\textbf{\emph{M}}_i-\textbf{\emph{A}})\cup(\textbf{\emph{M}}_i\cap\textbf{\emph{A}}).
\end{equation}
According to Eq. (\ref{redundancy1}), extend the $\textbf{\emph{S}}$ as a more large-scale Mb $\textbf{\emph{S}}\cup \{T_j\} \cup (\textbf{\emph{U}}-\textbf{\emph{M}}_i-\{T_i\})$, which is equivalent with $(\textbf{\emph{M}}_i\cap\textbf{\emph{A}}) \cup (\textbf{\emph{U}}-\textbf{\emph{M}}_i-\{T_i\})$. Then, we have
\begin{equation}
\label{redundancy3}
T_i \perp \textbf{\emph{M}}_i-\textbf{\emph{A}} | (\textbf{\emph{M}}_i\cap\textbf{\emph{A}}) \cup (\textbf{\emph{U}}-\textbf{\emph{M}}_i-\{T_i\}).
\end{equation}
If the Intersection property in Theorem \ref{common_property} is satisfied here, then Eq. (\ref{redundancy2}) and Eq. (\ref{redundancy3}) indicate that:
\begin{equation}
\begin{split}
\label{redundancy4}
& T_i \perp (\textbf{\emph{M}}_i-\textbf{\emph{A}}) \cup (\textbf{\emph{U}}-\textbf{\emph{M}}_i-\{T_i\}) | \textbf{\emph{M}}_i\cap\textbf{\emph{A}} \\
\Rightarrow \  & T_i \perp \textbf{\emph{U}}-(\textbf{\emph{M}}_i\cap\textbf{\emph{A}})-\{T_i\} | \textbf{\emph{M}}_i\cap\textbf{\emph{A}}.
\end{split}
\end{equation}
Thus, $\textbf{\emph{M}}_i\cap\textbf{\emph{A}}$ is an Mb of $T_i$. However, $\textbf{\emph{M}}_i$ is an MB of $T_i$, thus, $\textbf{\emph{M}}_i\cap\textbf{\emph{A}}=\textbf{\emph{M}}_i$. Also, $\textbf{\emph{M}}_i\subset\textbf{\emph{M}}_i\cap\textbf{\emph{A}}$, i.e., $(\textbf{\emph{M}}_i-\{T_j\})\cup(\textbf{\emph{M}}_i\cap\{T_j\}) \subset \textbf{\emph{S}}\cup(\textbf{\emph{M}}_i\cap\{T_j\})$. Hence, $\textbf{\emph{M}}_i-\{T_j\} \subset \textbf{\emph{S}}$. Similarly, $\textbf{\emph{M}}_j-\{T_i\} \subset \textbf{\emph{S}}$. Since $\textbf{\emph{S}}$ is a subset of $(\textbf{\emph{M}}_i-\{T_j\})\cup(\textbf{\emph{M}}_j-\{T_i\})$, the above three equations indicate that $\textbf{\emph{S}}=(\textbf{\emph{M}}_i-\{T_j\})\cup(\textbf{\emph{M}}_j-\{T_i\})$.

In conclusion, if the Intersection property is satisfied, no redundancy exists in the $\bigcup_{T_i\in\textbf{\emph{T}}} (\textbf{\emph{MB}}_i-\textbf{\emph{Z}}_i)\cup\textbf{\emph{Z}}-\textbf{\emph{T}}$. While if the Intersection property is violated for Eq. (\ref{redundancy2}) and Eq. (\ref{redundancy3}), then we can assert that $\textbf{\emph{U}}-\textbf{\emph{M}}_i-\{T_i\}$ and $\textbf{\emph{M}}_i-\textbf{\emph{A}}$ contain equivalent information about $T_i$ and there might exist redundancy in $\bigcup_{T_i\in\textbf{\emph{T}}} (\textbf{\emph{MB}}_i-\textbf{\emph{Z}}_i)\cup\textbf{\emph{Z}}-\textbf{\emph{T}}$. We give an example to explain the redundancy brought by equivalent information. Assume that feature subsets $\{A,B\}$ and $\{C,D\}$ are equally effective to predict label $T_1$ since they contain equivalent information about $T_1$, but only $\{C,D\}$ can be used to predict $T_2$. Then, in $\{A,B,C,D\}$, there exists redundancy between $\{A,B\}$ and $\{C,D\}$, which could be reduced by removing $\{A,B\}$. The proposed CLCD-FS algorithm tries to detect the features containing equivalent information, so the minimal redundancy is guaranteed in the selected feature subsets.

\subsection{Time Complexity Analysis}
\label{sec_timeanalysis}

Finally, we provide time complexity analysis as follows. The computational cost of the causality-based algorithms is measured via the number of CI-tests. Let $|*|$ denote the scale of variable set $*$ and $p$ denote the largest scale of the parent-child set of any label. For Phase 1 in CLCD, the time complexity of the MB discovery process of any label is less than $O(2^pp|\emph{\textbf{U}}|)$, and thus the time complexity of Phase 1 is $O(2^pp|\textbf{\emph{U}}||\textbf{\emph{T}}|)$. For Phase 2, there are less than ${\rm C}_{|\textbf{\emph{T}}|}^2$ pairs of labels connecting with each other and the actual operation for each pair is to traverse the pairwise dependence. Thus, the time complexity of Phase 2 is $O(2^p|\textbf{\emph{U}}||\textbf{\emph{T}}|^2)$. Let the scale of child set of labels be $c$ and the largest scale of $\textbf{\emph{Z}}$ in Phase 3 (Line 13) be $z$, then the computational cost is $O(2^p|\textbf{\emph{U}}|^z(|\textbf{\emph{T}}|+c))$. Normally, if only the pairwise dependencies are considered, $z$ is set to 1, as followed by existing causality-based methods. The extra processes in CLCD-FS possess lower time complexity. Let $m=max\{|\textbf{\emph{T}}|p,|\textbf{\emph{T}}|^2,|\textbf{\emph{T}}|+c\}$, then the time complexity of CLCD and CLCD-FS is $O(2^p|\textbf{\emph{U}}|m)$. For better performance, $z$ could be set higher so that multivariate dependence could be considered. Under these circumstances, increase in running time is actually not obvious. The main reason is that, the test results of large-scale $\textbf{\emph{Z}}$ and small-scale $\textbf{\emph{Z}}$ could be used to derive each other. For example, if $\textbf{\emph{Z}}\perp X$, then any subsets $\textbf{\emph{Z}}'\subset \textbf{\emph{Z}}$ satisfy $\textbf{\emph{Z}}'\perp X$, and the converse proposition could also simplify the computational process.

\section{Experiments}
\label{exsec}

We first verify the effectiveness of CLCD on synthetic data sets with foregone causality in Section \ref{ex1}, by comparing precision, recall and time efficiency. Afterwards, the multi-label feature selection experiments are conducted on real-world data sets in Section \ref{ex2}, to demonstrate the superiority of the proposed CLCD-FS against traditional algorithms and MB-MCF. We further present the relationships between labels and selected features on Emotions data set in Section \ref{ex3}, to demonstrate the interpretability of CLCD-FS.

\subsection{Learn Common and Label-specific Causal Variables: Precision, Recall, and Time Efficiency}
\label{ex1}

In this section, we present an evaluation of CLCD for identification of common and label-specific variables in simulated data. The data sets are sampled from synthetic Bayesian networks with simulation method presented in \cite{statnikov2010tied}. Simulated data allow us to evaluate methods in a controlled setting where the underlying causal process and all causal variables of each label are exactly known. Detailed experiment settings are presented below.

\begin{table}[t]
		\caption{Experiment Parameters}%
		\centering
		\resizebox {2.5in }{!}{\setlength{\tabcolsep}{0.035in}
			\centering
			\begin{tabular}{cc}
				\hline
				\ \ \ \ Parameters \ \ \ \ & \ \ Settings \ \  \\ \hline
				\rule{0pt}{0.28cm}The number of labels & 50   \\ \hline
				\rule{0pt}{0.28cm}\ \ \ \ The number of non-label variables \ \ \ \ & 1000   \\ \hline
				\rule{0pt}{0.28cm}The number of training samples & 5000 \\ \hline
				\rule{0pt}{0.28cm}The number of MBs of each label & $\in$[1,15] \\ \hline
                \rule{0pt}{0.28cm}The size of an MB of each label & $\in$[5,15] \\ \hline
				\end{tabular}
		}
		\label{app_tb1}
	\end{table}

\begin{table*}[t]
		\caption{Average precision and recall of searched common and label-specific variables with respect to the percentage of the labels that have multiple MBs.}%
		\centering
		\resizebox {5.5in }{!}{\setlength{\tabcolsep}{0.1in}
	\begin{tabular}{ccccccccc}
		\hline
		Metric & $p_c$ & $p_m$ & $\cap$IAMB & $\cap$HITON-MB & $\cap$CCMB & $\cap$KIAMB & $\cap$TIE* & CLCD \\
		\hline
\hline
		\multirow{3}*{Precision} & \multirow{3}*{$p_c = 0.5$} & $p_m = 0$ & 0.745 & 0.919 & 0.792  & 0.415 & 0.746 & \textbf{0.915}\\
		\cline{3-9}
		~ & ~ & $p_m = 0.5$ & 0.413 & 0.579 & 0.567  & 0.612 & 0.759 & \textbf{0.909}\\
		\cline{3-9}
		~ & ~ & $p_m = 1$ & 0.192 & 0.315 & 0.287  & 0.697 & 0.787 & \textbf{0.906}\\
		\cline{2-9}
        \hline
		\multirow{3}*{Recall} & \multirow{3}*{$p_c = 0.5$} & $p_m = 0$ & 0.659 & 0.958 & 0.979  & 0.625 & 0.912 & \textbf{0.979}\\
		\cline{3-9}
		~ & ~ & $p_m = 0.5$ & 0.216 & 0.305 & 0.312  & 0.679 & 0.915 & \textbf{0.973}\\
		\cline{3-9}
		~ & ~ & $p_m = 1$ & 0.113 & 0.152 & 0.198  & 0.713 & 0.903 & \textbf{0.981}\\
		\cline{2-9}
		\hline
\hline
		\multicolumn{3}{c}{Average Time ($lg(Time)$)}& \textbf{0.473} & 2.295 & 2.874  & 1.629 & 5.672 & 2.871\\
		\hline
\end{tabular}
}
    \label{app_tb_group1}
\end{table*}

\begin{table*}[t]
		\caption{Average precision and recall of searched common and label-specific variables with respect to the percentage of the labels that have direct causal relationships with each other.}%
		\centering
		\resizebox {5.50in }{!}{\setlength{\tabcolsep}{0.1in}
	\begin{tabular}{ccccccccc}
		\hline
		Metric & $p_c$ & $p_m$ & $\cap$IAMB & $\cap$HITON-MB & $\cap$CCMB & $\cap$KIAMB & $\cap$TIE* & CLCD \\
		\hline
\hline
		
		\multirow{3}*{Precision} & $p_c = 0$  & \multirow{3}*{$p_m = 0.5$} & 0.452 & 0.583 & 0.581  & 0.672 & 0.771 & \textbf{0.915}\\
		\cline{4-9}\cline{2-2}
		~ & $p_c = 0.5$ & ~ & 0.413 & 0.579 & 0.567  & 0.612 & 0.759 & \textbf{0.909}\\
		\cline{4-9}\cline{2-2}
		~ & $p_c = 1$ & ~ & 0.394 & 0.560 & 0.551 & 0.654  & 0.715 & \textbf{0.910} \\
		\cline{1-9}
        \multirow{3}*{Recall} & $p_c = 0$  & \multirow{3}*{$p_m = 0.5$} & 0.237 & 0.325 & 0.346  & 0.631 & 0.923 & \textbf{0.970}\\
		\cline{4-9}\cline{2-2}
		~ & $p_c = 0.5$ & ~ & 0.216 & 0.305 & 0.312  & 0.679 & 0.915 & \textbf{0.973}\\
		\cline{4-9}\cline{2-2}
		~ & $p_c = 1$ & ~ & 0.191 & 0.286 & 0.307  & 0.677 & 0.877 & \textbf{0.965}\\
		\cline{1-9}
\hline
\hline
		\multicolumn{3}{c}{Average Time ($lg(Time)$)}& \textbf{0.462} & 2.131 & 2.559  & 1.503 & 5.379 & 2.812\\
		\hline
	\end{tabular}
}
    \label{app_tb_group2}
\end{table*}

\textbf{Experiment parameters on synthetic data}: To validate CLCD and corresponding theory in this paper, each data set is setup with different controlled parameters: (1) percentage of the labels that have direct causal relationships with each other ($p_c$); (2) percentage of the labels that have multiple MBs ($p_m$). The remaining settings to simulate a Bayesian network, are same in all experiment groups, which are given in the Table \ref{app_tb1}. For each label, we randomly choose 5-10 non-label variables and labels (their proportions are determined by $p_c$) as the MB. Among these labels, $p_m$ of them have 5-10 equivalent MBs, which are induced by the probability distribution with equivalent information. Specifically, if variables $X$ and $Y$ contain equivalent information about $T$, then (a) for each combination of values of $X$ and $T$ such that $P(T = t |X = x) = p$, there exists a value $y$ of variable $Y$ such that $P(T = t | Y = y) = p$, and (b) for every combination of values of $Y$ and $T$ such that $P(T = t | Y = y) = p$, there exists a value $x$ of variable $X$ such that $P(T = t | X = x) = p$.

\textbf{Comparing algorithms}\footnote{Codes are collected in: http://home.ustc.edu.cn/$\sim$xingyuwu/Cau salFS.zip}: Since there are no algorithms for identification of common and label-specific causal variables, we deploy existing MB discovery algorithms to search the causal variables for different labels first and then take the intersection of MB sets of different labels as the common causal variables, and the remaining variables as the label-specific causal variables. Among extensive causal variable learning algorithms, we choose several representative algorithms from each type, including three single MB discovery algorithms (a simultaneous MB learning algorithm IAMB \cite{iamb}, two divide-and-conquer MB learning algorithms HITON-MB \cite{hiton} and CCMB \cite{ccmb}) and two multiple MB discovery algorithms (KIAMB \cite{pcmb} and TIE* \cite{statnikov2013algorithms}). The characteristics of these types of algorithms are detailed in Section \ref{relatedwork}. The value of $k$ in KIAMB is set to 10 (the average number of MBs). The MB discovery algorithm in CLCD is HITON-MB \cite{hiton} and the parameter in its $G^2$-test \cite{pearl1988} is set to 0.05.

\textbf{Metrics for Evaluation}: The frequently used metrics $Precision$ and $Recall$ are adapted to measure the accuracy of the searched common and label-specific causal variables. $Precision$ is the fraction of retrieved true positives over the total amount of retrieved variables, and $Recall$ is the fraction of retrieved true positives over the total amount of true positives. Mathematically,
\begin{equation}
\label{precision}
Precision=\frac{TP}{TP+FP}.
\end{equation}
\begin{equation}
\label{recall}
Recall=\frac{TP}{TP+FN}.
\end{equation}
where $TP$, $FP$ and $FN$ denote the number of true positives, false positives and false negatives, respectively.
The two metrics are calculated on each type of causal variables, and the average results of the two types are taken as the performance. The results are shown in the Table \ref{app_tb_group1} and Table \ref{app_tb_group2}. Furthermore, the logarithmic CPU time is recorded to compute the time efficiency.

\textbf{Performance Comparison}: Table \ref{app_tb_group1} and Table \ref{app_tb_group2} provide the average precision and recall of identification of common and label-specific causal variables. Each group keeps one of the $p_c$ and $p_m$ invariant and changes the other, to show the performance of CLCD and other comparing algorithms in different cases. We conclude from these results that CLCD constantly performs better than others in the cases satisfying ($p_c=0$) and violating ($p_c\neq0$) the Label-causality Hypothesis. Specifically for different comparing algorithms: (1) Single MB discovery algorithms IAMB, HITON-MB and CCMB achieve lower recall but relatively higher precision with $p_m>0$, which means that they fail to identify the two types of variables in the data sets due to inability to solve the case with multiple MBs for some labels. (2) KIAMB uses randomized strategy to discover multiple MBs, and thus it has unstable performance on both precision and recall, and can only capture part of these two types of variables although it is efficient. (3) TIE* has better precision and recall compared with other algorithms. However it is computationally expensive. Like CLCD, TIE* also considers the common causal variables with equivalent information, whereas it tries to retrieve all MBs of each label at first step, resulting in high computation time and statically low reliability. Therefore, CLCD can be considered as the first algorithm targeting to distinguish between common and label specific variables with reasonable time complexity.

\subsection{Multi-label Feature Selection: Accuracy}
\label{ex2}

To demonstrate the performance of the extended CLCD-FS for multi-label feature selection problem, in this subsection, five state-of-the-art multi-label feature selection algorithms are compared with four frequently-used metrics. Details of these experiments are given as follows.

\textbf{Multi-label Data sets}\footnote{Data Source: http://mulan.sourceforge.net/datasets-mlc.html}: The six data sets are taken from diverse application domains. The domains and standard statistics are provided in Table \ref{datasets}. $Cardinality$ denotes the average number of labels for per instance, and $density$ normalizes the label cardinality by the number of labels.

\textbf{Comparing algorithms}\footnote{Codes are collected in: http://home.ustc.edu.cn/$\sim$xingyuwu/Trad itional-Multi-label-FS.zip}: To validate the performance of CLCD-FS, six state-of-the-art multi-label feature selection algorithms are compared, including SFUS \cite{sfus}, CSFS \cite{csfs}, MIFS \cite{mifs}, CMFS \cite{cmfs}, MCLS \cite{mcls}, and the previously proposed MB-MCF \cite{wu2020multi}. These recently proposed algorithms reflect the effectiveness of multi-label feature selection from different perspectives (or metrics). To evaluate the effectiveness of proposed methods, we use the binary classifier SVM cooperating with the multi-label classification model BR \cite{zhang2007ml} to decompose a multi-label problem into several independent binary problems first and compute their classification accuracies archived by selected features. The main consideration is that BR does not involve the label correlations, which could more clearly demonstrate the strengths of these compared algorithms in terms of addressing complex relationships in multi-label data. Additionally, since CLCD-FS and MB-MCF measure the importance of features through uncovering the causal mechanisms rather than calculating the correlations, \textbf{CLCD-FS and MB-MCF do not need to predetermine the number of selected features}, as shown in the Fig. \ref{expic1}.

\begin{figure*}
\centering%
\subfigure[]{ \centering
    \label{Birds_HL}
    \includegraphics[height=1.3in,width=1.7in]{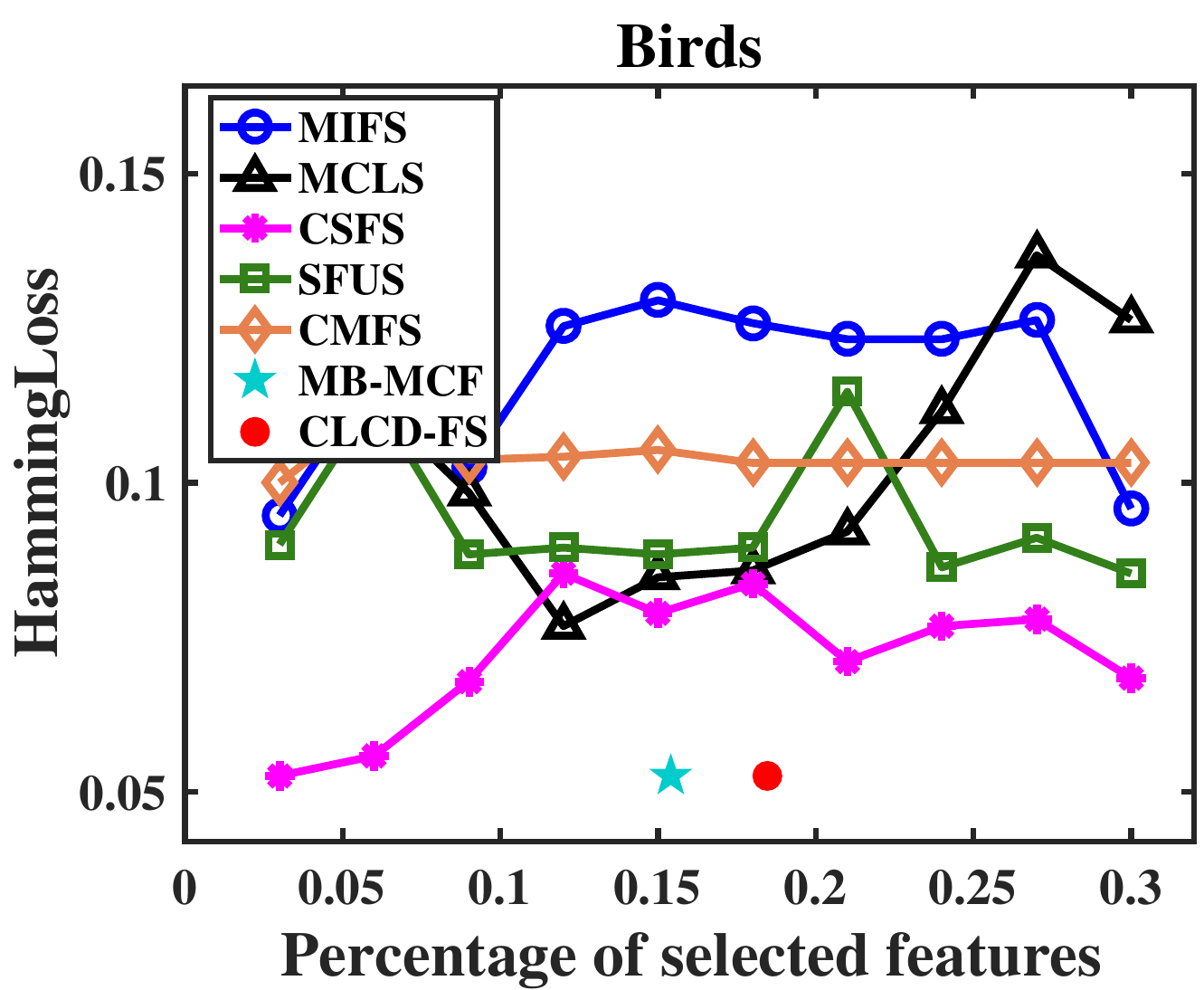}}
\subfigure[]{ \centering
    \label{Birds_RL}
    \includegraphics[height=1.3in,width=1.7in]{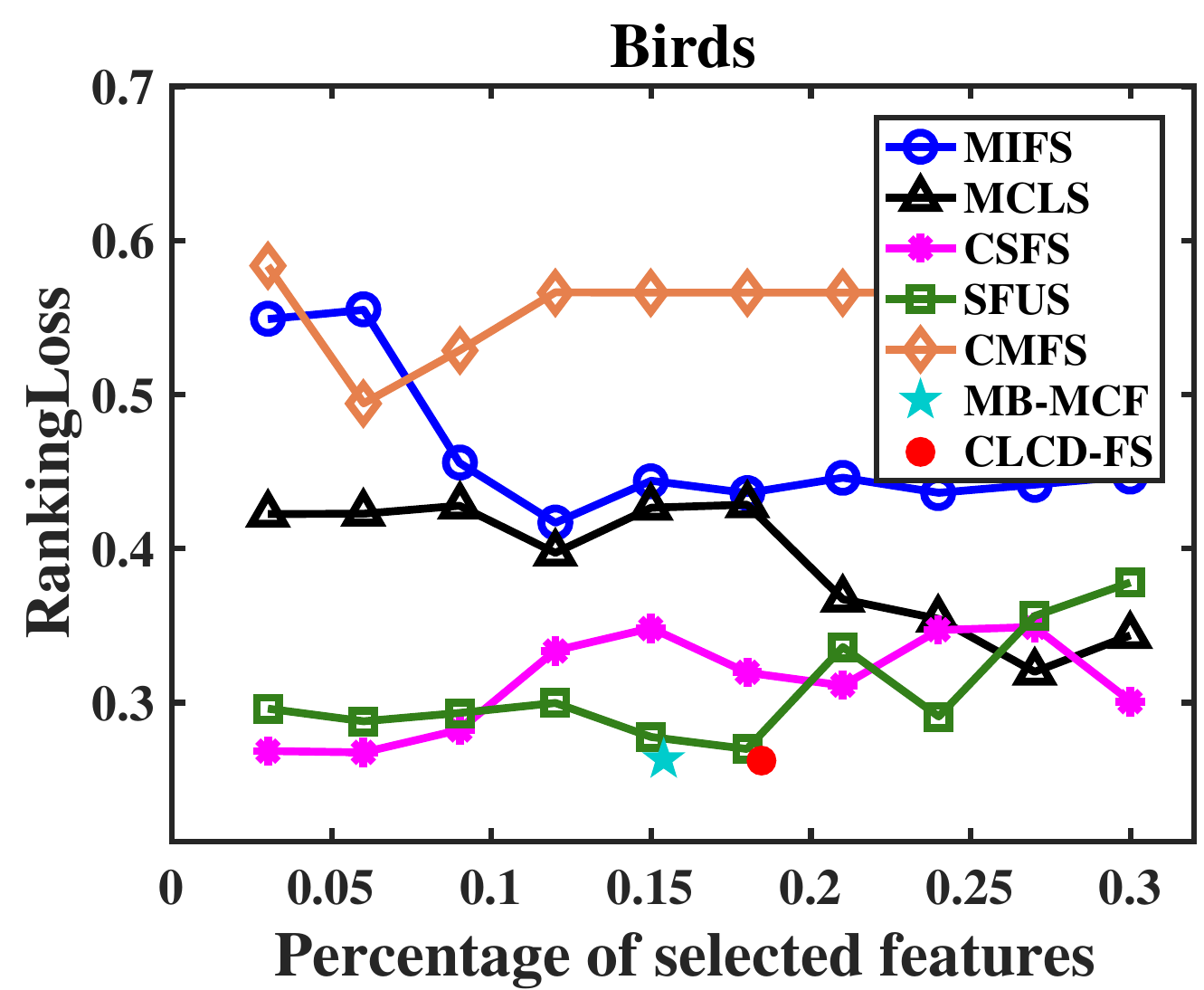}}
\subfigure[]{ \centering
    \label{Birds_Ma}
    \includegraphics[height=1.3in,width=1.7in]{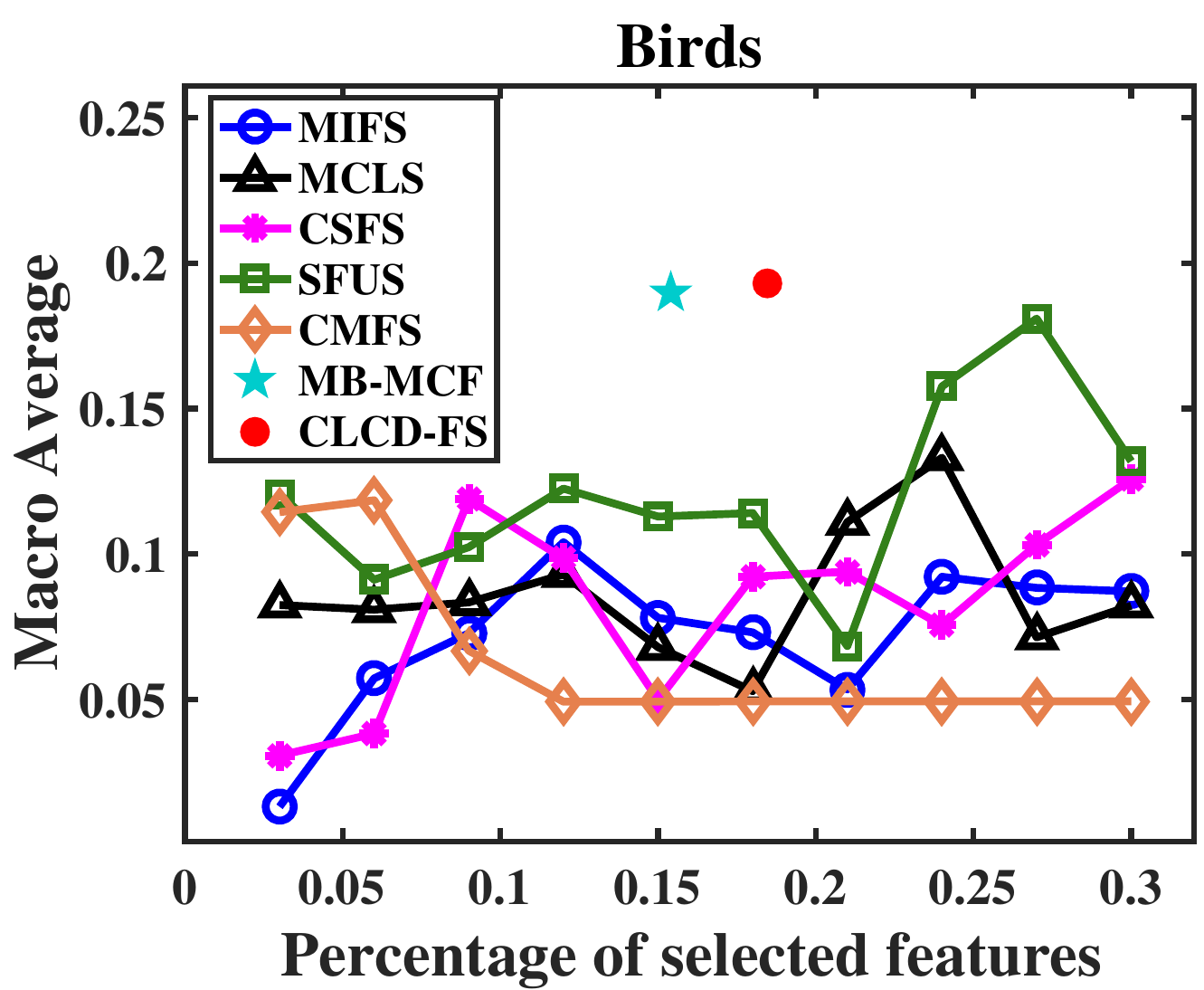}}
\subfigure[]{ \centering
    \label{Birds_Mi}
    \includegraphics[height=1.3in,width=1.7in]{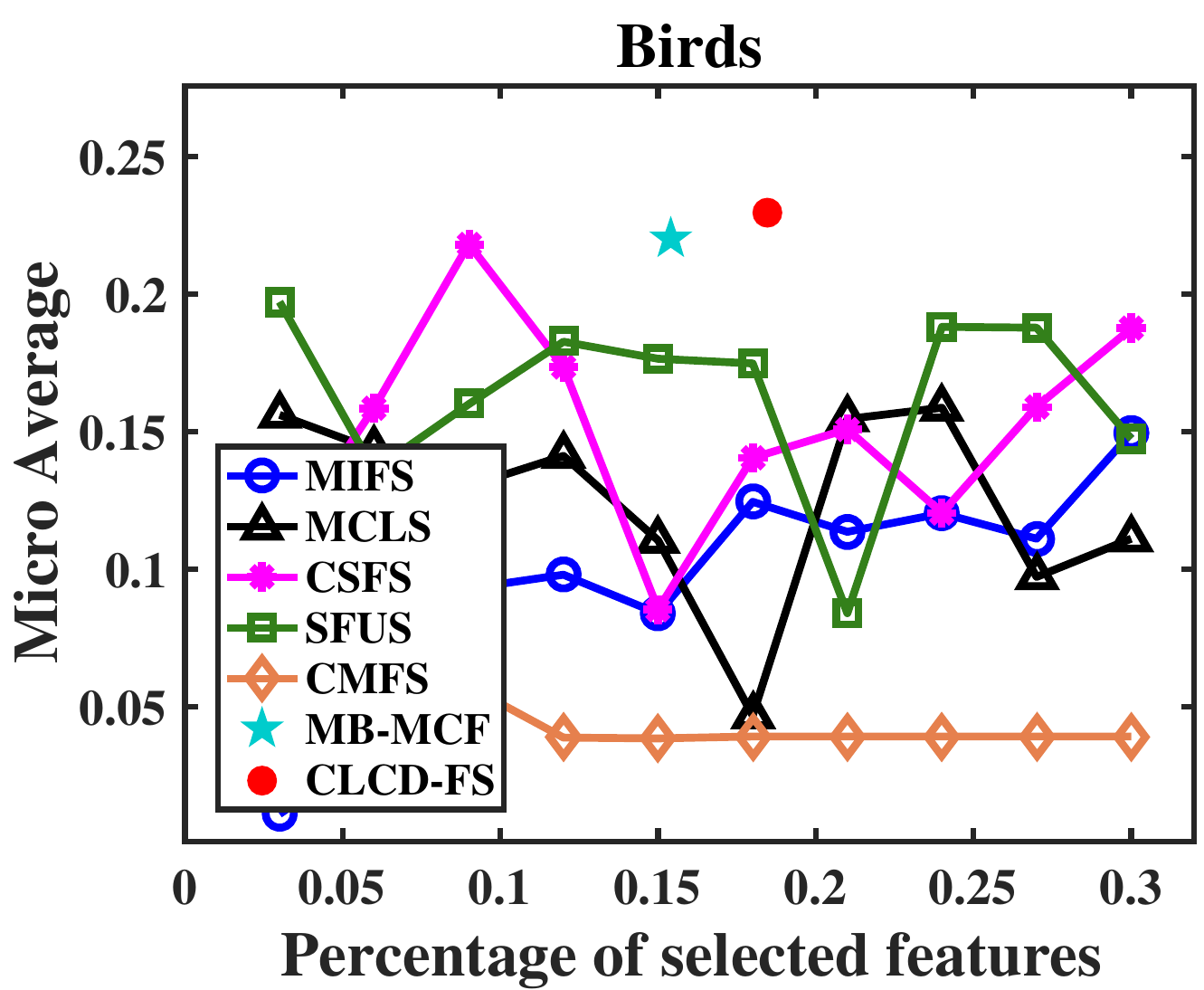}}
\subfigure[]{ \centering
    \label{CAL500_HL}
    \includegraphics[height=1.3in,width=1.7in]{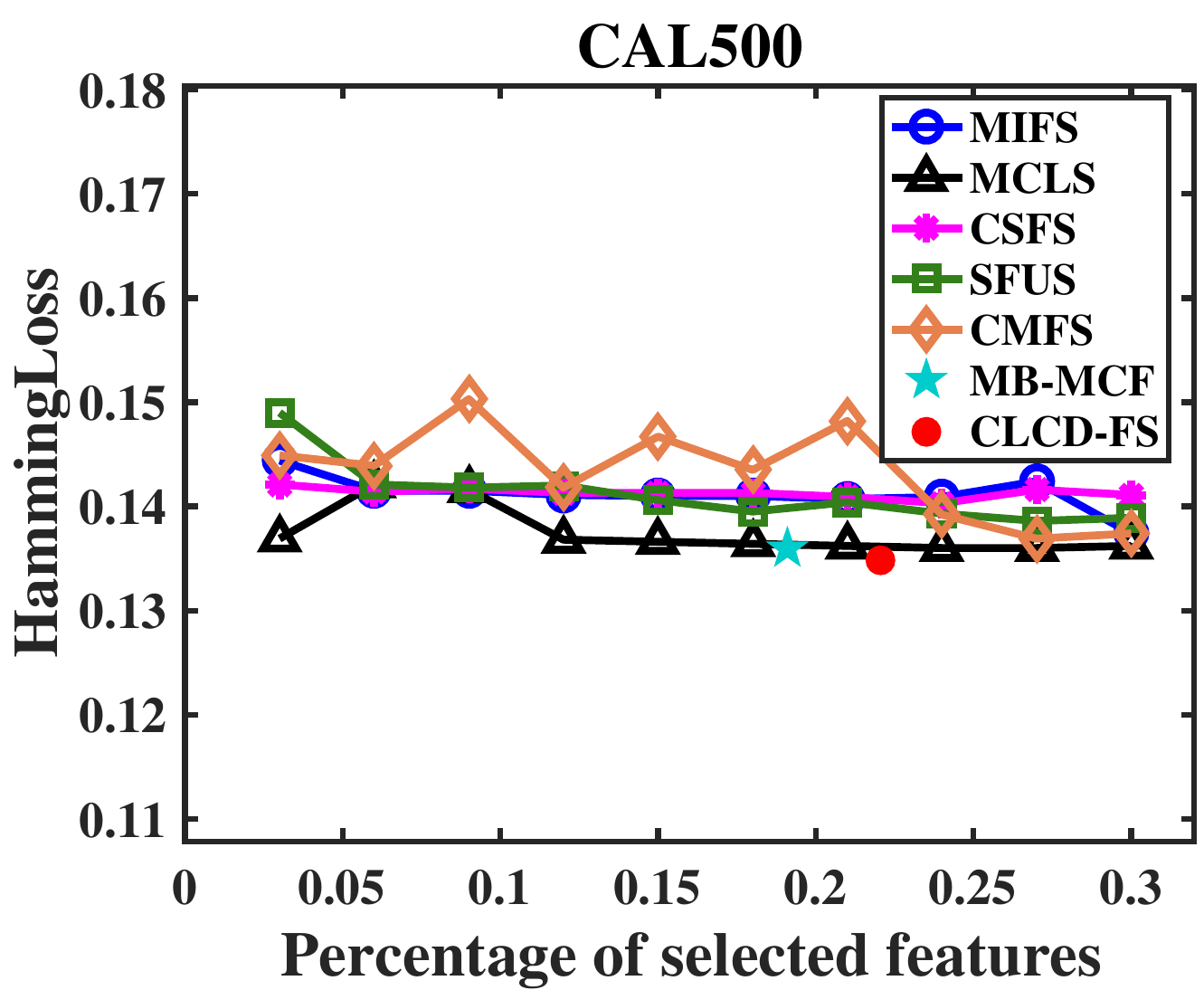}}
\subfigure[]{ \centering
    \label{CAL500_RL}
    \includegraphics[height=1.3in,width=1.7in]{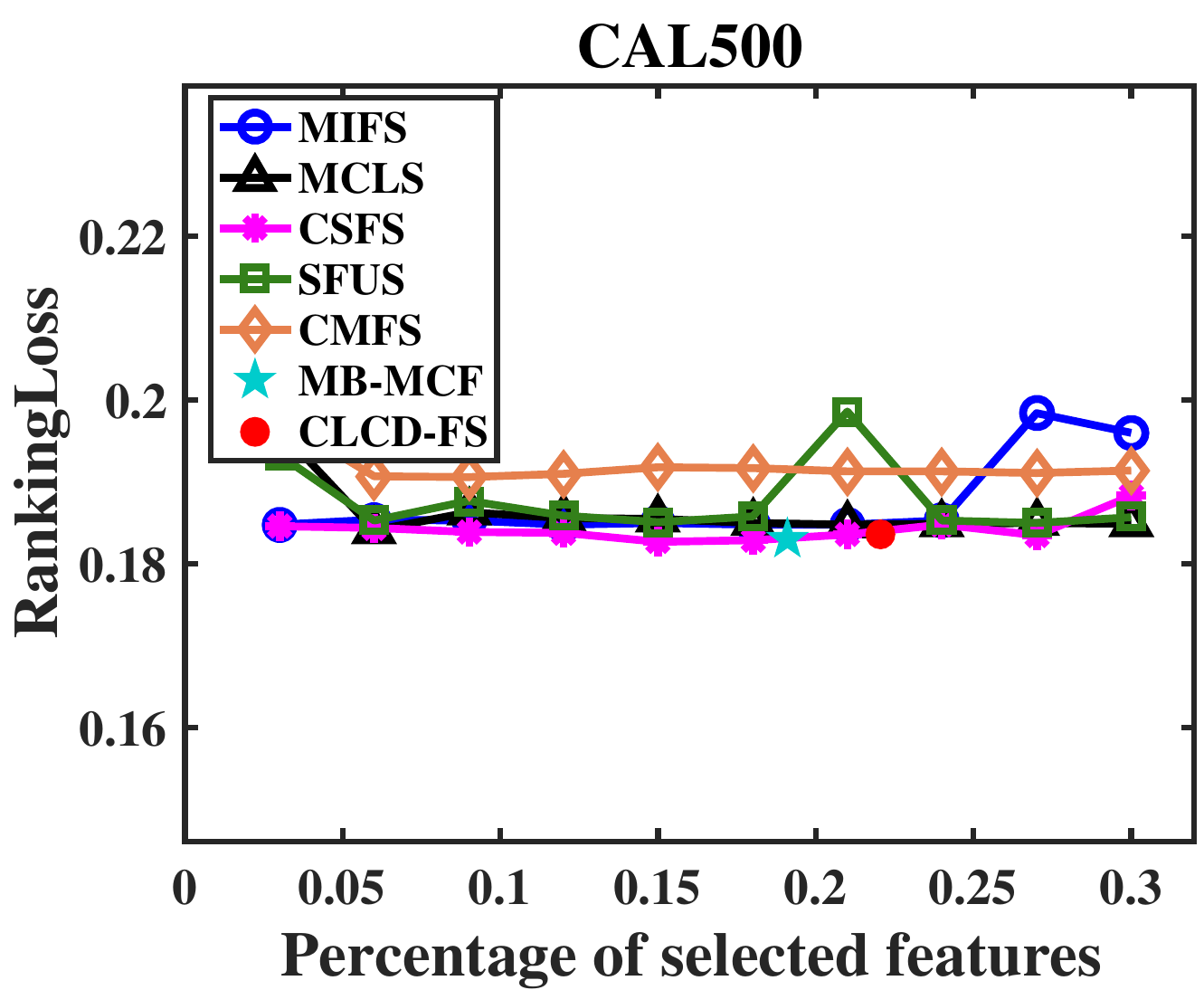}}
\subfigure[]{ \centering
    \label{CAL500_Ma}
    \includegraphics[height=1.3in,width=1.7in]{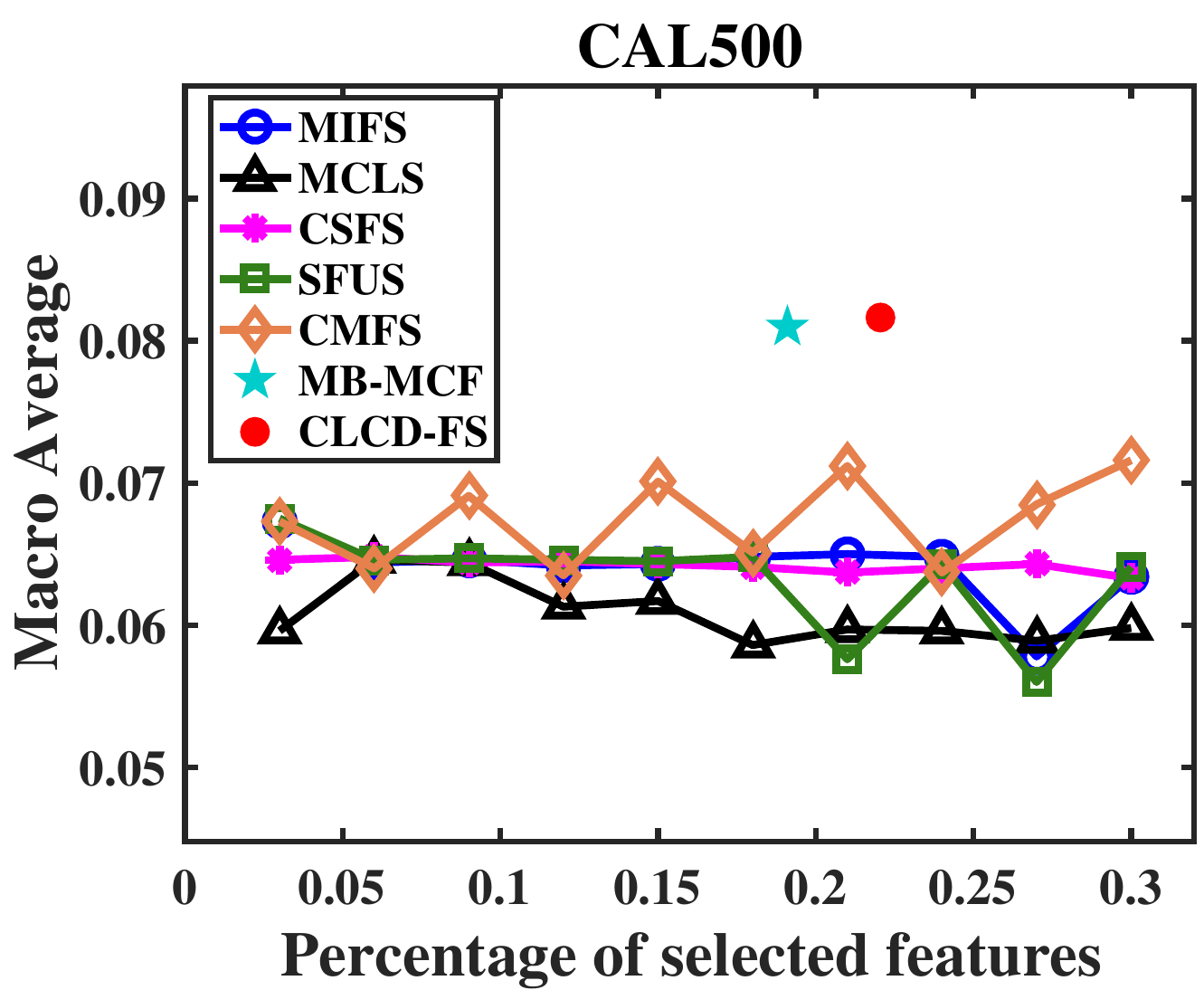}}
\subfigure[]{ \centering
    \label{CAL500_Mi}
    \includegraphics[height=1.3in,width=1.7in]{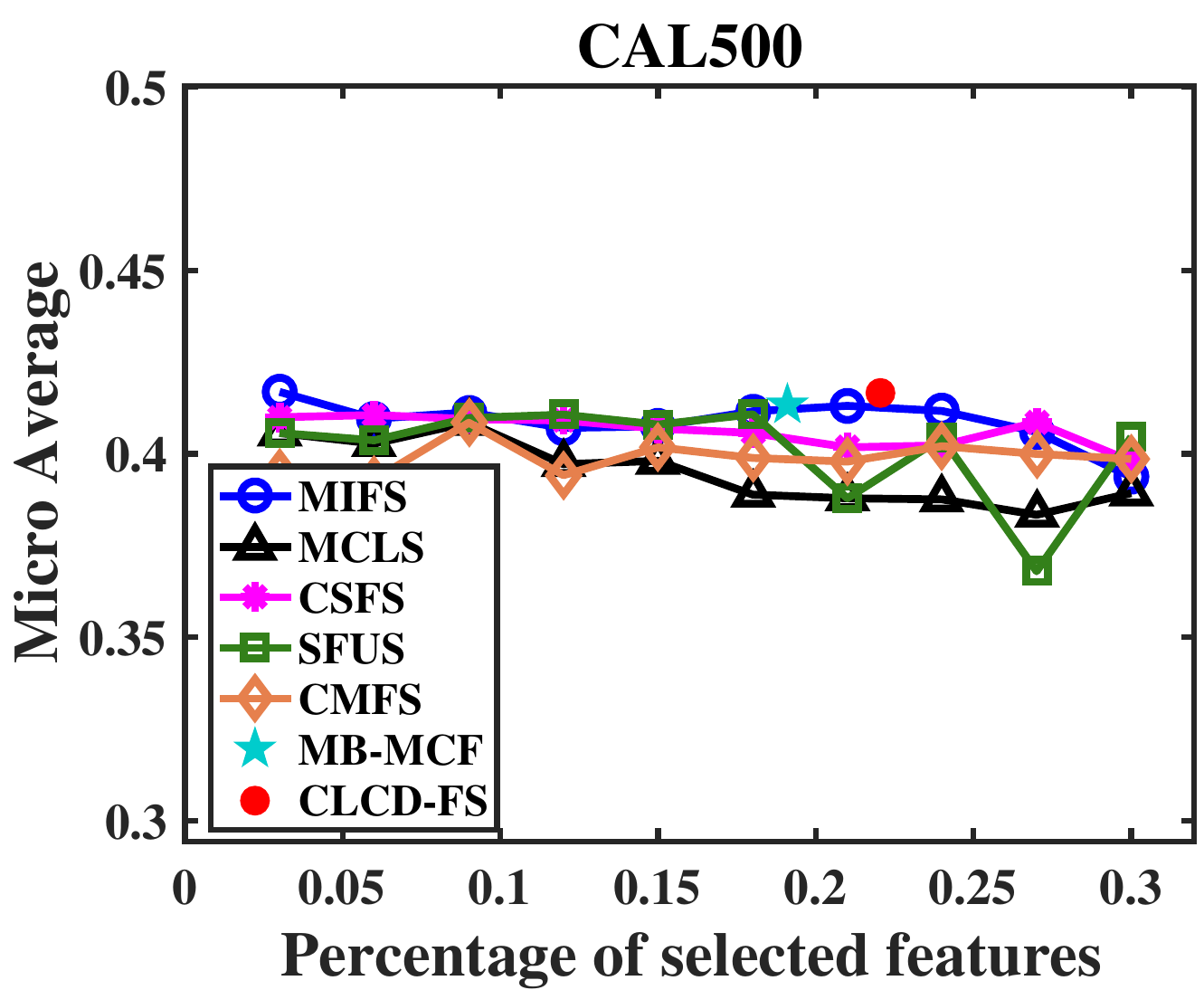}}
\subfigure[]{ \centering
    \label{Emotions_HL}
    \includegraphics[height=1.3in,width=1.7in]{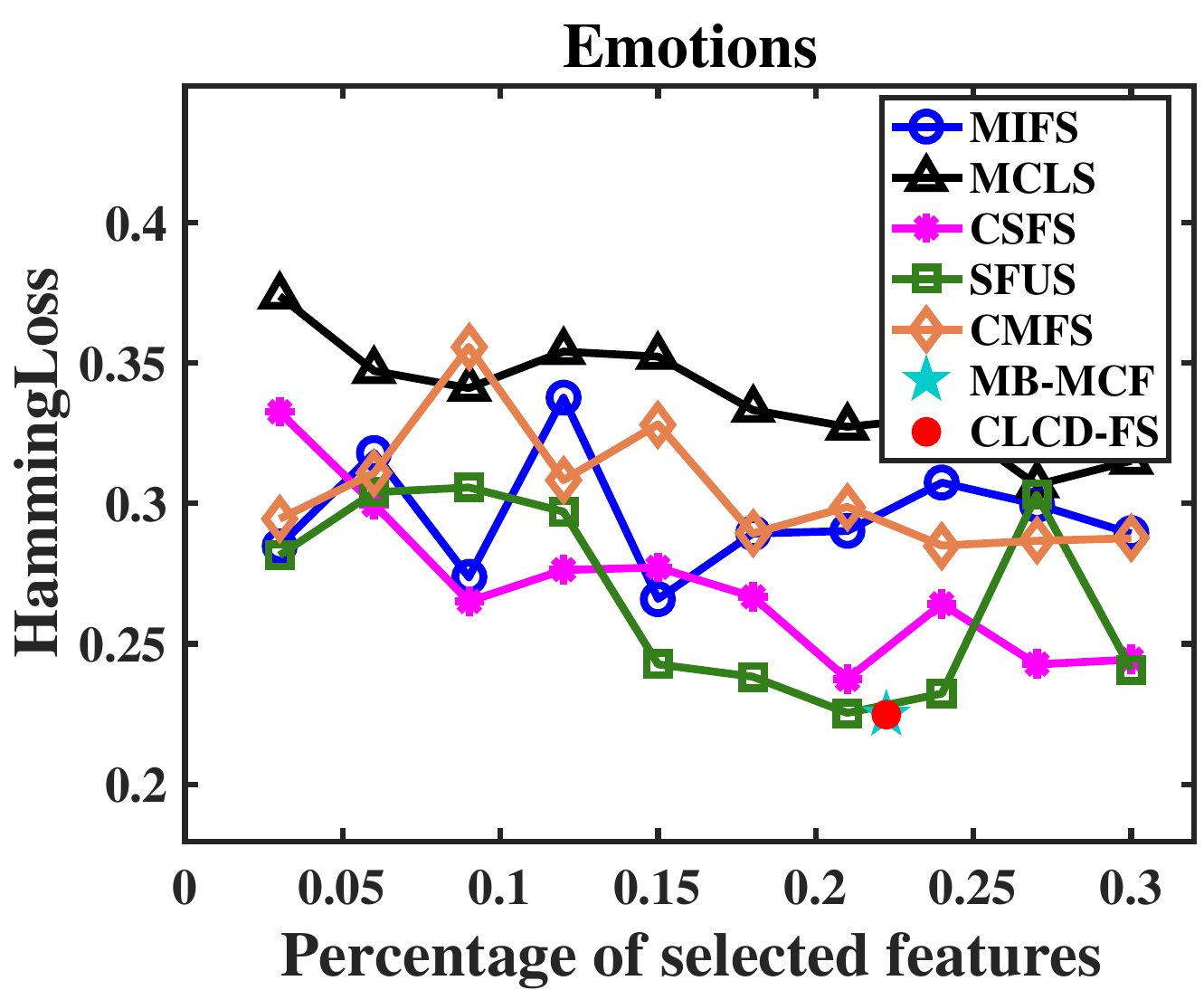}}
\subfigure[]{ \centering
    \label{Emotions_RL}
    \includegraphics[height=1.3in,width=1.7in]{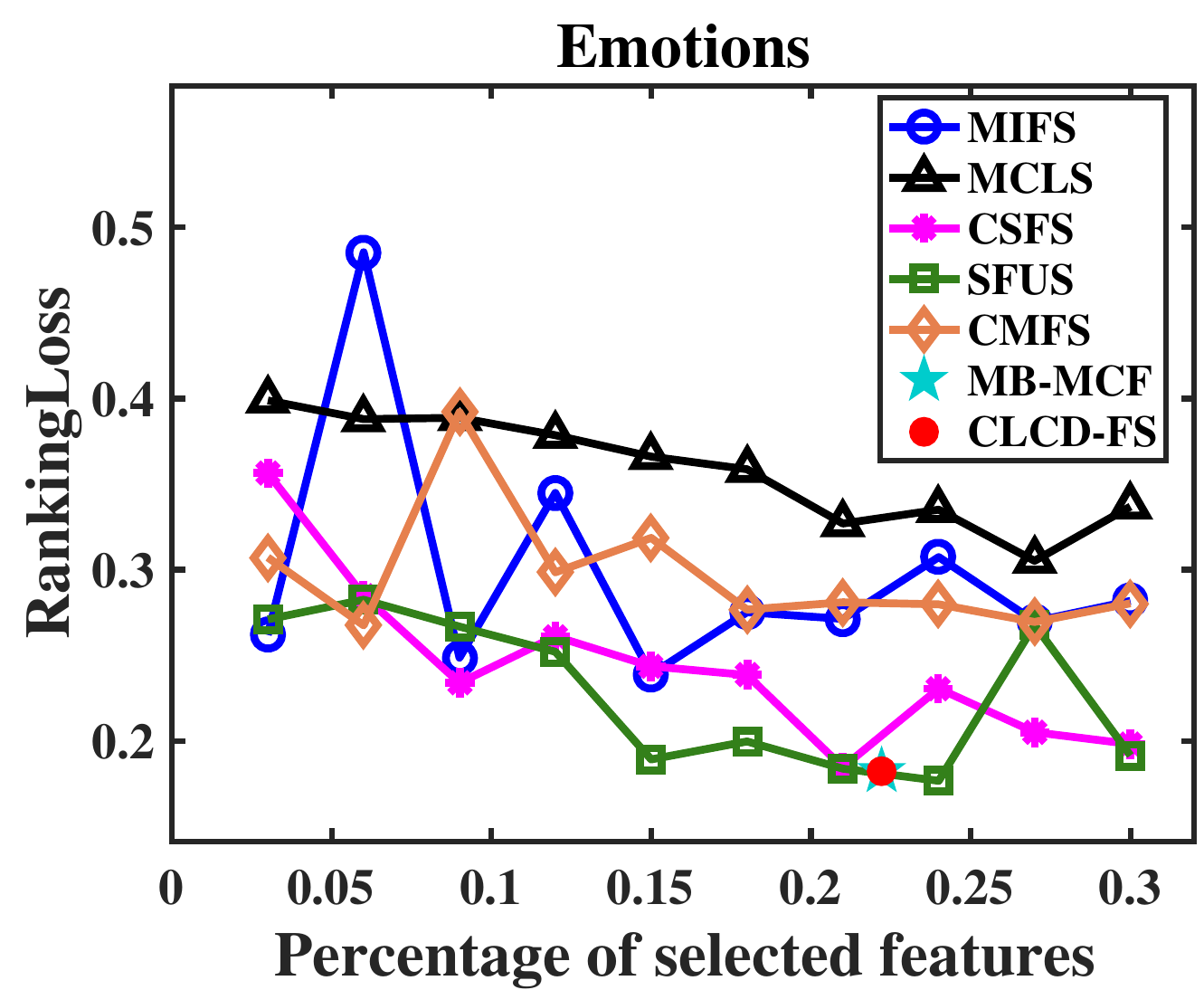}}
\subfigure[]{ \centering
    \label{Emotions_Ma}
    \includegraphics[height=1.3in,width=1.7in]{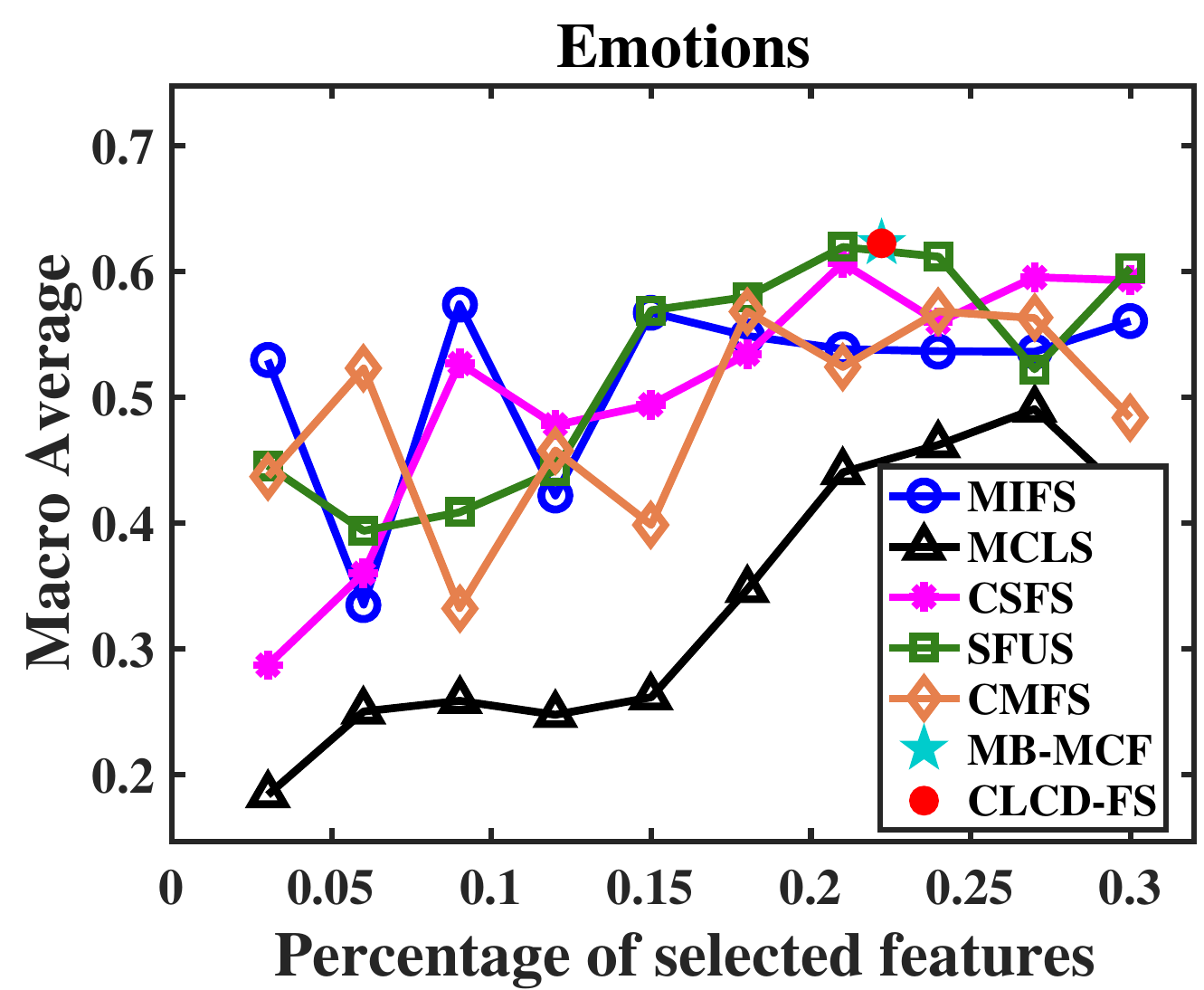}}
\subfigure[]{ \centering
    \label{Emotions_Mi}
    \includegraphics[height=1.3in,width=1.7in]{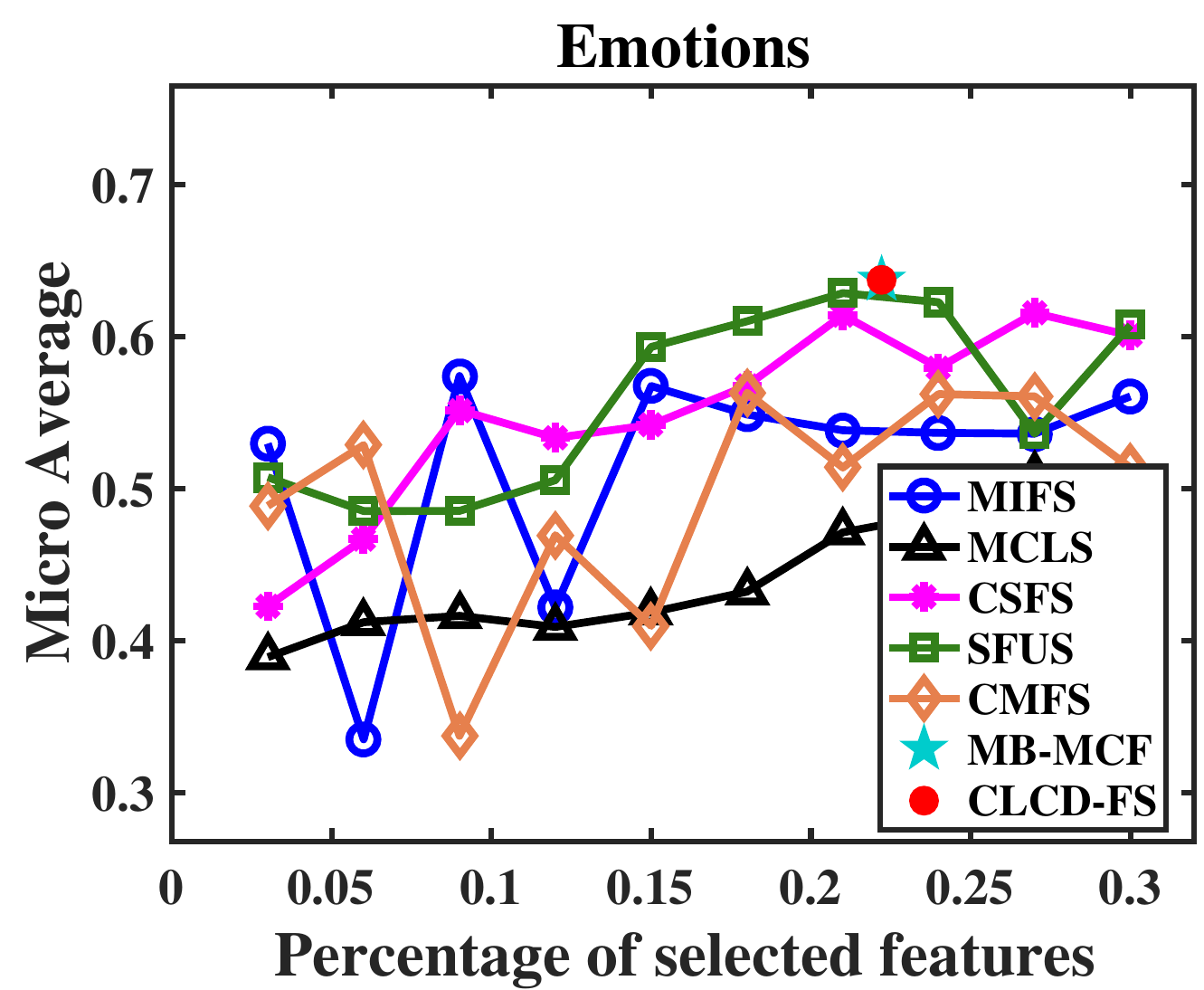}}
\subfigure[]{ \centering
    \label{EUR_HL}
    \includegraphics[height=1.3in,width=1.7in]{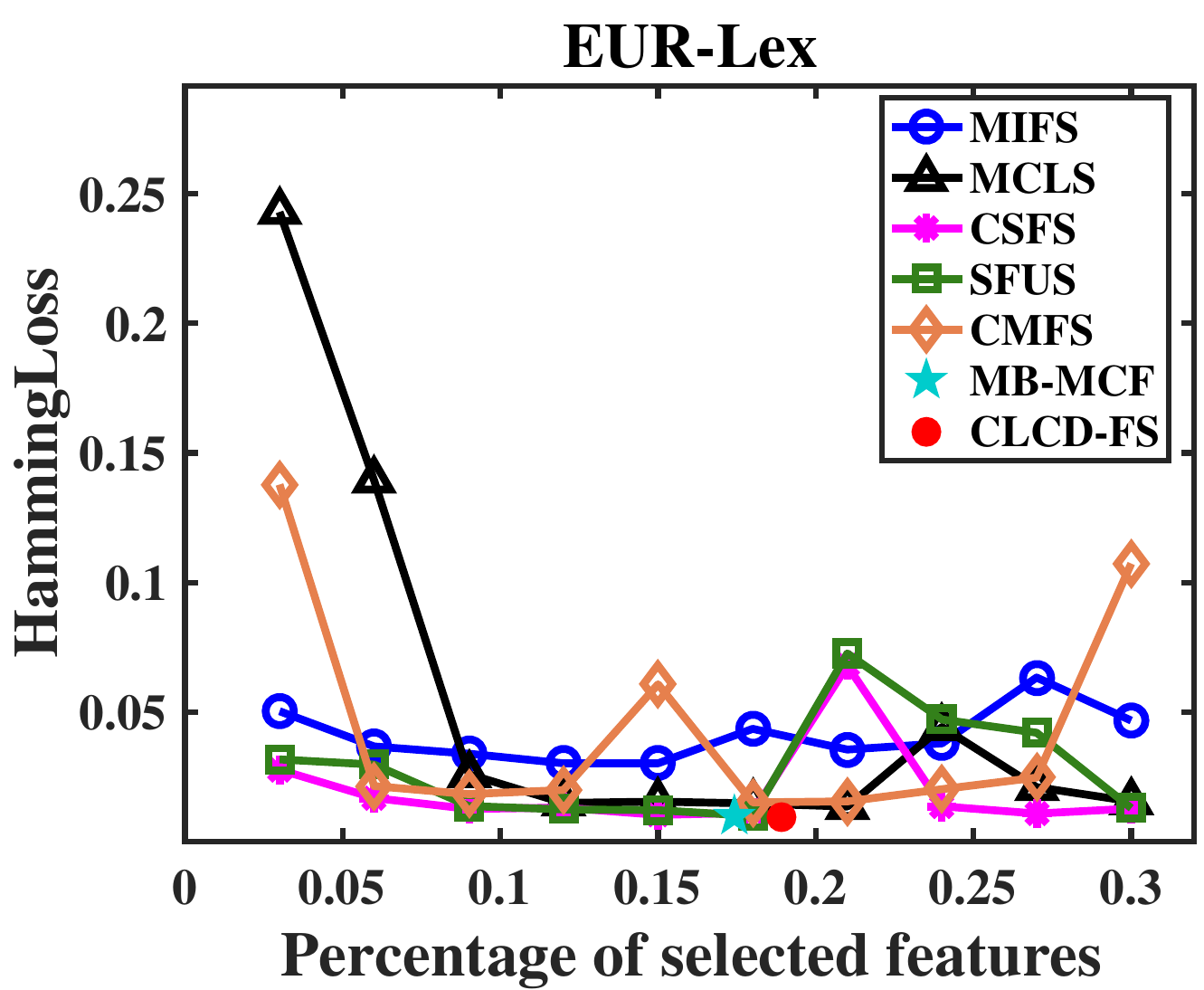}}
\subfigure[]{ \centering
    \label{EUR_RL}
    \includegraphics[height=1.3in,width=1.7in]{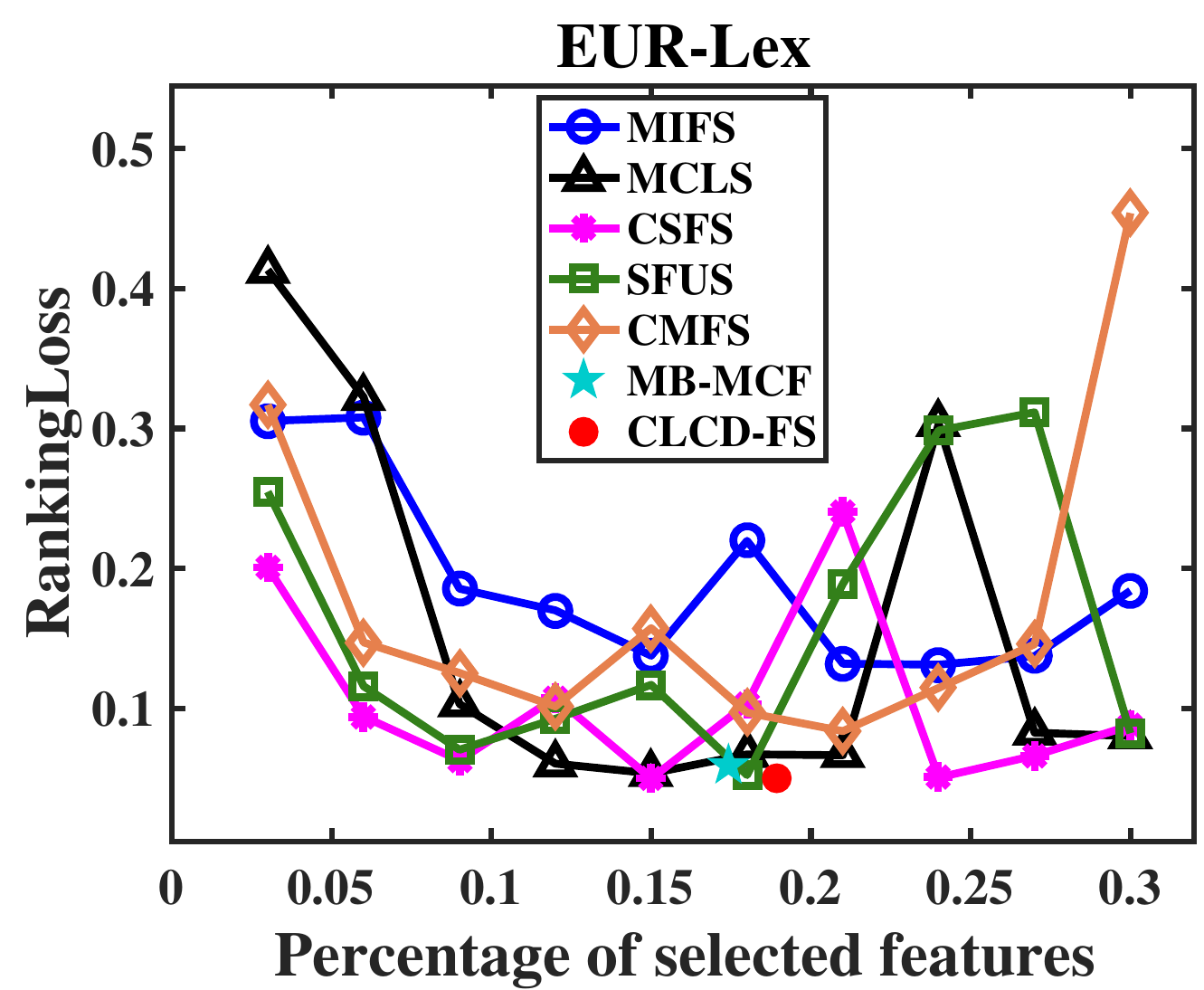}}
\subfigure[]{ \centering
    \label{EUR_Ma}
    \includegraphics[height=1.3in,width=1.7in]{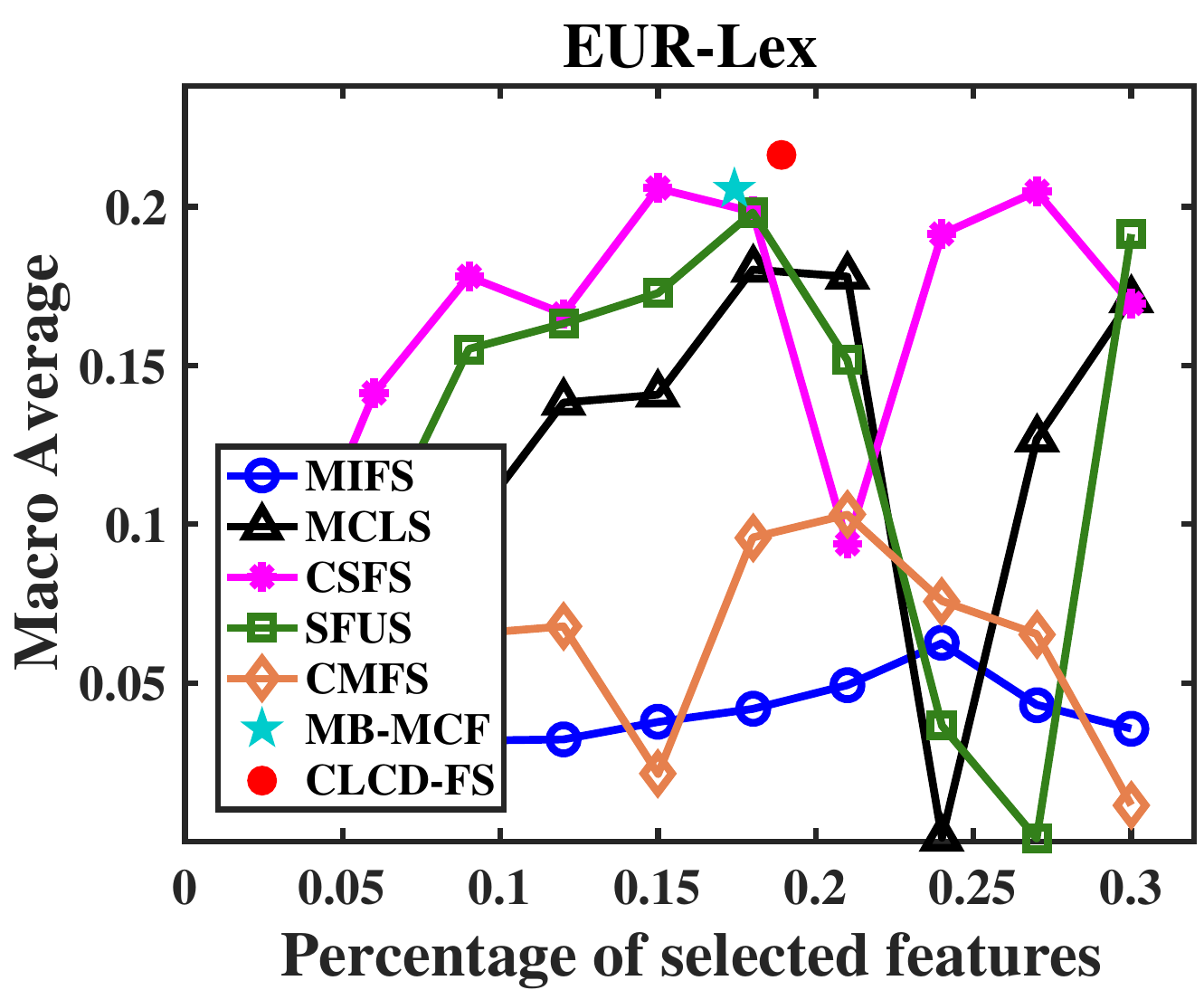}}
\subfigure[]{ \centering
    \label{EUR_Mi}
    \includegraphics[height=1.3in,width=1.7in]{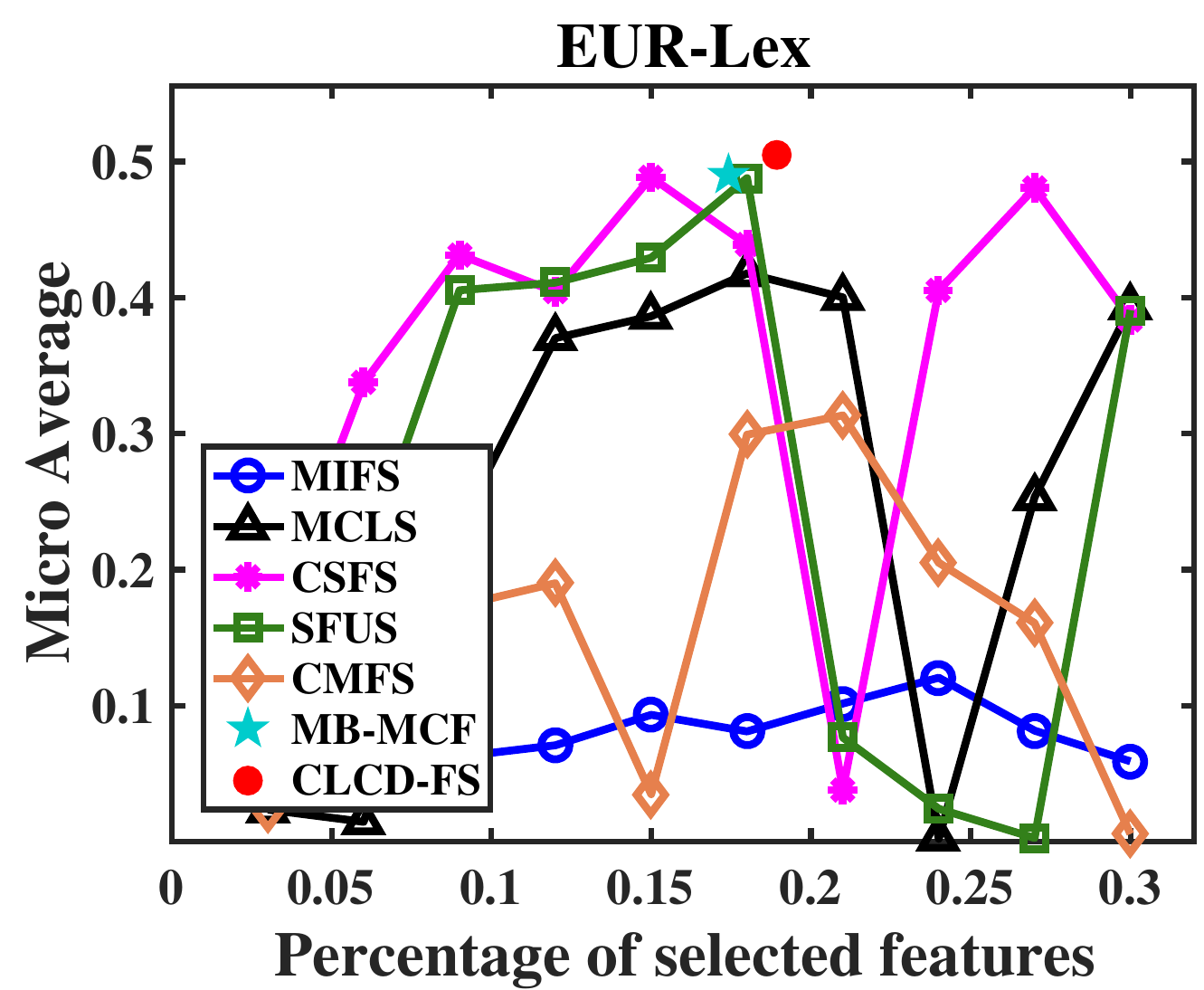}}
\subfigure[]{ \centering
    \label{Mediamill_HL}
    \includegraphics[height=1.3in,width=1.7in]{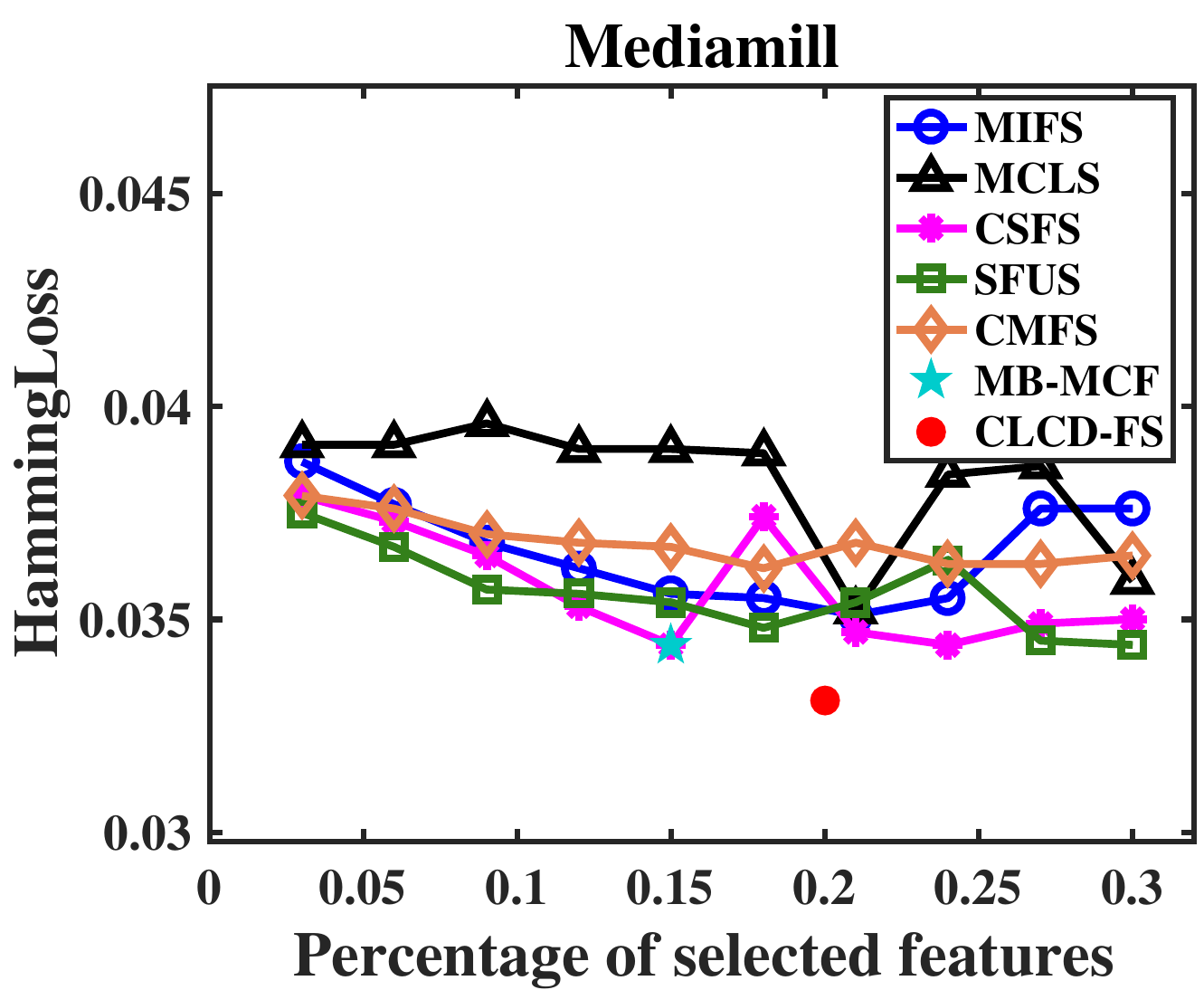}}
\subfigure[]{ \centering
    \label{Mediamill_RL}
    \includegraphics[height=1.3in,width=1.7in]{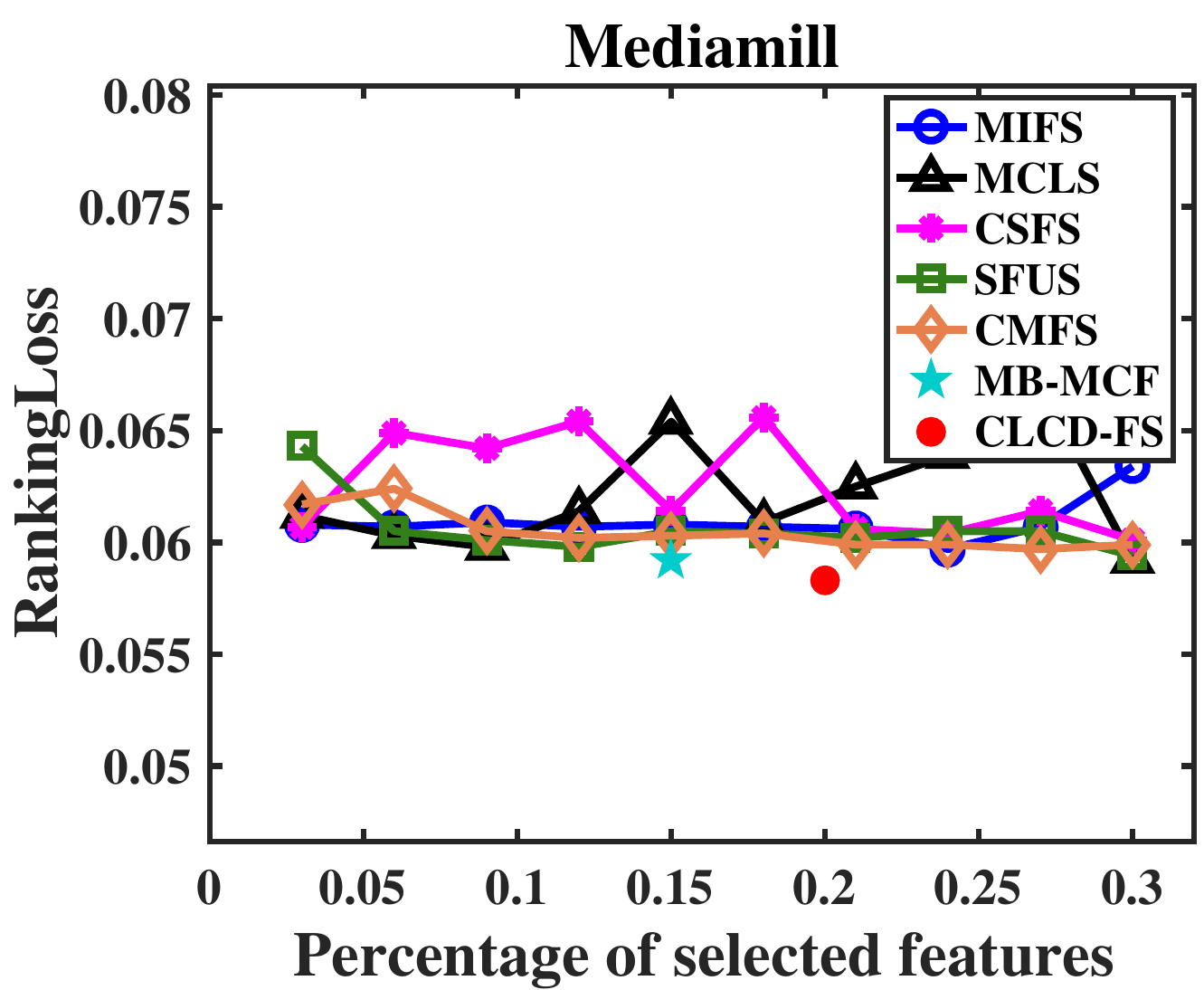}}
\subfigure[]{ \centering
    \label{Mediamill_Ma}
    \includegraphics[height=1.3in,width=1.7in]{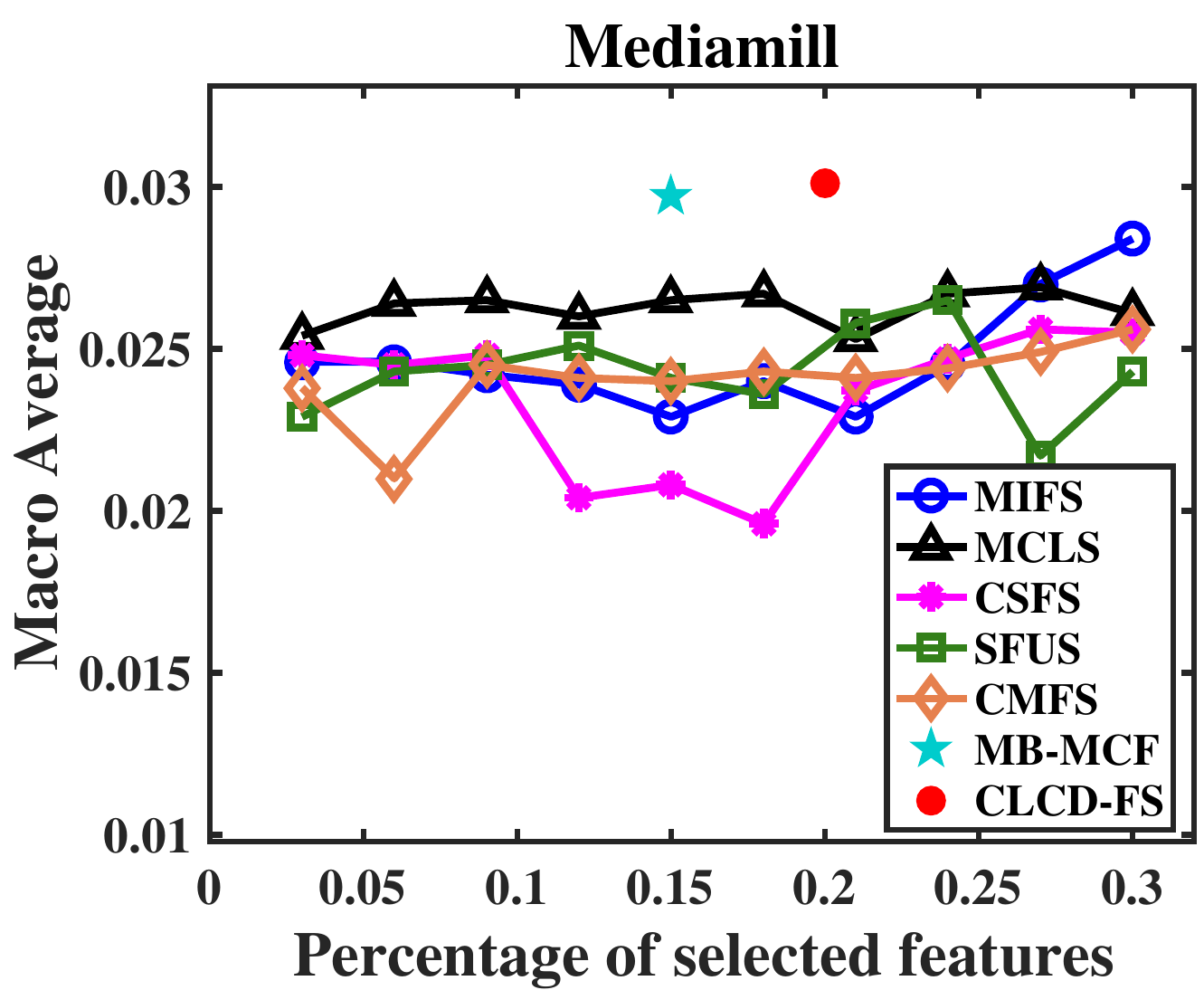}}
\subfigure[]{ \centering
    \label{Mediamill_Mi}
    \includegraphics[height=1.3in,width=1.7in]{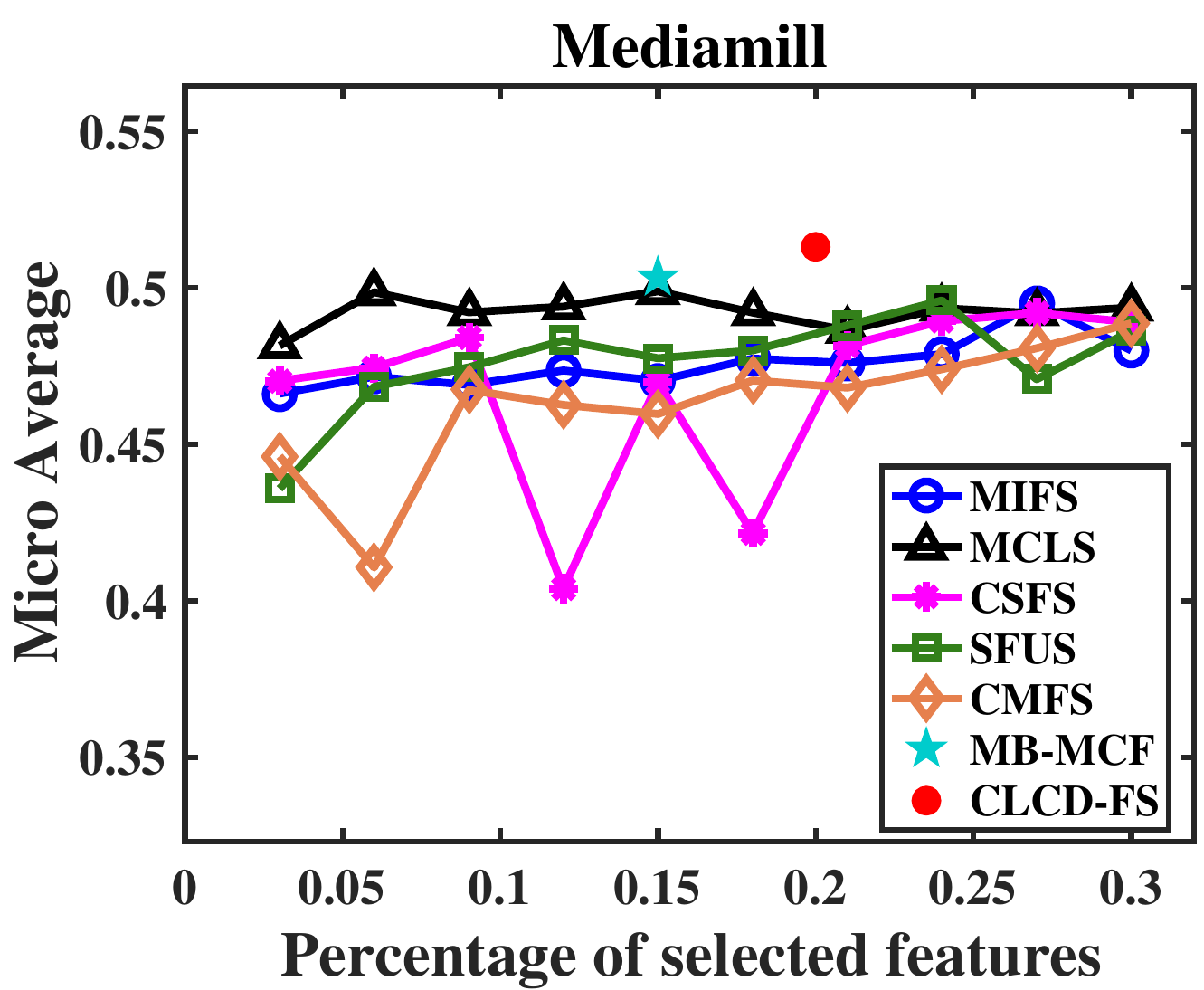}}
\subfigure[]{ \centering
    \label{NUS_HL}
    \includegraphics[height=1.3in,width=1.7in]{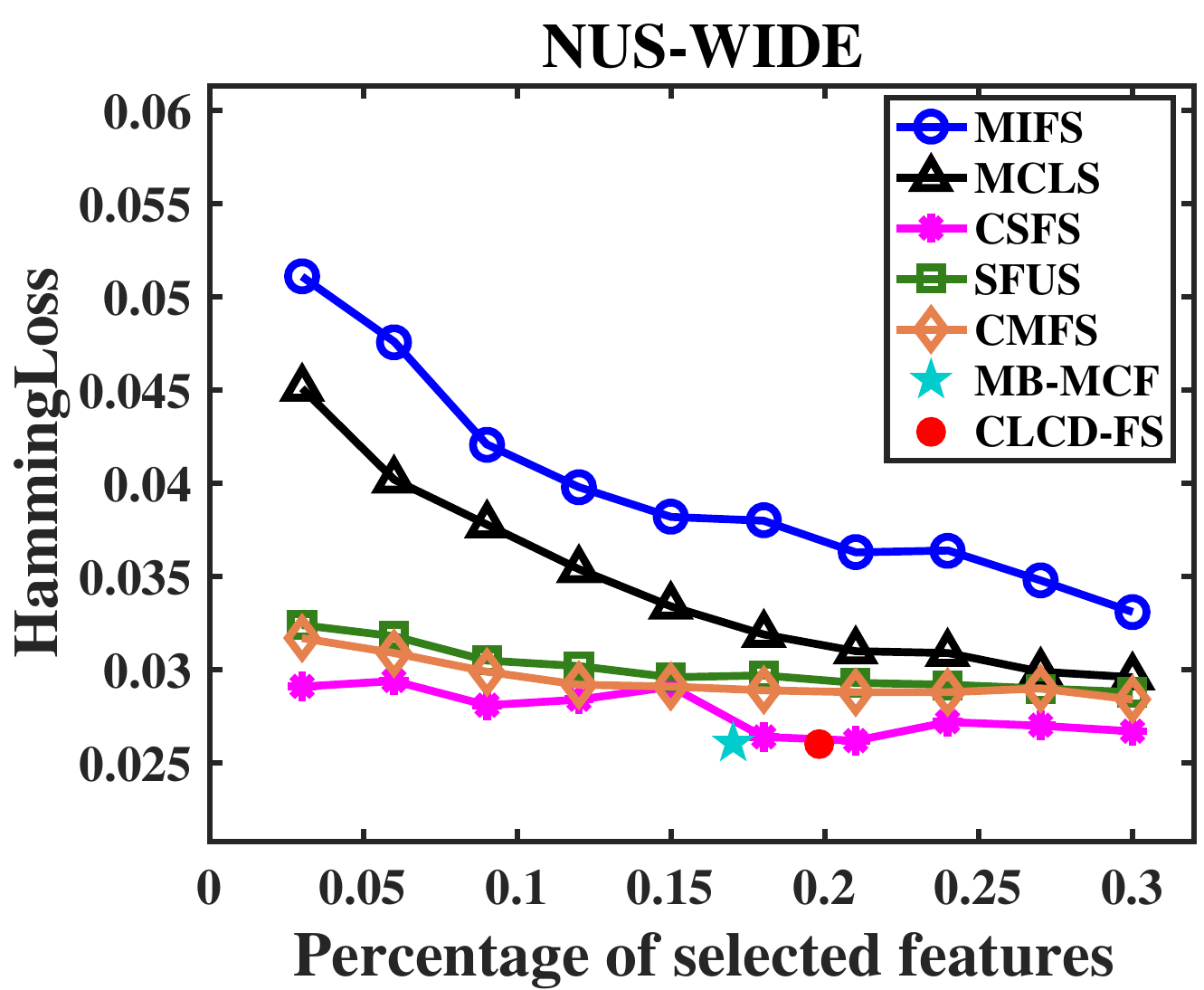}}
\subfigure[]{ \centering
    \label{NUS_RL}
    \includegraphics[height=1.3in,width=1.7in]{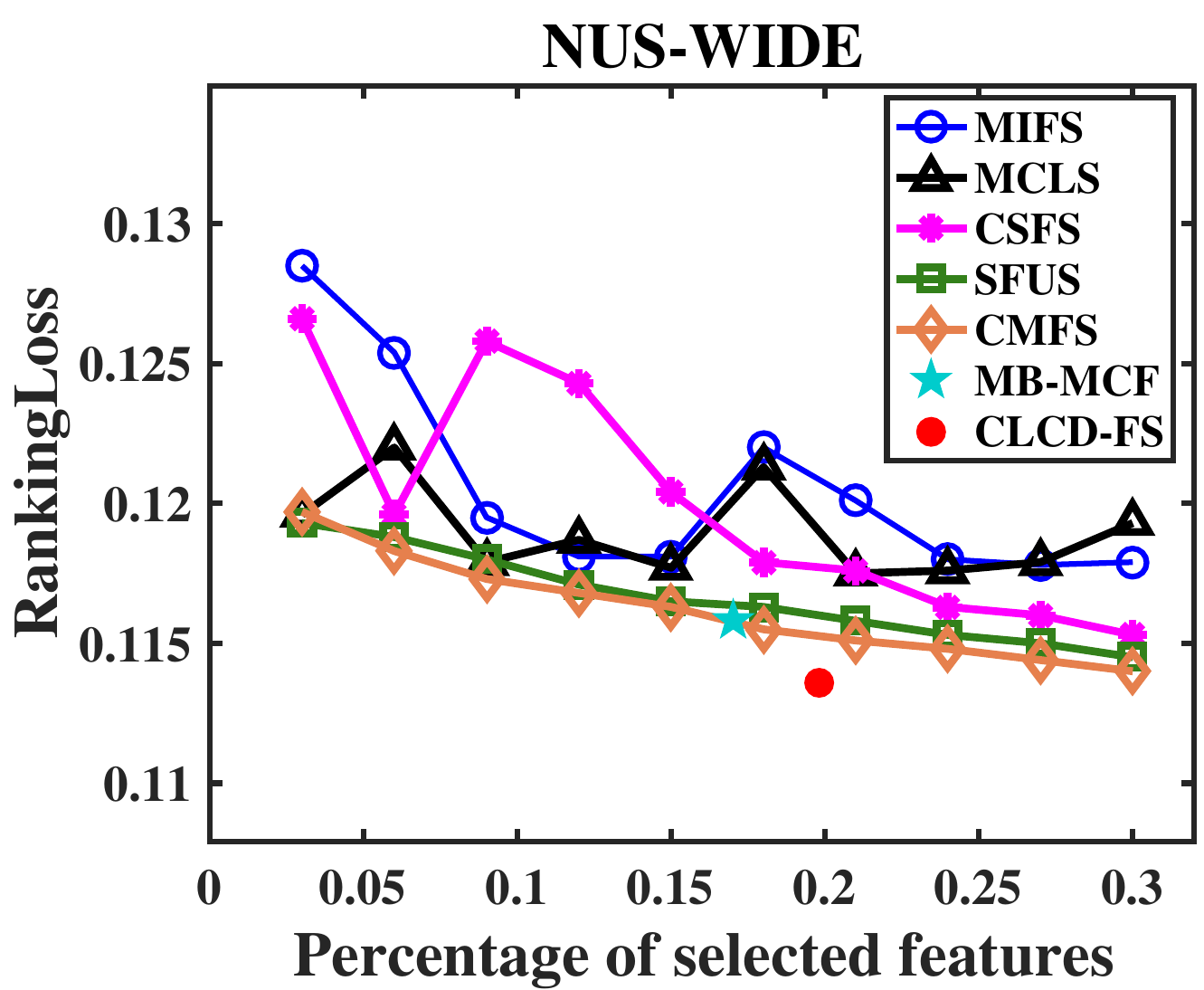}}
\subfigure[]{ \centering
    \label{NUS_Ma}
    \includegraphics[height=1.3in,width=1.7in]{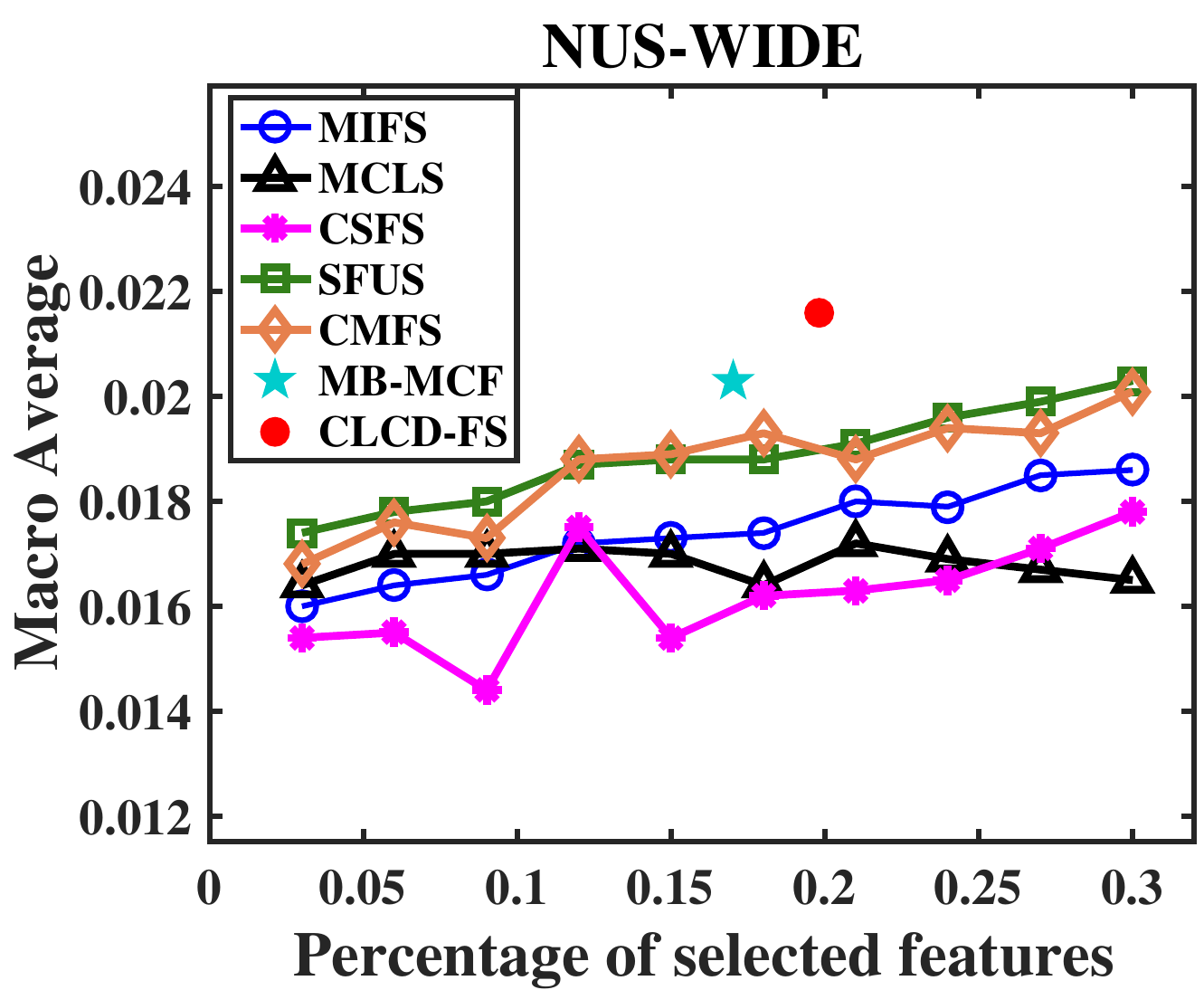}}
\subfigure[]{ \centering
    \label{NUS_Mi}
    \includegraphics[height=1.3in,width=1.7in]{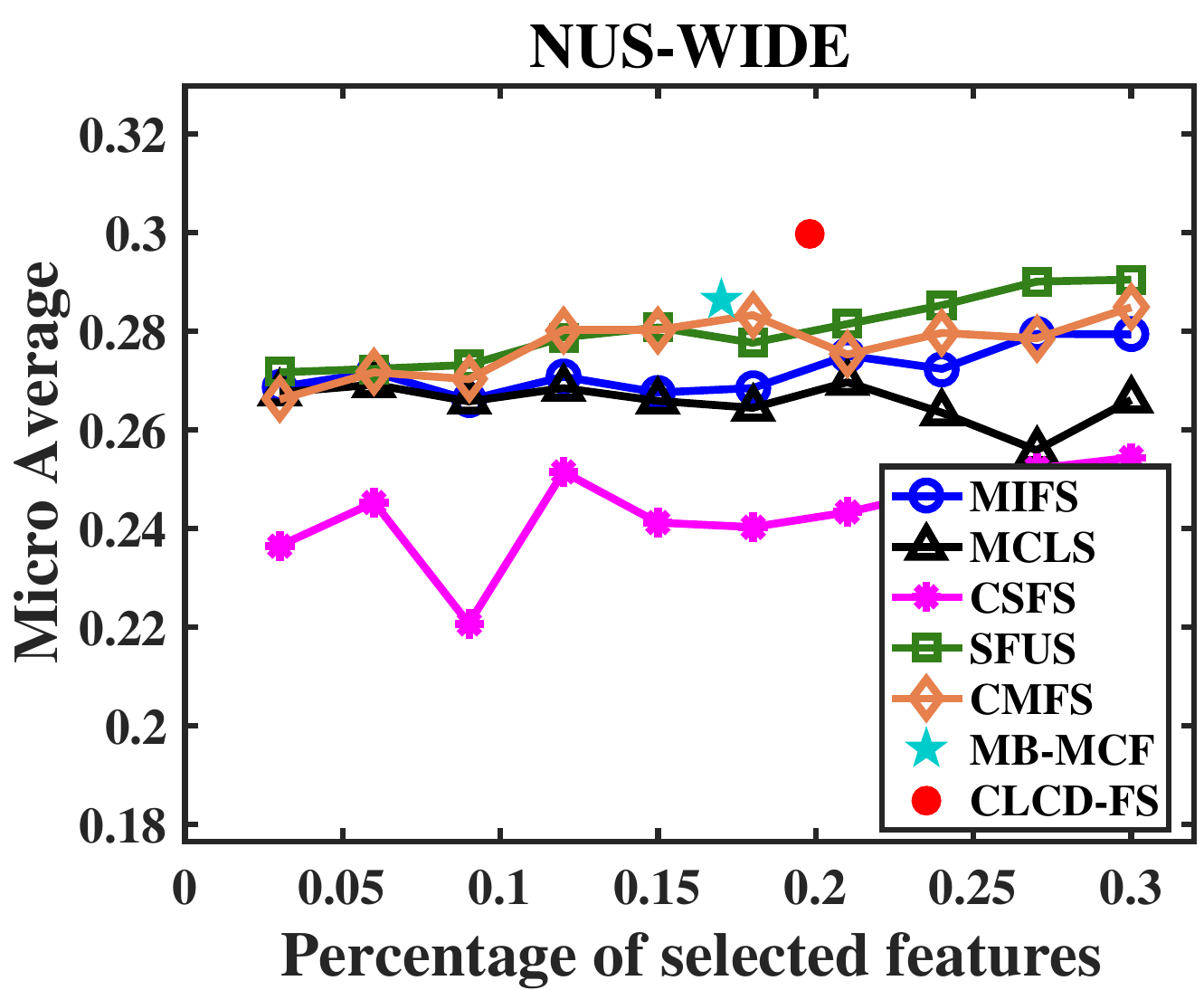}}
\caption{The $Hamming Loss$, $Ranking Loss$, $F_{Macro}$ and $F_{Micro}$ of CLCD-FS and other state-of-the-art multi-label feature selection algorithms on six real-world data sets.}
\label{expic1}
\end{figure*}

\textbf{Metrics for Evaluation}: For fair comparison, we choose two example-based metrics $Hamming Loss$ and $Ranking Loss$, and two label-based metrics $F_{Macro}$ and $F_{Micro}$ (macro-averaging and micro-averaging of F1-measure) \cite{zhang2013review}, to measure the performances of multi-label classification results with selected features of each comparing algorithm. $Hamming Loss$ evaluates the ratio of false outputting labels, including the missed relevant labels and the predicted irrelevant labels:
\begin{equation}
\label{hammingloss}
Hamming Loss=\frac{1}{m}\sum_{i=1}^m \frac{1}{n}|\hat{Y}_i\Delta Y_i|,
\end{equation}
where $m$ is the number of samples and $n$ is the number of labels. $\hat{Y}_i$ and $Y_i$ represents the predicted and real label set of the $i$-th sample, respectively. $\Delta$ denotes the symmetric difference between them.

$Ranking Loss$ evaluates the ratio of reversely ordered label pairs, i.e. an irrelevant label is ranked higher than a relevant label, which is calculated by:
\begin{equation}
\label{rankingloss}
\begin{split}
Ranking Loss=&\frac{1}{m}\sum_{i=1}^m \frac{1}{|\hat{Y}_i||Y_i|} |\{(y_1,y_2)|f(x_i,y_1)\\
&\leq f(x_i,y_2),y_1\in Y_i,y_2\in\hat{Y}_i\}|,
\end{split}
\end{equation}
where $f$ denotes the intermediate real-valued function.

$F_{Micro}$ is the weighted average arithmetic average of $F_1$-score (harmonic mean of $Precision$ and $Recall$) over all $m$ samples, whereas $F_{Macro}$ is an arithmetic average $F_1$-score of all $n$ labels. Mathematically,
\begin{equation}
\label{micro}
F_{Micro}=\frac{1}{n}\sum_{i=1}^n \frac{2TP_i}{2TP_i+FP_i+FN_i}.
\end{equation}
\begin{equation}
\label{macro}
F_{Macro}=\frac{\sum_{i=1}^n 2TP_i}{\sum_{i=1}^n (2TP_i+FP_i+FN_i)}.
\end{equation}
where $TP_i$, $FP_i$ and $FN_i$ denote the number of true positives, false positives and false negatives in the $i$-th label, respectively.

\begin{table}[t]
\caption{Details of the multi-label data sets.}
\centering
\resizebox {3.5in }{!}{
\setlength{\tabcolsep}{0.1in}
\centering
    \begin{tabular}{cccccccc}
    \hline
    Data set & domain & \#Features & \#Labels & $cardinality$ & $density$ \\
    \hline
    Birds & audio  & 260   & 19   & 1.014   & 0.053 \\
    \hline
    CAL500 & music  & 68   & 174   & 26.044   & 0.150 \\
    \hline
    Emotions & music  & 72   & 6   & 1.869   & 0.311 \\
    \hline
    EUR-Lex & text  & 5000   & 201   & 2.213   & 0.011 \\
    \hline
    Mediamill & video  & 120   & 101   & 4.376   & 0.043 \\
    \hline
    NUS-WIDE & images  & 500   & 81   & 1.869   & 0.023 \\
    \hline
    \end{tabular}}
  \label{datasets}%
\end{table}

\textbf{Performance Comparison}: We employ CLCD-FS and other algorithms in comparison to select features first and then train the BR-SVM with these selected features. Each experiment is repeated 10 times with different training and test data, and we report the average performances, i.e., $Hamming Loss$, $Ranking Loss$, $F_{Macro}$ and $F_{Micro}$. As previously mentioned, traditional feature selection algorithms need to predetermine the number of features while CLCD-FS and MB-MCF do not need to, therefore the percentage of the selected features is gradually turned in $\{0.03,0.06,\dots,0.27,0.3\}$ for these traditional algorithms. Similarly, the regularization parameters for all algorithms are searched from $\{0.01,0.1,0.3,\dots,$ $0.9,1\}$ by grid search. The MB discovery algorithm in CLCD-FS and MB-MCF is HITON-MB \cite{hiton} and the parameter in its $G^2$-test \cite{pearl1988} is set as 0.05. Fig. \ref{expic1} shows the average $Hamming Loss$, $Ranking Loss$, $F_{Macro}$ and $F_{Micro}$ variation curves of different multi-label feature selection algorithms with respect to the percentage of selected features.

\begin{figure*}
\centering%
\subfigure[$Hamming\ Loss$]{ \centering
    \label{Spider_HammingLoss}
    \includegraphics[height=1.3in,width=1.7in]{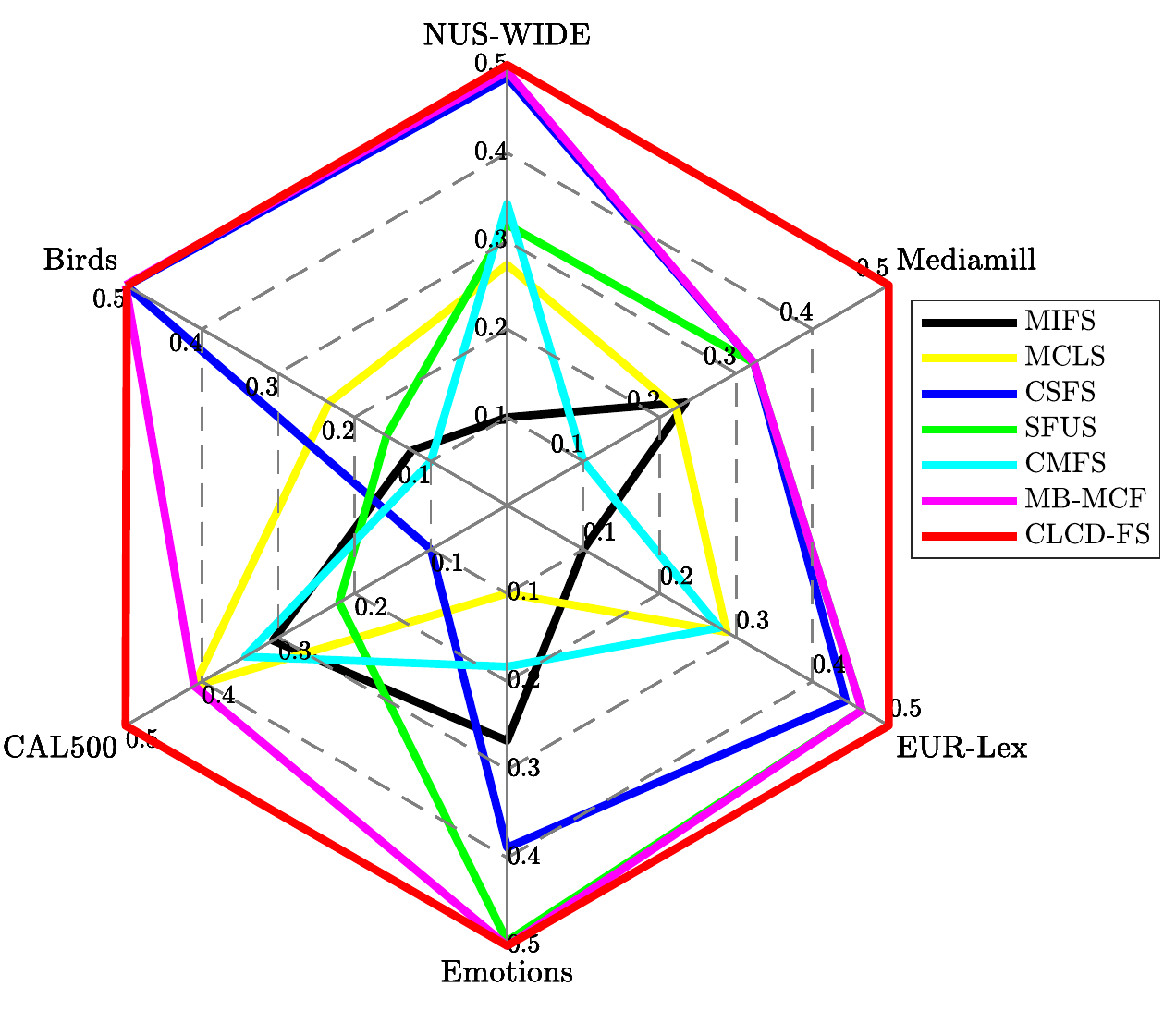}}
\subfigure[$Ranking\ Loss$]{ \centering
    \label{Spider_RankingLoss}
    \includegraphics[height=1.3in,width=1.7in]{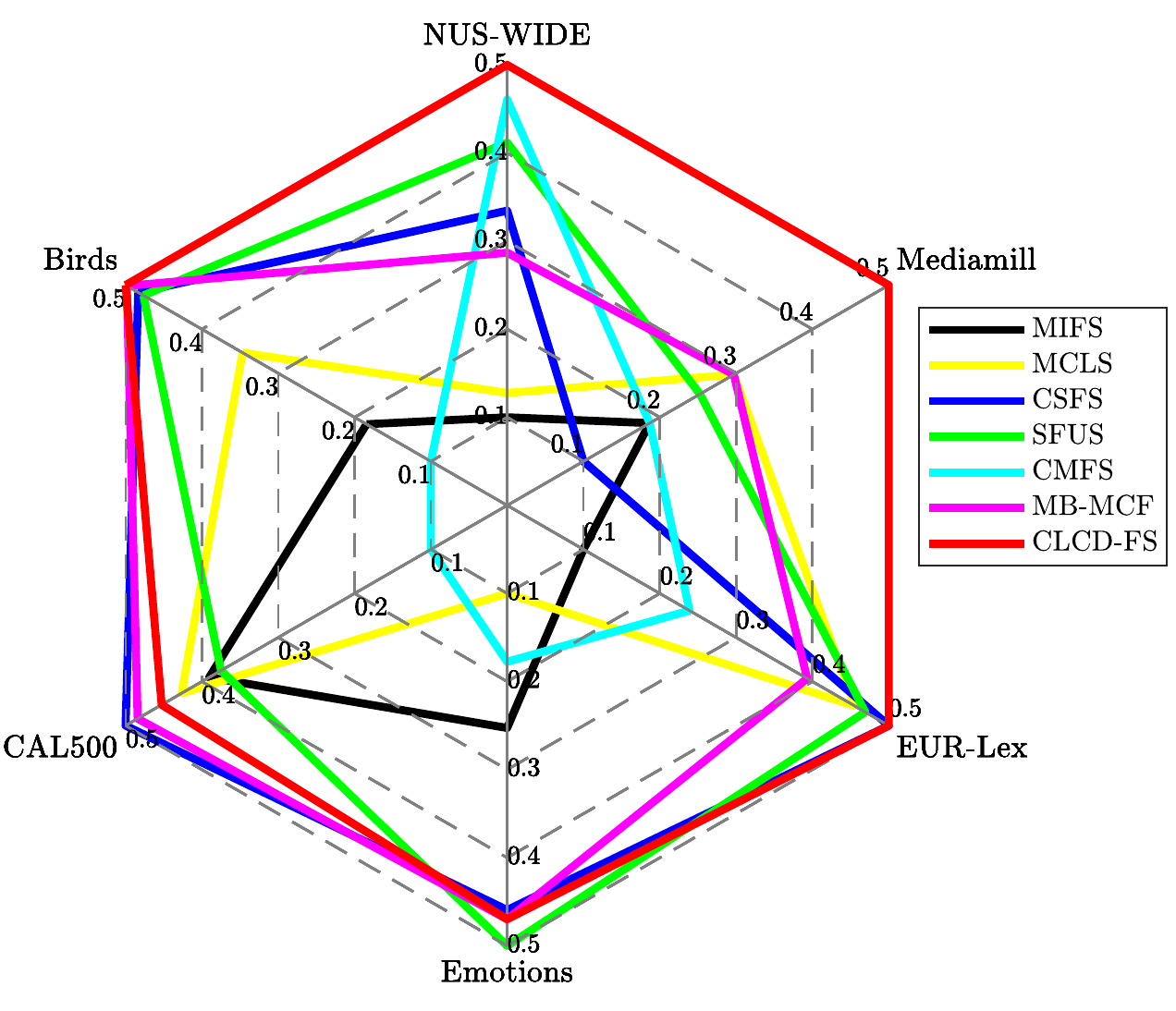}}
\subfigure[$F_{Macro}$]{ \centering
    \label{Spider_MacroAverage}
    \includegraphics[height=1.3in,width=1.7in]{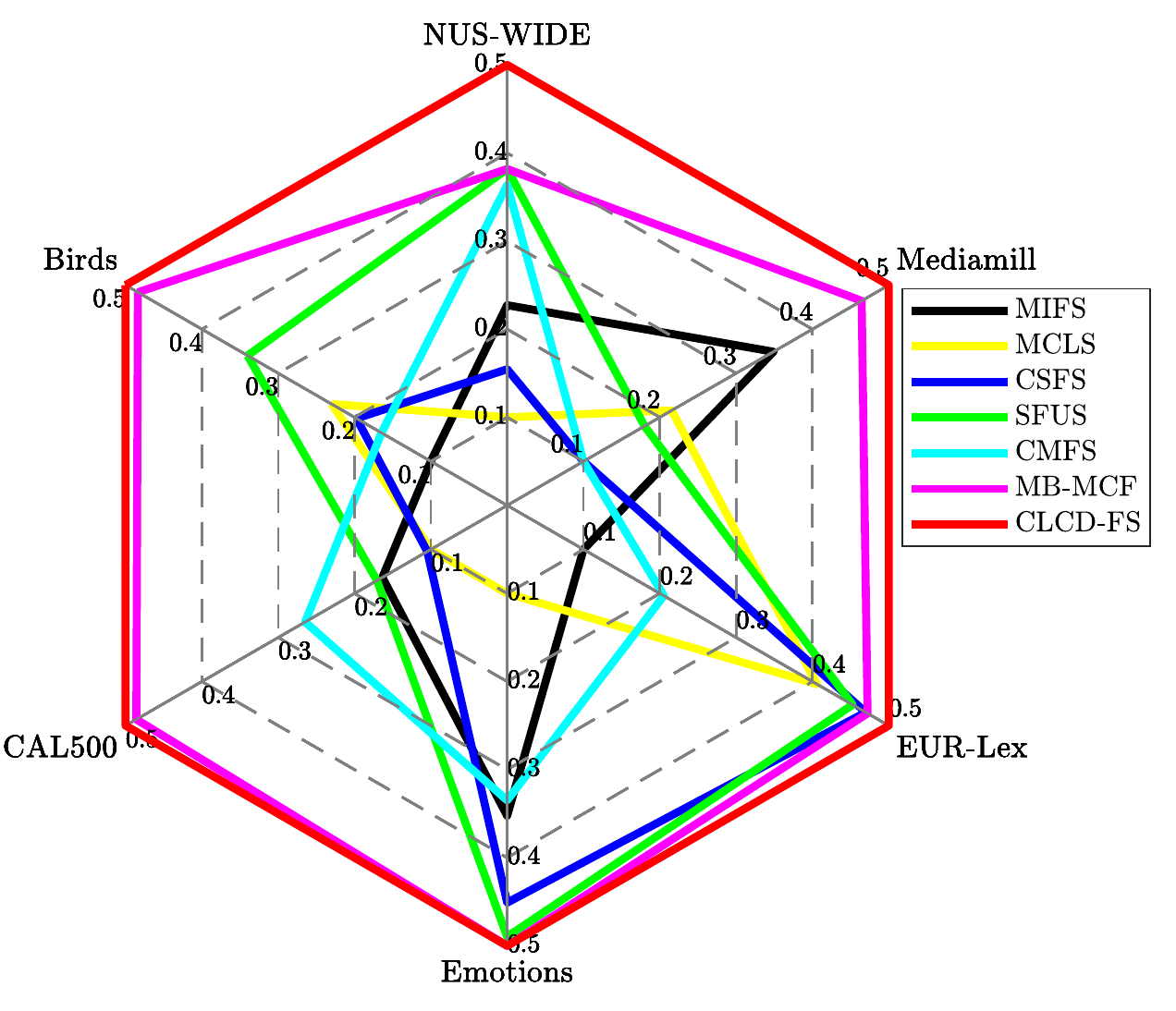}}
\subfigure[$F_{Micro}$]{ \centering
    \label{Spider_MicroAverage}
    \includegraphics[height=1.3in,width=1.7in]{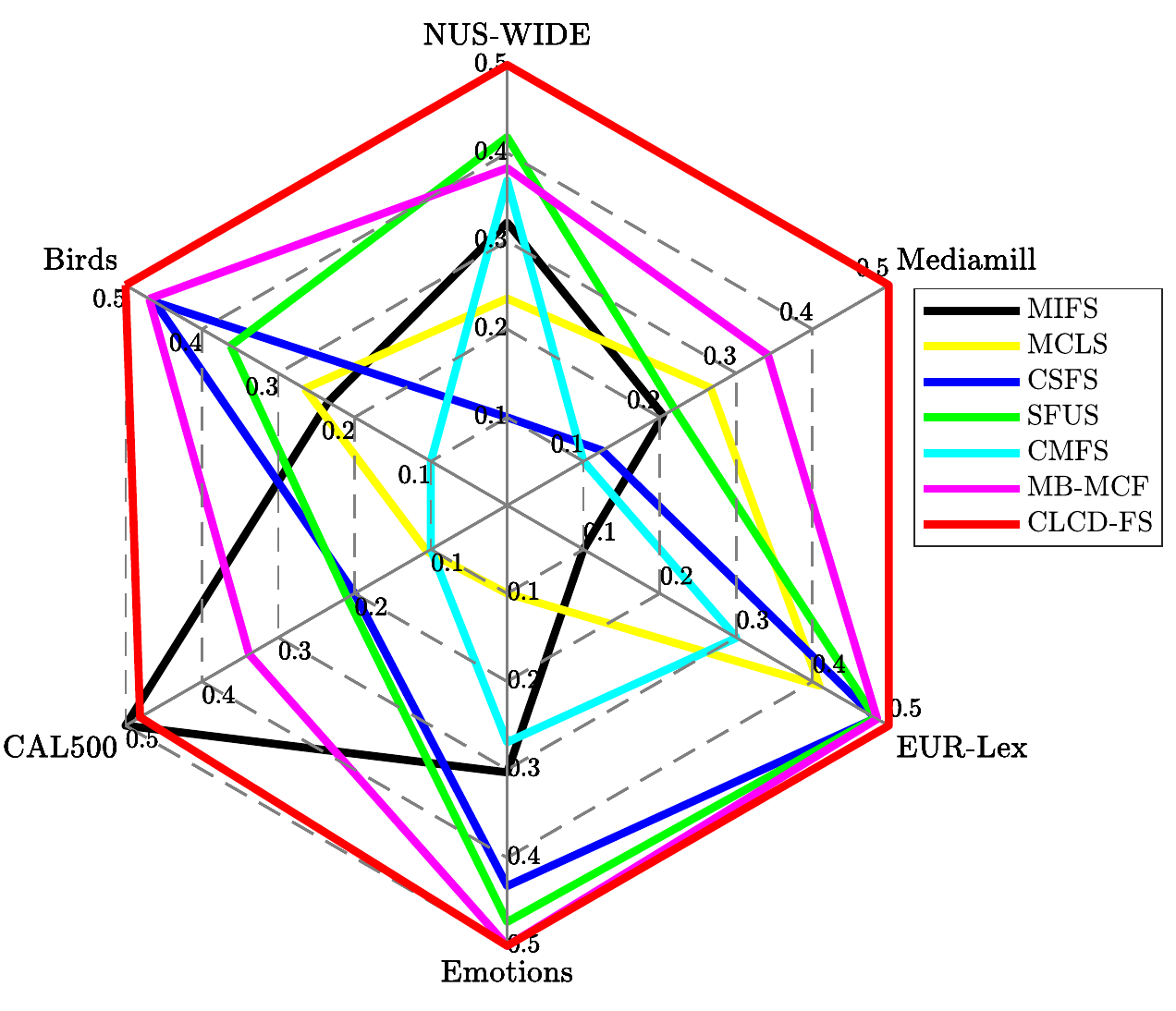}}
\caption{Spider web diagrams showing the stability obtained on six multi-label data sets with $Hamming Loss$, $Ranking Loss$, $F_{Macro}$ and $F_{Micro}$ of CLCD-FS and other state-of-the-art multi-label feature selection algorithms.}
\label{expic_spider}
\end{figure*}


(1) \emph{Comparison with traditional algorithms:} As mentioned previously, CLCD-FS could automatically determine the number of selected features, and thus its performance trend is a red dot, instead of a curve. Based on the experimental results in Fig. \ref{expic1}, we make the following observations:

(i) Under the same ratio of selected features, CLCD-FS achieves the best performance in terms of four metrics compared with the traditional feature selection algorithms as shown in Fig. \ref{expic1}. (ii) For $Hamming Loss$, $F_{Macro}$, and $F_{Micro}$, CLCD-FS consistently outperforms the best performance of these traditional algorithms. Especially on large-scale data set (EUR-Lex, Mediamill, and NUS-WIDE), CLCD-FS achieves significantly higher performance compared with traditional methods, which validates the practicability in the real-world large-scale problem. (iii) For $Ranking Loss$, CLCD-FS is slightly worse than CSFS algorithm on Birds data set, and SFUS algorithms on Emotions data set. It is reasonable since few algorithms could obtain the best performance on all these metrics simultaneously. Nevertheless, the performance of CLCD-FS is also competitive compared with other algorithms. (iv) CLCD-FS could automatically find the optimal number of features. The size of feature set selected by CLCD-FS exactly falls nearby the optimal point, and accordingly, CLCD-FS achieves the best performance.

Overall, we can observe from Fig. \ref{expic1} that, the superiority compared with traditional algorithms reflects on two sides: (i) CLCD-FS could automatically determine the relatively optimal number of selected features; and (ii) CLCD-FS achieves the best or very competitive performances, especially on large-scale data sets.

(2) \emph{Comparison with MB-MCF:} On most of data sets, CLCD-FS selects more features than MB-MCF, and accordingly, CLCD-FS achieves better performance than MB-MCF, which indicates that CLCD-FS could better determine the relative optimal number of selected features. The additional features selected by CLCD-FS generally include two types. Some of these features are the spouse features of labels, which could enhance the predictive ability of the direct effects (child features) of labels. Others are the relevant features shielded by label causality in the feature selection process, which are retrieved by CLCD-FS to make up the predictive information loss contained by some labels. These improvements suggest that CLCD-FS achieves better performance compared with MB-MCF.

We also note that MB-MCF and CLCD-FS select the same features on Emotions data set. Nonetheless, the causal relationship between a label and a feature mined in these algorithms are not exactly the same. In Section \ref{ex3}, we demonstrate the causal relationships retrieved by CLCD-FS on Emotions, which is slightly different as compared with Figure 4 in \cite{wu2020multi}.

(3) \emph{Stability Analysis:} To verify the stability of different methods, we draw four spider web diagrams in terms of each evaluation metric, i.e., $Hamming Loss$, $Ranking Loss$, $F_{Macro}$, and $F_{Micro}$. The best performance of each algorithm is normalized to a universal standard $[0,0.5]$ so that the differences between the classification performances on different data sets do not influence the demonstration. Then, we present the stability index according to the value after normalization. The stability with different metrics is shown in Fig. \ref{expic_spider}, where the red line denotes the stability value of the proposed CLCD-FS algorithm. We conclude from Fig. \ref{expic_spider} that: (i) For $Hamming Loss$, $F_{Macro}$, and $F_{Micro}$, the shapes of CLCD-FS are regular hexagons, which means that CLCD-FS obtains the most stable solution on each metric. For $Ranking Loss$, CLCD-FS is close to a regular hexagon. Nonetheless, CLCD-FS more comes into contact with the regular hexagon than other algorithms. (ii) Compared with MB-MCF, the performance and stability of CLCD-FS are significantly improved due to the theoretical guarantee proposed in this paper.

(4) \emph{Experiment Time:} We recorded the CPU time for each algorithm on each data set in the above experiments. Table \ref{dataset_time} provides the average running time under the same number of selected features with CLCD-FS (except MB-MCF since it could determine the selected feature size).

\begin{table}[t]
\caption{Experiment time ($lg(Time)$) of each algorithms.}
\centering
\resizebox {3.5in }{!}{
\setlength{\tabcolsep}{0.1in}
\centering
    \begin{tabular}{cccccccccc}
    \hline
    Algorithm & MIFS & MCLS & CSFS & SFUS & CMFS & MB-MCF & CLCD-FS\\
    \hline
    Birds & 1.924 & 1.646 & \textbf{1.409} & 1.546 & 1.763 & 1.795 & 1.857  \\
    \hline
    CAL500 & 4.793 & 4.788 & 4.802 & \textbf{4.782} & 4.792 & 4.841  & 4.880 \\
    \hline
    Emotions & 0.983 & 1.002 & 0.999 & 0.986 & 1.048 & \textbf{0.957} & 1.015  \\
    \hline
    EUR-Lex & 4.674 & 4.655 & 4.715 & \textbf{4.649} & 4.666 & 4.760 & 4.766  \\
    \hline
    Mediamill & 4.318 & 4.334 & \textbf{4.296} & 4.342 & 4.329 & 4.375 & 4.442  \\
    \hline
    NUS-WIDE & 4.166 & 4.176 & \textbf{4.006} & 4.181 & 4.184 & 4.233 & 4.274 \\
    \hline
    \end{tabular}}
  \label{dataset_time}%
\end{table}

We conclude from the Table \ref{dataset_time} that, the CPU time of CLCD-FS is similar to MB-MCF but slightly higher than the traditional multi-label feature selection methods. Note that, causality-based feature selection methods usually have higher time complexity than traditional methods since they possess interpretability and theoretical guarantee \cite{yu2019tpami}. Hence, the loss of time efficiency is unsurprising. However, these traditional methods need to execute many times to determine the optimal number of selected features, and the process using classifiers to obtain the predictability of a feature subset is far more time-consuming than the feature selection process, whereas CLCD-FS could predetermine the number of selected feature. Therefore, the cost of time is reasonable coupled with many benefits of CLCD-FS.

\subsection{An Example of the Interpretability of CLCD-FS}
\label{ex3}

\begin{figure}[t]
  \centering
  \includegraphics[height=1.50in, width=3.00in]{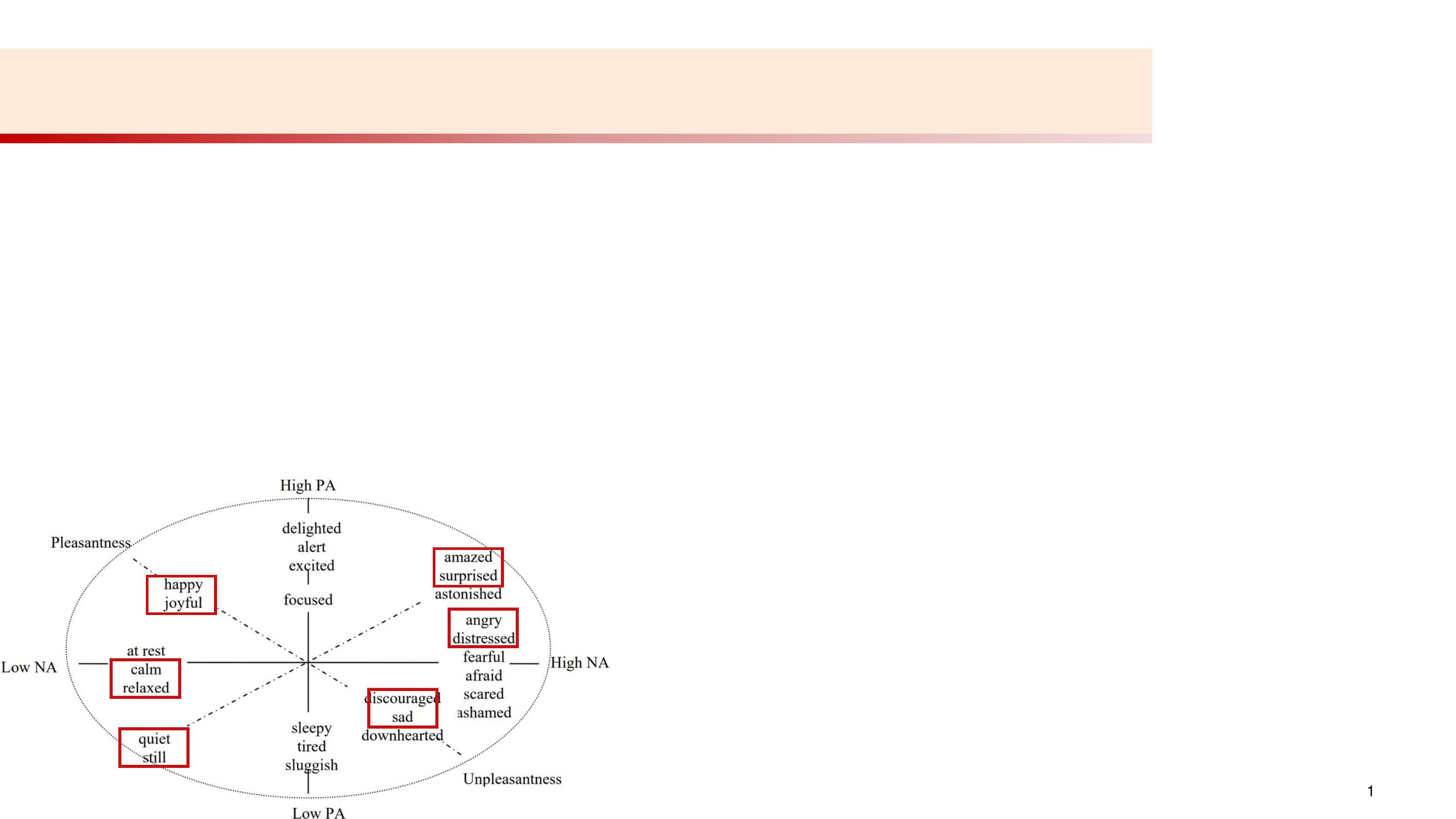}
  \caption{The graphical representation of Tellegen-Watson-Clark model \cite{tellegen1999dimensional}, where amazed-surprised, happy-pleased, relaxing-calm, quiet-still, sad-lonely, and angry-aggressive are labels in Emotions data set.}\label{Emotions1}
\end{figure}

\begin{figure}[t]
  \centering
  \includegraphics[height=1.20in, width=3.00in]{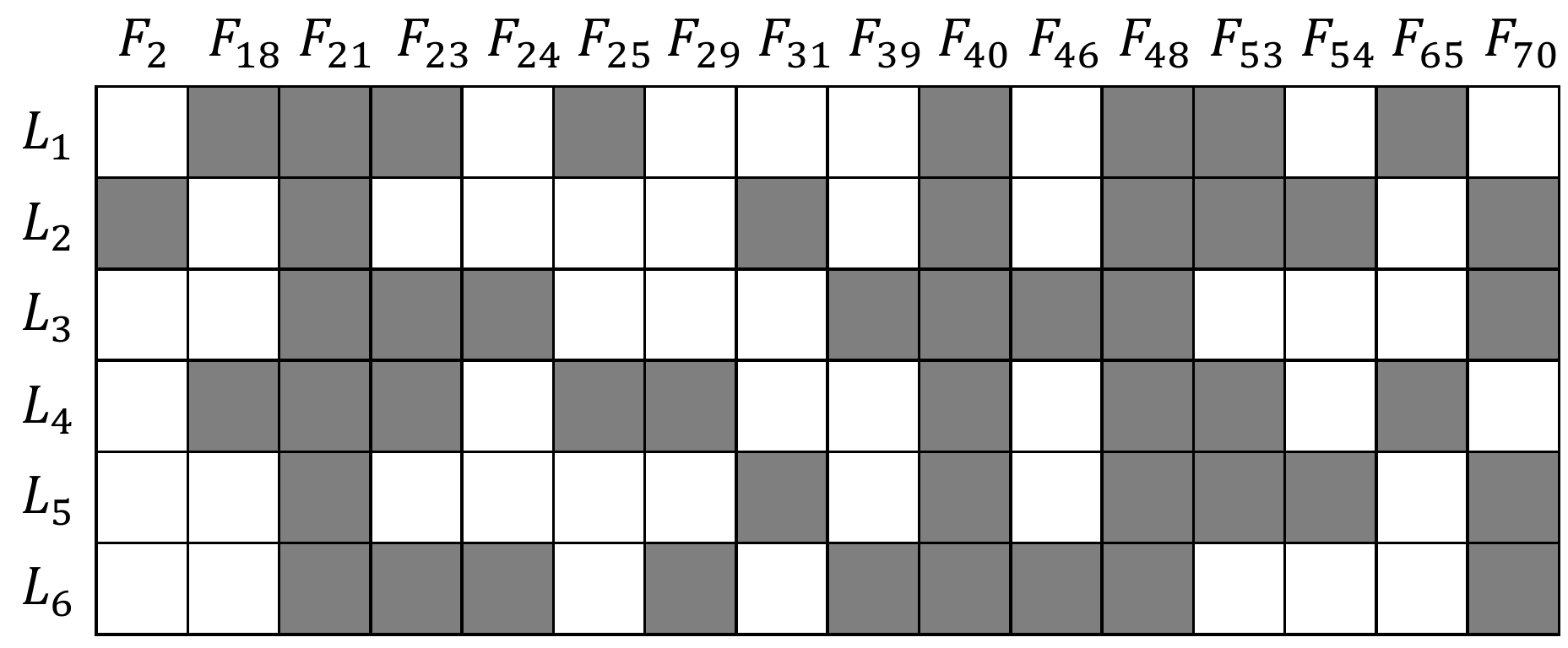}
  \caption{The Identified relationship between each selected feature and each label in the Emotions data set. In the grid, each column corresponds to a feature selected by CLCD-FS, and each row corresponds to a label, where $L_1$: amazed-surprised, $L_2$: happy-pleased, $L_3$: relaxing-calm, $L_4$: quiet-still, $L_5$: sad-lonely, $L_6$: angry-aggressive. The shaded cell indicates that the corresponding feature has an effect on the corresponding label.}\label{Emotions2}
\end{figure}

Inherited from CLCD, CLCD-FS naturally possesses the capacity of distinguishing the common features and label-specific features, and thus it can explicitly interpret which labels a feature causally influences. In this subsection, we further choose Emotions data set as an example to demonstrate the interpretability of CLCD-FS, because: (i) Emotions data set is derived from psychology and some known conclusion in the psychological study \cite{tellegen1999dimensional} can be used to validate our experimental results; (ii) Emotions data set contains 6 labels and 72 features, convenient to demonstrate in a figure.

These 72 features are extracted from musical context. And the 6 labels are derived from the Tellegen-Watson-Clark model of mood \cite{tellegen1999dimensional} as shown in Fig. \ref{Emotions1}, including amazed-surprised ($L_1$), happy-pleased ($L_2$), relaxing-calm ($L_3$), quiet-still ($L_4$), sad-lonely ($L_5$), and angry-aggressive ($L_6$). From Fig. \ref{Emotions1}, we can discover that label pairs $(L_1,L_4)$, $(L_2,L_5)$ and $(L_3,L_6)$ are opposite emotions, which means that the labels in each pair should be influenced by similar features.

The identified relationship between each selected feature and each label are provided in Fig. \ref{Emotions2}, where a dyed cell indicates that the corresponding feature has an effect on the corresponding label. We can observe from Fig. \ref{Emotions2} that common features $F_{21}$, $F_{40}$, and $F_{48}$ carry the information about all labels, and $F_2$ is a label-specific feature of label $L_2$. From the distribution of shaded cells, we can conclude that the labels in label pairs $(L_1,L_4)$, $(L_2,L_5)$ and $(L_3,L_6)$ share similar common features, which is consistent with the Tellegen-Watson-Clark model in previous study \cite{tellegen1999dimensional}.

Note that, although CLCD-FS selects the same features as MB-MCF, the mined causal relationships between labels and features are not exactly the same, which could be observed in the slight difference between Fig. \ref{Emotions2} in this paper and Figure 4 in \cite{wu2020multi}. Therefore, there exist causal relationships unidentified when using MB-MCF, whereas the causal relationships between the feature and other labels are discovered. Under these circumstances, the two algorithms achieve similar performance.

\section{Conclusion and Future Work}
\label{futurework}

The identification of common causal variables and label-specific causal variables is an interesting topic for researchers due to their imperative role in the causal mechanism. In this paper, we investigate the theoretical property of common causal variables of multiple labels and find that the common causal variables are determined by equivalent information following different mechanisms with or without existence of label causality. Based on extensive analyses, the discovery and distinguishing algorithm CLCD is proposed to identify these two types of variables without mining all of the multiple MBs. Furthermore, we apply CLCD to multi-label feature selection problem to improve the accuracy and interpretability. Experiments on synthetic and real-world data demonstrate the efficacy of the these proposed methods. To our knowledge, it is the first study focusing on the causal variable discovery in multi-label data.

The proposed concept of common causal variables, is frequently used and considered, however it has not been formally discussed before in literatures. We believe that this study can provide several advantages in causal and non-causal problems. Two examples of causal and non-causal learning problems are presented below to prompt the possible future work based on this research.
\begin{itemize}
\item \textbf{Common causes and effects mining.} Some of the real-world applications need to mine the common causes and common effects of multiple targets from the data so that the underlying knowledge and information can emerge. For example, the pandemic viral pneumonia COVID-19 has similar pathogenesis \cite{li2020molecular} (causes) and clinical features \cite{huang2020clinical} (effects) with earlier SARS and MERS in 21st century, which could uncover the underlying relationships of these pneumonia and facilitate the sharing of the experience in therapies. Since both of the common causes and effects are included by the common causal variable set, CLCD in this paper can be the preliminary process of common cause and effect discovery, and the subsequent step is to employ a structure learning method \cite{gao2017local} to orient the causal relationships between each common causal variables and targets.
\item \textbf{Causal variable-based learning task and inference task.} Previous researches on multi-label learning \cite{lift,xu2016multi} have pointed that label-specific features can facilitate the prediction of its corresponding label. However, these learning methods exploit extra steps to identify these features. With the CLCD-FS, the predictor or classifier modeled on the causal variables (causal features) can distinguish the common and label-specific causal features before training. Therefore, any causal variable-based tasks might benefit from this research. Similarly, treating these two types of causal variables differently could improve the performance of multi-target causal inference.
\end{itemize}

\ifCLASSOPTIONcaptionsoff
  \newpage
\fi



%
\bibliographystyle{IEEEtran}
\bibliography{bare_jrnl}

%







\end{document}